\definecolor{Highlight}{HTML}{39b54a}  
\Crefname{problem}{Problem}{Problems}
\Crefname{definition}{Def.}{Defs.}
\Crefname{theorem}{Thm.}{Thms.}
\newcommand{\squishlist}{
\begin{list}{{{\small{$\bullet$}}}}
{\setlength{\itemsep}{3pt}      \setlength{\parsep}{1pt}
\setlength{\topsep}{1pt}       \setlength{\partopsep}{0pt}
\setlength{\leftmargin}{1em} \setlength{\labelwidth}{1em}
\setlength{\labelsep}{0.5em} } }
\newcommand{\squishend}{  \end{list}  }
\definecolor{amber}{rgb}{1.0, 0.49, 0.0}
\newcommand{\shiftingone}{{sensitive shifting}\xspace}
\newcommand{\shiftingonecapital}{{Sensitive Shifting}\xspace}
\newcommand{\shiftingtwo}{{general shifting}\xspace}
\newcommand{\shiftingtwocapital}{{General Shifting}\xspace}
\newcommand{\cP}{\mathcal{P}}
\newcommand{\cX}{\mathcal{X}}
\newcommand{\cY}{\mathcal{Y}}
\newcommand{\dist}{\mathrm{dist}}
\newcommand\DoToC{%
  \startcontents
  \printcontents{}{1}{\textbf{Contents}\vskip3pt\hrule\vskip5pt}
  \vskip3pt\hrule\vskip5pt
}
\title{Certifying Some Distributional Fairness\\ with Subpopulation Decomposition}
\author{%
  Mintong Kang
  \thanks{The first two authors contributed equally.} \\
  UIUC\\
  \href{mailto:mintong2@illinois.edu}{\small\texttt{mintong2@illinois.edu}}\\
  \And
  Linyi Li~\footnotemark[1]  \\
  UIUC\\
  \href{mailto:linyi2@illinois.edu}{\small\texttt{linyi2@illinois.edu}}
  \And 
  Maurice Weber \\
  ETH Zurich \\
  \href{mailto:webermau@inf.ethz.ch}{\small\texttt{webermau@inf.ethz.ch}}
  \And
  Yang Liu \\
  UC Santa Cruz \\
  \href{mailto:yangliu@ucsc.edu}{\small\texttt{yangliu@ucsc.edu}} \\
  \And
  Ce Zhang \\
  ETH Zurich \\
  \href{mailto:ce.zhang@inf.ethz.ch}{\small\texttt{ce.zhang@inf.ethz.ch}} \\
  \And
  Bo Li \\
  UIUC\\
  \href{mailto:lbo@illinois.edu}{\small\texttt{lbo@illinois.edu}}
}
\date{}
\begin{document}

\maketitle

\part*{}
\vspace{-4em}

\begin{abstract}
    Extensive efforts have been made to understand and improve the fairness of machine learning models based on different fairness measurement metrics, especially in high-stakes domains such as medical insurance, education, and hiring decisions. However, there is a lack of \textit{certified fairness} on the end-to-end performance of an ML model. In this paper, we first formulate the certified fairness of an ML model trained on a given data distribution as an optimization problem based on the model performance loss bound on a fairness constrained distribution, which is within bounded distributional distance with the training distribution.
    We then propose a general fairness certification framework and instantiate it for both sensitive shifting and general shifting scenarios. In particular, we propose to solve the optimization problem by decomposing the original data distribution into analytical subpopulations and proving the convexity of the sub-problems to solve them.
     We evaluate our certified fairness on six real-world datasets and show that our certification is tight in the sensitive shifting scenario and provides non-trivial certification under general shifting. Our framework is flexible to integrate additional non-skewness constraints and
     we show that it provides even tighter certification under different real-world scenarios.
    We also compare our certified fairness bound with adapted existing distributional robustness bounds  on Gaussian data and demonstrate 
    that our method is 
    significantly tighter.
\end{abstract}

\section{Introduction}
\vspace{-2mm}


As machine learning (ML) has become ubiquitous~\cite{lakkaraju2017selective,hardt2016equality,barocas2016big,chouldechova2017fair,berk2021fairness,datta2015automated}, fairness of ML has attracted a lot of attention from different perspectives. For instance, some automated hiring systems are biased towards males due to gender imbalanced training data~\cite{asuncion2007uci}. Different approaches have been proposed to improve ML fairness, such as 
regularized training~\cite{edwards2015censoring,kehrenberg2020null,liao2019learning,madras2018learning}, disentanglement~\cite{creager2019flexibly,locatello2019fairness,sarhan2020fairness}, duality~\cite{song2019learning}, low-rank matrix factorization~\cite{oneto2020learning}, and distribution alignment~\cite{balunovic2021fair,louizos2015variational,zhao2019conditional}.

In addition to existing approaches
that \textit{evaluate} fairness, it is important and challenging to provide \textit{certification} for ML fairness.  
Recent studies have explored the certified fair \textit{representation} of ML~\cite{ruoss2020learning,balunovic2021fair,peychev2021latent}.
However, there lacks certified fairness on the \textit{predictions} of an end-to-end ML model trained on an arbitrary data {distribution}. 
 In addition, current fairness literature mainly focuses on training an ML model on a potentially (im)balanced distribution and evaluate its performance in a target domain measured by existing statistical fairness definitions~\cite{gliner1994reviewing,jin2021transferability}. Since in practice these selected target domains can encode certain forms of unfairness of their own (e.g., sampling bias), the evaluation would be more informative if we can evaluate and certify fairness of an ML model on an \textit{objective} distribution.
Taking these factors into account, in this work, we aim to provide the first definition of \textit{certified fairness} given an ML model and a training distribution by bounding its end-to-end performance on an objective, \textit{fairness constrained distribution}. In particular, we define \textit{certified fairness} as the worst-case upper bound of the ML prediction loss on a fairness constrained test distribution $\gQ$, which is within a bounded distance to the training distribution $\gP$.
For example, for an ML model of crime rate prediction, we can define the model performance as the expected loss within a specific age group.
Suppose the model is deployed in a fair environment that does not deviate too much from the training, our fairness certificate can guarantee that the loss of crime rate prediction for a particular age group is upper bounded, which is an indicator of model's fairness.

We mainly focus on the base rate condition as the fairness constraint for $\gQ$.
We prove that our certified fairness based on a base rate constrained distribution will imply other fairness metrics, such as demographic parity (DP) and equalized odds (EO). 
Moreover, our framework is flexible to integrate other fairness constraints into $\gQ$.
We consider two 
scenarios: (1) \textit{\shiftingone} where
only the joint distribution of sensitive attribute and label
can be changed when optimizing $\gQ$;
and (2) \textit{\shiftingtwo}
where everything including the conditioned distribution of non-sensitive attributes can be changed.
We then propose an effective \textit{fairness certification framework} to  compute the certificate.

In our fairness certification framework, we first formulate the problem as constrained optimization, where the fairness constrained distribution is encoded by base rate constraints. 
Our key technique is to decompose both training and the fairness constrained test distributions to several subpopulations based on sensitive attributes and target labels, which can be used to encode the base rate constraints.
With such a decomposition, in \shiftingone, we can decompose the distance constraint to subpopulation ratio constraints and prove the transformed low-dimensional optimization problem is convex and thus efficiently solvable.
In \shiftingtwo case, we propose to solve it based on divide and conquer: we first partition the feasible space into different subpopulations,  then optimize the density (ratio) of each subpopulation, apply relaxation on each subpopulation as a sub-problem, and finally prove the convexity of the sub-problems with respect to other low-dimensional variables.
Our framework is applicable for any black-box ML models and any distributional shifts bounded by the Hellinger distance, which is a type of $f$-divergence studied in the literature  \cite{weber2022certifying,duchi2019variance,ben2013robust,lam2016robust,duchi2021statistics}.

To demonstrate the effectiveness and tightness of our framework, we evaluate our fairness bounds on \textit{six} real-world fairness related datasets~\cite{asuncion2007uci,angwin2016machine,healthdataset,wightman1998lsac}. We show that our certificate is tight under different scenarios. In addition, we verify that our framework is flexible to integrate additional constraints on  $\gQ$ and evaluate the certified fairness with additional non-skewness constraints, with which our fairness certificate is tighter. Finally, as the first work on certifying  fairness of an end-to-end ML model, we adapt existing distributional robustness bound~\cite{sinha2017certifying} for comparison to provide more intuition. Note that directly integrating the fairness constraint to the existing distributional robustness bound is challenging, which is one of the main contributions for our framework. We show that with the fairness constraints and our effective solution, our bound is strictly tighter.

\underline{\textbf{Technical Contributions}}. In this work, we take the first attempt towards formulating and computing the \textit{certified fairness}  on an end-to-end ML model, which is trained on a given distribution. We make contributions on both theoretical and empirical fronts.
\vspace{-3mm}
\begin{enumerate}[leftmargin=*]
    \item We formulate the \textit{certified fairness} of an end-to-end ML model trained on a given distribution $\gP$ as the worst-case upper bound of its prediction loss on a fairness constrained distribution $\gQ$, which is within bounded distributional distance with $\gP$.
    \item We propose an effective fairness certification framework that simulates the problem as constrained optimization and solve it by decomposing the training and fairness constrained test distributions into subpopulations and proving the convexity of each sub-problem to solve it.
    \item We evaluate our certified fairness on six real-world datasets to show its tightness and scalability. We also show that with additional distribution constraints on $\gQ$, our certification would be tighter.
    \item We show that our bound is strictly tighter than adapted  distributional robustness bound on Gaussian dataset due to the added  fairness constraints and our effective optimization approach.
\end{enumerate}

\vspace{-4mm}
\paragraph{Related Work}
Fairness in ML can be generally categorized into individual fairness and group fairness.
Individual fairness guarantees that similar inputs should lead to similar outputs for a model and it is analyzed with optimization approaches~\cite{winter2019learning,McNamara2019costs} and different types of relaxations~\cite{john2020verifying}.
Group fairness indicates to measure the \textit{independence} between the sensitive features and model prediction, the \textit{separation} which means that the sensitive features are statistically independent of model prediction given  the target label, and the \textit{sufficiency} which means that the sensitive features are statistically independent of the target label given the model prediction~\cite{liu2019implicit}. Different approaches are proposed to analyze group fairness via static analysis~\cite{urban2020perfectly}, interactive computation~\cite{segal2021fairness}, and probabilistic approaches~\cite{albarghouthi2017fairsquare,choi2020group,bastani2019probabilistic}.
In addition, there is a line of work trying to certify the \textit{fair representation}~\cite{ruoss2020learning,balunovic2021fair,peychev2021latent}. In \cite{chen2022fairness}, the authors have provided bounds for how group fairness transfers subject to bounded distribution shift.  
Our certified fairness differs from existing work from three perspectives: 1) we provide fairness certification considering the end-to-end model performance instead of the representation level, 2) we define and certify fairness based on a fairness constrained distribution which implies other fairness notions, and 3) our certified fairness can be computed for \textit{any} black-box models trained on an arbitrary given data distribution.

\vspace{-0.5em}
\section{Certified Fairness Based on Fairness Constrained Distribution}
    \label{sec:background-and-formulatiuon}
    \vspace{-2mm}

    In this section, we first introduce  preliminaries, and then propose the definition of \textit{certified fairness} based on a bounded fairness constrained distribution, which to the best of our knowledge is the first formal fairness certification on end-to-end model prediction.
    We also show that our proposed certified fairness relates to established fairness definitions in the literature.
    
    \textbf{Notations.}
        \label{subsec:prelim}
        We consider the general classification setting: we denote by $\gX$ and $\gY = [C]$ the feature space and labels, $[C] := \{1,2,\cdots,C\}$.
        $h_\theta\colon\cX \to \Delta^{|\cY|}$ represents a mapping function parameterized with $\theta\in\Theta$, and $\ell\colon\Delta^{|\cY|}\times\cY\to\R_+$ is a non-negative loss function such as cross-entropy loss. 
        Within feature space $\gX$, we identify a \emph{sensitive} or \emph{protected attribute} $\gX_s$ that takes a finite number of values: $\gX_s := [S]$, i.e., for any $X\in \gX$, $X_s \in [S]$.
        
        \begin{definition}[Base Rate]
            \label{def:base-rate}
            Given a distribution $\gP$ supported over $\gX \times \gY$, the base rate for sensitive attribute value $s \in [S]$ with respect to label $y\in [C]$ is
                $b_{s,y}^{\gP} = \Pr_{(X,Y)\sim\gP} [Y=y \,|\, X_s=s]$.
        \end{definition}
        Given the definition of base rate, we define a \textit{fair base rate distribution}~(in short as \textit{fair distribution}).
        \begin{definition}[Fair Base Rate Distribution]
            A distribution $\gP$ supported over $\gX\times\gY$ is a fair base rate distribution if and only if for any label $y\in [C]$,  the base rate $b_{s,y}^{\gP}$ is equal across all $s\in [S]$, i.e., $\forall i\in [S], \forall j\in [S], b^{\gP}_{i,y} = b^{\gP}_{j,y}$.
            \label{def:fair-dist}
        \end{definition}
        
        \begin{remark}
            In the literature, the concepts of fairness are usually directly defined at the model prediction level, where the criterion is whether the model prediction is fair against individual attribute changes~\cite{ruoss2020learning,peychev2021latent,yeom2020individual} or fair at population level~\cite{zhao2019inherent}.
            In this work, to certify the fairness of model prediction, we define a fairness constrained distribution on which we will certify the model prediction (e.g., bound the prediction error), rather than relying on the empirical fairness evaluation.
            In particular, we first define the fairness constrained distribution through the lens of base rate parity, i.e., the probability of being any class should be independent of sensitive attribute values, and then define the certified fairness of a given model based on its performance on the fairness constrained distribution as we will show next.
            
            The choice of focusing on fair base rate may look restrictive but its definition aligns very well with the celebrated fairness definition Demographic Parity \cite{zemel2013learning}, which promotes that $\Pr[h_{\theta}(X)=1|X_s = i] = \Pr[h_{\theta}(X)=1|X_s = j].$ In this case, the prediction performance of $h_{\theta}$ on $\gQ$ with fair base rate will relate directly to $\Pr[h_{\theta}(X)=1|X_s = i]$.    
            Secondly, under certain popular data generation process, the base rate sufficiently encodes the differences in distributions and a fair base rate will imply a homogeneous (therefore equal or ``fair") distribution over $X,Y$: consider when $\Pr(X|Y=y,X_s=i)$ is the same across different group $X_s$. Then $\Pr(X,Y|X_s=i)$ is simply a linear combination of basis distributions $\Pr(X|Y=y,X_s=i)$, and the difference between different groups' joint distribution of $X,Y$ is fully characterized by the difference in base rate $\Pr(Y=y|X_s)$. This assumption will greatly enable trackable analysis and is not an uncommon modeling choice in the recent discussion of fairness when distribution shifts \cite{zhang2020fair,raab2021unintended}. 

        \end{remark}
    
        \vspace{-0.4em}
    \subsection{Certified Fairness}
        \vspace{-0.4em}
        \label{subsec:formulation}
        Now we are ready to define the fairness certification 
        based on the optimized fairness constrained distribution.
        We define the certification under two data generation scenarios: \textit{\shiftingtwo} and \textit{\shiftingone}. 
         In particular, consider the data generative model $\Pr(X_o, X_s, Y) = \Pr(Y)\Pr(X_s|Y) \Pr(X_o|Y, X_s)$, where $X_o$ and $X_s$ represent the non-sensitive and sensitive features, respectively. If all three random variables on the RHS are allowed to change, we call it \textit{\shiftingtwo}; if both $\Pr(Y)$ and $\Pr(X_s|Y)$ are allowed to change to ensure the fair base rate (Def.~\ref{def:fair-dist}) while $\Pr(X_o|Y, X_s)$ is the same across different groups, we call it \textit{\shiftingone}.
        In \Cref{sec:method} we will introduce our certification framework for both scenarios.

        \begin{problem}[Certified Fairness with \shiftingtwocapital]
            Given a training distribution $\gP$ supported on $\gX\times\gY$, a model $h_\theta(\cdot)$ trained on $\gP$, and distribution distance bound $\rho > 0$, 
            we call $\bar\ell \in \R$ a \emph{fairness certificate} with \shiftingtwo, if $\bar\ell$ upper bounds
            \begin{equation*}
                \small
                \max_{\gQ} \, \E_{(X,Y)\sim\gQ} [\ell(h_\theta(X), Y)]
                \quad \mathrm{s.t.} \quad \dist(\gP,\gQ) \le \rho, \quad \gQ \text{ is a fair distribution},
            \end{equation*}
            where $\dist(\cdot,\cdot)$ is a predetermined distribution distance metric.
            \label{prob:certified-fairness-shifting-two}
        \end{problem}
        In the above definition, we define the fairness certificate as the upper bound of the model's loss among all fair base rate distributions $\gQ$ within a bounded distance from  $\gP$.
        Besides the bounded distance constraint $\dist(\gP,\gQ) \le \rho$, there is no other constraint between $\gP$ and $\gQ$ so this satisfies ``\textit{\shiftingtwo}''.
        This bounded distance constraint, parameterized by a tunable parameter $\rho$, ensures that the test distribution should not be too far away from the training.
        In practice, the model  $h_\theta$ may represent a DNN whose complex analytical forms would pose challenges for solving \Cref{prob:certified-fairness-shifting-two}.
        As a result, as we will show in \Cref{eq:generic-prob} we can query some statistics of  $h_\theta$ trained on $\gP$ as constraints to characterize $h_\theta$, and thus compute the upper bound certificate.
        
        The feasible region of optimization problem \ref{prob:certified-fairness-shifting-two} might be empty if the distance bound $\rho$ is too small, and thus we cannot provide fairness certification in this scenario, indicating that there is no nearby fair distribution and thus the fairness of the model trained on the highly ``unfaired" distribution is generally low. 
        In other words, if the training distribution $\gP$ is unfair (typical case) and there is no feasible fairness constrained distribution $\gQ$ within a small distance to $\gP$, fairness cannot be certified. 
        
        This definition follows the intuition of typical real-world scenarios:
        The real-world training dataset is usually biased due to the limitation in data curation and collection processes, which causes the model to be unfair.
        Thus, when the trained models are evaluated on the real-world fairness constrained test distribution or ideal fair distribution, we hope that the model does not encode the training bias which would lead to low test performance.
        That is to say, the model performance on fairness constrained distribution is indeed a witness of the model's intrinsic fairness.

        We can further constrain that  the subpopulation of $\gP$ and $\gQ$ parameterized by $X_s$ and $Y$ does not change, which results in the following ``\textit{\shiftingone}'' fairness certification.
        
        \begin{problem}[Certified Fairness with \shiftingonecapital]
            Under the same setting as \Cref{prob:certified-fairness-shifting-two}, we call $\bar\ell$ a fairness certificate against \shiftingone, if $\bar\ell$ upper bounds
            \begin{equation*}
                \small
                \begin{aligned}
                & \max_{\gQ} \, \E_{(X,Y)\sim\gQ} [\ell(h_\theta(X), Y)] \\
                \quad \mathrm{s.t.} \quad & \dist(\gP,\gQ) \le \rho, \quad \gP_{s,y} = \gQ_{s,y} \, \forall s\in[S],y\in [C], \quad  \gQ \text{ is a fair distribution}, 
                \end{aligned}
            \end{equation*}
            where $\gP_{s,y}$ and $\gQ_{s,y}$ are the subpopulations of $\gP$ and $\gQ$ on the support $\{(X,Y): X\in\gX,X_s=s,Y=y\}$ respectively, and $\dist(\cdot,\cdot)$ is a predetermined distribution distance metric.
            \label{prob:certified-fairness-shifting-one}
        \end{problem}
        
        The definition adds an additional constraint between $\gP$ and $\gQ$ that each subpopulation, partitioned by the sensitive attribute $X_s$ and label $Y$, does not change.
        This constraint corresponds to the scenario where the distribution shifting between training and test distributions only happens on the proportions of different sensitive attributes and labels, and within each subpopulation the shifting is negligible.
        
        In addition, to model the real-world test distribution, we may further request that the test distribution $\gQ$ is not too skewed regarding the sensitive attribute $X_s$ by adding constraint (\ref{eqn:skewness}).
        We will show that this constraint can also be integrated into our fairness certification framework flexibly in \Cref{sec:exp_more_cons}.
        \begin{equation}
            \small
            \forall i \in [S], \forall j \in [S], \left| \Pr_{(X,Y)\sim\gQ} [X_s = i] -  \Pr_{(X,Y)\sim\gQ} [X_s = j] \right| \le \Delta_S.
            \label{eqn:skewness}
        \end{equation}
        \vspace{-1.2em}

\textbf{Connections to Other Fairness Measurements.}      
Though not explicitly stated, our goal of certifying the performance on a fair distribution $\gQ$ relates to certifying established fairness definitions in the literature. Consider the following example: Suppose Problem \ref{prob:certified-fairness-shifting-one} is feasible and returns a classifier $h_{\theta}$ that achieves certified fairness per group and per label class $\bar{l}:=\Pr_{(X,Y)\sim\gQ}[h_{\theta}(X) \neq Y|Y = y, X_s = i] \leq \epsilon$  on $\gQ$. 
We will then have the following proposition:
\begin{proposition}
   $h_{\theta}$ achieves $\epsilon$-Demographic Parity (DP) \cite{zemel2013learning} and $\epsilon$-Equalized Odds (EO) \cite{hardt2016equality}:
   \squishlist
       \item $\epsilon$-DP:
       \begin{small}$
\left|\Pr_{\gQ}[h_{\theta}(X) = 1|X_s = i]-
\Pr_{\gQ}[h_{\theta}(X) = 1|X_s = j]\right| \leq \epsilon, ~\forall i,j.
$\end{small}
\item $\epsilon$-EO: \begin{small}$ 
\left|\Pr_{\gQ}[h_{\theta}(X) = 1|Y = y, X_s = i]-
\Pr_{\gQ}[h_{\theta}(X) = 1|Y = y, X_s = j]\right| \leq \epsilon, \forall y, i,j.
$\end{small}
   \squishend
\label{prop:fairness}
\end{proposition}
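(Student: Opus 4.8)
The plan is to derive both guarantees from a single observation: the per-group, per-label error bound $\Pr_{\gQ}[h_\theta(X)\neq Y \mid Y=y, X_s=i]\le\epsilon$ pins the per-group, per-label \emph{positive prediction rate} to an interval of width exactly $\epsilon$, and the fair base rate property of $\gQ$ then lets these intervals combine without enlarging the gap. First I would record bounds on $r_{i,y}:=\Pr_{\gQ}[h_\theta(X)=1\mid Y=y, X_s=i]$. When $y=1$, the event $\{h_\theta(X)=1\}$ coincides with $\{h_\theta(X)=Y\}$, so $r_{i,1}=1-\Pr_{\gQ}[h_\theta(X)\neq Y\mid Y=1,X_s=i]\in[1-\epsilon,1]$. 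When $y\neq 1$, the event $\{h_\theta(X)=1\}$ is contained in $\{h_\theta(X)\neq Y\}$, so $r_{i,y}\le\epsilon$, i.e.\ $r_{i,y}\in[0,\epsilon]$. In both cases $r_{i,y}$ ranges over a length-$\epsilon$ interval that does not depend on the group index $i$.

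The $\epsilon$-EO claim is then immediate: fixing $y$, the values $r_{i,y}$ and $r_{j,y}$ lie in the same length-$\epsilon$ interval, so $|r_{i,y}-r_{j,y}|\le\epsilon$ for all $i,j$. Note this step uses only the error bound and not the fairness of $\gQ$.

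For $\epsilon$-DP I would expand the marginal positive rate by the law of total probability, conditioning on $Y$, and then invoke the fair base rate assumption (Def.~\ref{def:fair-dist}), which is the crux of the argument: since $\gQ$ is a fair distribution, $b_{i,y}^{\gQ}=b_{j,y}^{\gQ}=:b_y$ for every $y$, so
\begin{equation*}
\Pr_{\gQ}[h_\theta(X)=1\mid X_s=i]=\sum_{y\in[C]} r_{i,y}\, b_y,
\end{equation*}
and subtracting the two group expansions cancels the group-independent weights,
\begin{equation*}
\Pr_{\gQ}[h_\theta(X)=1\mid X_s=i]-\Pr_{\gQ}[h_\theta(X)=1\mid X_s=j]=\sum_{y\in[C]}(r_{i,y}-r_{j,y})\,b_y.
\end{equation*}
Bounding each $|r_{i,y}-r_{j,y}|\le\epsilon$ by the previous step and using $b_y\ge 0$ with $\sum_y b_y=1$ gives $\lvert\cdot\rvert\le\epsilon\sum_y b_y=\epsilon$, the DP guarantee.

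The main obstacle is precisely the DP step: without the fair base rate property the two group expansions would carry different weights $b_{i,y}^{\gQ}\neq b_{j,y}^{\gQ}$, so the difference could not collapse to a convex combination of the $(r_{i,y}-r_{j,y})$ terms, and the bound would degrade by an amount controlled by the base rate disparity. The fair base rate assumption on $\gQ$ is therefore exactly what transfers the per-label error control to the marginal prediction gap, whereas the EO part is essentially a direct restatement of the error bound.
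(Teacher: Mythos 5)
Your proposal is correct and follows essentially the same route as the paper's proof: the identical case split ($y=1$ versus $y\neq 1$) to pin each $\Pr_{\gQ}[h_\theta(X)=1\mid Y=y,X_s=i]$ in a group-independent length-$\epsilon$ interval for EO, and the same law-of-total-probability expansion in which the fair base rate property makes the conditional label weights group-independent, so the DP gap becomes a convex combination of per-label gaps bounded by $\epsilon$. No gaps; nothing further to add.
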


\begin{remark}
The detailed proof is omitted to \cref{proof-fairness-notion}. 
(1)~When $\epsilon = 0$, \Cref{prop:fairness} can guarantee perfect DP and EO simultaneously. We achieve so because we evaluate with a fair distribution $\gQ$, where “fair distribution” stands for “equalized base rate” and according to \cite[Theorem 1.1, page 5]{kleinberg2016inherent} both DP and EO are achievable for this fair distribution. 
This observation in fact motivated us to identify the fair distribution $\gQ$ for the evaluation since it is this fair distribution that allows the fairness measures to hold at the same time. 
Therefore, another way to interpret our framework is: given a model, we provide a framework that certifies worst-case         ``unfairness'' bound in the context where perfect fairness is achievable. Such a worse-case bound serves as the gap to a perfectly fair model and could be a good indicator of the model’s fairness level.
(2)~In practice, $\epsilon$ is not necessarily zero. Therefore, \Cref{prop:fairness} only provides an upper lower bound of DP and EO, namely $\epsilon$-DP and $\epsilon$-EO, instead of absolute DP and EO. The approximate fairness guarantee renders our results more general. 
Meanwhile, there is a higher flexiblity in simultaneously satisfying approximate fairness metrics (for example when DP $= 0$, but EO = $\epsilon$, which is plausible for a proper range of epsilon, regardless of the distribution $\gQ$ being fair or not). But again, similar to (1), $\epsilon$-DP and $\epsilon$-EO can be achieved at the same time easily since the test distribution satisfies base rate parity. 

The bounds in \Cref{prop:fairness} are tight. 
Consider the distribution $\gQ$ with binary classes and binary sensitive attributes (i.e., $Y,X_s \in \{0,1\}$). When the distribution $\gQ$ and classifier $h_\theta$ satisfy the conditions that $\mathrm{Pr}_{\gQ} [h_\theta(X) \neq Y | Y=0, X_s=0] = \epsilon, \mathrm{Pr}_{\gQ} [h_\theta(X) \neq Y | Y=0, X_s=1] = 0$ and $\mathrm{Pr}_{\gQ}[Y=0]=1, \mathrm{Pr}_{\gQ}[Y=1]=0$, the bounds in Proposition 1 are tight. From $\mathrm{Pr}_{\gQ}[Y=0]=1, \mathrm{Pr}_{\gQ}[Y=1]=0$, we can observe that $\epsilon$-DP is equivalent to $\epsilon$-EO. From $\mathrm{Pr}_{\gQ} [h_\theta(X) \neq Y | Y=0, X_s=0] = \epsilon, \mathrm{Pr}_{\gQ} [h_\theta(X) \neq Y | Y=0, X_s=1] = 0$ and $\mathrm{Pr}_{\gQ} [h_\theta(X) \neq Y | Y=0, X_s=i] = \mathrm{Pr}_{\gQ} [h_\theta(X) = 1 | Y=0, X_s=i]$ for $i \in \{0,1\}$, we know that $\epsilon$-EO holds with tightness since $\left|\Pr_{\gQ}[h_{\theta}(X) = 1|Y = 0, X_s = 0]-\Pr_{\gQ}[h_{\theta}(X) = 1|Y = 0, X_s = 1]\right| = \epsilon$. To this point, we show that both bounds in Proposition 1 are tight.
\end{remark}

  \vspace{-2mm}
\section{Fairness Certification Framework}
    \label{sec:method}
    \allowdisplaybreaks
    \vspace{-2mm}
    
    We will introduce our fairness certification framework which efficiently computes the fairness certificate defined in \Cref{subsec:formulation}.
    We first introduce our framework for \textit{\shiftingone}~(\Cref{prob:certified-fairness-shifting-one}) which is less complex and shows our core methodology, then  \textit{\shiftingtwo} case~(\Cref{prob:certified-fairness-shifting-two}).
    
    Our framework focuses on using the Hellinger distance to bound the distributional distance in \Cref{prob:certified-fairness-shifting-one,prob:certified-fairness-shifting-two}.
    The Hellinger distance $H(\gP,\gQ)$ is defined in \Cref{def:hellinger}~(in \Cref{adxsubsec:hellinger}).
    The Hellinger distance has some nice properties, e.g., $H(\gP,\gQ)\in [0,1]$, and $H(\gP, \gQ) = 0$ if and only if $\gP = \gQ$ and the maximum value of 1 is attained when $\gP$ and $\gQ$ have disjoint support.
    The Hellinger distance is a type of $f$-divergences which are widely studied in ML distributional robustness literature~\cite{weber2022certifying,duchi2019variance} and in the context of distributionally robust optimization~\cite{ben2013robust,lam2016robust,duchi2021statistics}.
    Also, using Hellinger distance enables our certification framework to generalize to \textit{total variation distance (or statistic distance)} $\delta(\gP,\gQ)$\footnote{$\delta(\gP,\gQ) = \sup_{A\in\gF} |\gP(A)-\gQ(A)|$ where $\gF$ is a $\sigma$-algebra of subsets of the sample space $\Omega$.} directly with the connection, $H^2(\gP,\gQ) \le \delta(\gP,\gQ) \le \sqrt{2}H(\gP,\gQ)$~(\cite{steerneman1983total}, Equation 1).
    We leave the extension of our framework to other distance metrics as future work.
    
   \vspace{-0.7em}
    \subsection{Core Idea: Subpopulation Decomposition}
        \label{subsec:core-idea}
  \vspace{-0.7em}
    
        The core idea in our framework is (finite) subpopulation decomposition.
        Consider a generic optimization problem for computing the loss upper bound on a constrained test distribution $\gQ$, given training distribution $\gP$ and trained model $h_\theta(\cdot)$,
        we first characterize model $h_\theta(\cdot)$ based on some statistics, e.g., mean and variance for loss of the model: $h_\theta(\cdot)$ satisfies $e_j(\gP, h_\theta) \le v_j$, $1\le j \le L$.
        Then we characterize the properties~(e.g., fair base rate) of the test distribution $\gQ$: $g_j(\gQ) \le u_j$, $1 \le j \le M$.
        As a result, we can upper bound the loss of $h_\theta(\cdot)$ on $\gQ$ as the following optimization:
        \begin{equation}
            \small
            \max_{\gQ,\theta} \E_{(X,Y)\sim\gQ} [\ell(h_\theta(X), Y)] 
            \quad \mathrm{s.t.} \quad
            H(\gP,\gQ) \le \rho, \quad 
            e_j(\gP, h_\theta) \le v_j \, \forall j \in [L], \quad 
            g_j(\gQ) \le u_j \, \forall j\in [M].
            \label{eq:generic-prob}
        \end{equation}
        Now we decompose the space $\gZ := \gX\times\gY$ to $N$ partitions: $\gZ := \biguplus \gZ_i$, where $\gZ$ is the support of both $\gP$ and $\gQ$.
        Then, we denote $\gP$ conditioned on $\gZ_i$ by $\gP_i$ and similarly $\gQ$ conditioned on $\gZ_i$ by $\gQ_i$.
        As a result, we can write $\gP = \sum_{i\in[N]} p_i\gP_i$ and $\gQ = \sum_{i\in[N]} q_i\gQ_i$.
        Since $\gP$ is known, $p_i$'s are known.
        In contrast, both $\gQ_i$ and $q_i$'s are optimizable.
        Our key observation is that
        \vspace{-0.5em}
        \begin{equation}
            \small \label{eq:hellinger-prop}
            H(\gP,\gQ) \le \rho \iff 1-\rho^2 - \sum_{i=1}^N \sqrt{p_iq_i} (1 - H(\gP_i,\gQ_i)^2) \le 0
        \end{equation}
        which leads to the following theorem.
        \begin{theorem}
            The following constrained optimization upper bounds \Cref{eq:generic-prob}:
            \vspace{-0.5em}
            \begin{subequations}
                \label{cons-opt-generic}
                \begin{small}
                \begin{align}
                    \max_{\gQ_i, q_i,\rho_i,\theta} \quad & \sum_{i=1}^N q_i \E_{(X,Y)\sim\gQ_i} [\ell(h_\theta(X), Y)] \\
                    \mathrm{s.t.} \quad  & 1 - \rho^2 - \sum_{i=1}^N \sqrt{p_iq_i} (1 - \rho_i^2) \le 0, \label{cons-opt-generic-con1} \\
                    & H(\gP_i,\gQ_i) \le \rho_i \quad \forall i\in [N], \quad 
                    \sum_{i=1}^N q_i = 1, \quad q_i \ge 0 \quad \forall i\in [N], \quad \rho_i \ge 0 \quad \forall i\in [N],  \label{cons-opt-generic-con2}\\
                    & e_j'(\{\gP_i\}_{i\in [N]}, \{p_i\}_{i\in [N]}, h_\theta) \le v'_j \, \forall j \in [L], \quad 
                    g_j'(\{\gQ_i\}_{i\in [N]}, \{q_i\}_{i\in [N]}) \le u_j' \, \forall j\in [M],
                \end{align}
                \end{small}
            \end{subequations}
            if $e_j(\gP, h_\theta) \le v_j$ implies $e_j'(\{\gP_i\}_{i\in[N]}, \{p_i\}_{i\in[N]}, h_\theta) \le v_j'$ for any $j\in [L]$, and
            $g_j(\gQ) \le u_j$ implies $g_j'(\{\gQ_i\}_{i\in [N]}, \{q_i\}_{i\in [N]}) \le u_j'$ for any $j \in [M]$.
            \label{thm:generic}
        \end{theorem}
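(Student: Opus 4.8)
The plan is to prove the stated upper bound by a feasibility-containment argument: I will show that every feasible point of \Cref{eq:generic-prob} induces a feasible point of \Cref{cons-opt-generic} with the same objective value, so that maximizing over the (possibly larger) transformed feasible set can only increase the optimum. Concretely, let $(\gQ,\theta)$ be any feasible pair for \Cref{eq:generic-prob}. Using the fixed partition $\gZ = \biguplus_i \gZ_i$, I define the candidate transformed variables by taking $q_i := \Pr_{\gQ}[\gZ_i]$ and $\gQ_i := \gQ(\cdot \mid \gZ_i)$, keeping $\theta$ unchanged, and setting the auxiliary radii to $\rho_i := H(\gP_i,\gQ_i)$.

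First I would match the objectives. Since $\gQ = \sum_i q_i \gQ_i$ is the law of total probability over the disjoint parts $\gZ_i$, the tower property gives $\E_{(X,Y)\sim\gQ}[\ell(h_\theta(X),Y)] = \sum_{i=1}^N q_i\, \E_{(X,Y)\sim\gQ_i}[\ell(h_\theta(X),Y)]$, which is exactly the transformed objective, so the two objective values coincide for this candidate point. Next I would verify feasibility constraint by constraint. The simplex constraints $\sum_i q_i = 1$ and $q_i \ge 0$ hold because the $q_i$ are the masses of a probability distribution over the parts, and $\rho_i \ge 0$ holds since the Hellinger distance is nonnegative; with $\rho_i = H(\gP_i,\gQ_i)$ the per-part constraints $H(\gP_i,\gQ_i) \le \rho_i$ hold with equality. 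The crucial step is \Cref{cons-opt-generic-con1}: substituting $\rho_i = H(\gP_i,\gQ_i)$ turns its left-hand side into $1-\rho^2-\sum_i \sqrt{p_iq_i}(1-H(\gP_i,\gQ_i)^2)$, which by the key observation \Cref{eq:hellinger-prop} is $\le 0$ precisely because $(\gQ,\theta)$ satisfies $H(\gP,\gQ)\le\rho$. Finally, the assumed implications translate the model and distribution constraints: $e_j(\gP,h_\theta)\le v_j$ gives $e_j'(\{\gP_i\},\{p_i\},h_\theta)\le v_j'$ for each $j\in[L]$, and $g_j(\gQ)\le u_j$ gives $g_j'(\{\gQ_i\},\{q_i\})\le u_j'$ for each $j\in[M]$, so the remaining constraints hold as well.

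Having established that the candidate point is feasible for \Cref{cons-opt-generic} with objective equal to that of $(\gQ,\theta)$ in \Cref{eq:generic-prob}, I conclude that every value achievable in \Cref{eq:generic-prob} is achieved at some image point that is feasible for \Cref{cons-opt-generic}; hence the optimum of the transformed problem, being a maximum over a set that contains all these images, is at least as large, i.e.\ it upper bounds \Cref{eq:generic-prob}.

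I expect the only delicate point to be the bookkeeping around the relaxation introduced by the free radii $\rho_i$ and the one-directional implications $e_j\Rightarrow e_j'$, $g_j\Rightarrow g_j'$: because both ingredients only enlarge the transformed feasible region rather than shrink it, they preserve the upper-bound direction, but one must be careful to invoke the implications in the correct orientation and to note that the transformed problem may admit extra feasible points (e.g.\ larger $\rho_i$, or a $\theta$ not arising from any original-feasible pair) that can only raise the maximum and therefore do no harm. The genuinely substantive ingredient is the affinity (Bhattacharyya) decomposition underlying \Cref{eq:hellinger-prop}, which I take as given from the preceding key observation; everything else reduces to the tower property and a direct check of the constraints.
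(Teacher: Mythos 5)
Your proposal is correct and takes essentially the same approach as the paper: the paper's proof likewise maps any feasible $(\gQ,\theta)$ of \Cref{eq:generic-prob} to the induced subpopulation variables $q_i$, $\gQ_i$, $\rho_i = H(\gP_i,\gQ_i)$, verifies each constraint of \Cref{cons-opt-generic} (the distance constraint via \Cref{eq:hellinger-prop}, the simplex and nonnegativity constraints by construction, and the $e_j'$/$g_j'$ constraints via the assumed implications), and concludes that the maximum of the relaxed problem dominates since it is taken over a feasible set containing all such image points. The only difference is that the paper's appendix proof also derives \Cref{eq:hellinger-prop} itself from the definition of the Hellinger distance using the disjoint supports of the $\gZ_i$, whereas you invoke it as the previously stated key observation, which is consistent with how the paper presents it in the main text.
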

        
        In Problem~\ref{eq:generic-prob}, the challenge is to deal with the fair base rate constraint. Our core technique in \Cref{thm:generic} is subpopulation decomposition.
        At a high level, thanks to the disjoint support among different subpopulations, we get \Cref{eq:hellinger-prop}. 
        This equation gives us an equivalence relationship between distribution-level~(namely, $\gP$ and $\gQ$) distance constraint and subpopulation-level~(namely, $\gP_i$'s and $\gQ_i$'s) distance constraint. 
        As a result, we can rewrite the original problem (\ref{eq:generic-prob}) using sub-population as decision variables as in \Cref{cons-opt-generic-con1} and then imposing the unity constraint (\Cref{cons-opt-generic-con2}) to get \Cref{thm:generic}.
        We provide a detailed proof in \Cref{adxsubsec:pf-thm-1}.
        Although the optimization problem (\Cref{cons-opt-generic}) may look more complicated then the original \Cref{eq:generic-prob}, 
        this optimization simplifies the challenging fair base rate constraint, allows us to upper bound each subpopulation loss $\E_{(X,Y)\sim\gQ_i} [\ell(h_\theta(X), Y)]$ individually, and hence makes the whole optimization tractable.
    
 \vspace{-0.7em}
    \subsection{Certified Fairness with \shiftingonecapital}
        \label{subsec:shifting-one}
 \vspace{-0.7em}
    
        
        For the \shiftingone case, we instantiate \Cref{thm:generic} and obtain the following fairness certificate.

        \begin{theorem}
            Given a distance bound $\rho > 0$, the following constrained optimization, which is \textbf{convex}, when feasible, provides a \textbf{tight} fairness certificate for \Cref{prob:certified-fairness-shifting-one}:
            \begin{subequations}
                \label{eq:general_opt_prob_label_shifting_thm}
                \begin{small}
                \begin{align*}
                    \max_{k_s,r_y} \quad  \sum_{s=1}^S \sum_{y=1}^{C} k_s r_y E_{s,y}, \quad 
                    \mathrm{s.t.} \quad & \sum_{s=1}^S k_s = 1, \quad \sum_{y=1}^C r_y = 1, \quad k_s \ge 0 \quad \forall s\in [S], \quad r_y \ge 0 \quad \forall y\in [C],
                    \\
                    &  1-\rho^2-\sum_{s=1}^S \sum_{y=1}^C \sqrt{p_{s,y}k_s r_y} \le 0,
                \end{align*}
                \end{small}
            \end{subequations}
            where $E_{s,y} := \E_{(X,Y)\sim\gP_{s,y}} [\ell(h_\theta(X),Y)]$ and $p_{s,y} := \Pr_{(X,Y)\in \gP}[X_s=s,Y=y]$ are constants.
            \label{thm:shiftingone}
        \end{theorem}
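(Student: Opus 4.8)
The plan is to specialize the generic decomposition of \Cref{thm:generic} to the partition indexed by $(s,y)$ pairs, where $\gZ_{s,y} := \{(X,Y): X_s = s,\, Y = y\}$, so that $N = S\cdot C$ and the conditionals $\gP_{s,y},\gQ_{s,y}$ coincide with the subpopulations appearing in \Cref{prob:certified-fairness-shifting-one}. The defining feature of \shiftingone is the constraint $\gP_{s,y} = \gQ_{s,y}$, which I would exploit first: it forces $H(\gP_{s,y},\gQ_{s,y}) = 0$, so every $\rho_{s,y}$ in \Cref{cons-opt-generic} can be set to $0$ and the per-subpopulation distributions $\gQ_{s,y}$ cease to be decision variables. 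What remains free are only the mixing weights $q_{s,y} = \Pr_\gQ[X_s = s,\, Y = y]$, and since $\gQ_{s,y} = \gP_{s,y}$ and the model $h_\theta$ is fixed, each subpopulation loss $\E_{(X,Y)\sim\gQ_{s,y}}[\ell(h_\theta(X),Y)]$ equals the computable constant $E_{s,y}$. The objective thus collapses to the linear form $\sum_{s,y} q_{s,y} E_{s,y}$, and \Cref{eq:hellinger-prop} specializes to $1-\rho^2 - \sum_{s,y}\sqrt{p_{s,y} q_{s,y}} \le 0$.

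The second step is to encode the fair-base-rate constraint as a reparameterization of the weights. Writing $k_s := \Pr_\gQ[X_s = s] = \sum_y q_{s,y}$, the base rate is $b_{s,y}^\gQ = q_{s,y}/k_s$, and by \Cref{def:fair-dist} the distribution $\gQ$ is fair iff $b_{s,y}^\gQ$ is independent of $s$, i.e. iff there exist $r_y$ with $q_{s,y} = k_s r_y$ for all $s,y$. Since $\sum_y r_y = \sum_y q_{s,y}/k_s = 1$ and $\sum_s k_s = 1$, this is exactly the statement that $X_s$ and $Y$ are independent under $\gQ$, parameterized by the two simplex vectors $(k_s)$ and $(r_y)$. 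Substituting $q_{s,y} = k_s r_y$ into the collapsed objective and the Hellinger constraint reproduces verbatim the optimization in the statement, together with the normalization and nonnegativity constraints on $k_s$ and $r_y$.

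For tightness, I would observe that the assignment $(k_s, r_y) \mapsto \gQ := \sum_{s,y} k_s r_y\, \gP_{s,y}$ is a bijection between the feasible points of the reparameterized problem and the feasible $\gQ$ of \Cref{prob:certified-fairness-shifting-one} under Hellinger distance, and that it preserves the objective value exactly. Hence no relaxation is introduced at any step, in contrast with the generic \Cref{thm:generic} whose implications $e_j \Rightarrow e_j'$ and $g_j \Rightarrow g_j'$ may be lossy; the optimal value therefore equals the true worst-case loss and the certificate is tight.

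The delicate part, and the step I expect to be the main obstacle, is convexity. The feasible region itself is genuinely convex: each summand $\sqrt{p_{s,y} k_s r_y} = \sqrt{p_{s,y}}\,\sqrt{k_s r_y}$ is a nonnegative multiple of the geometric mean $\sqrt{k_s r_y}$, which is jointly concave on the nonnegative orthant, so $\sum_{s,y}\sqrt{p_{s,y} k_s r_y}$ is concave and its superlevel set $\{(k,r): \sum_{s,y}\sqrt{p_{s,y} k_s r_y} \ge 1-\rho^2\}$ is convex; intersecting with the two simplices preserves convexity. The objective $\sum_{s,y} k_s r_y E_{s,y}$ is, however, bilinear and hence indefinite, so the real work is to argue that maximizing it over this convex set is equivalent to a convex program. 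Here I would exploit the biconvex structure—for fixed $(r_y)$ the objective is linear in $(k_s)$ over a convex set, and symmetrically—and then seek a problem-specific reduction (a change of variables, or an extreme-point/dual characterization of the maximizer) that certifies global optimality. Verifying that this reduction is lossless, and thereby reconciling the bilinear objective with the claimed convexity, is the crux of the argument.
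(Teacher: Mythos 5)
Your certificate and tightness arguments are, in substance, the paper's own proof. The paper likewise decomposes by $(s,y)$, uses the \shiftingone constraint $\gP_{s,y}=\gQ_{s,y}$ (hence $H(\gP_{s,y},\gQ_{s,y})=0$) to collapse each subpopulation loss to the constant $E_{s,y}$, encodes the fair base rate condition of \Cref{def:fair-dist} exactly as the factorization $q_{s,y}=k_s r_y$ with simplex vectors $(k_s)$ and $(r_y)$, and proves tightness by the same two constructions you package as a bijection: an optimal $\gQ^\star$ of \Cref{prob:certified-fairness-shifting-one} induces a feasible $(k^\star,r^\star)$ with equal objective, and conversely an optimal $(k^\star,r^\star)$ induces the feasible distribution $\gQ^\star=\sum_{s,y}k_s^\star r_y^\star\,\gP_{s,y}$ with equal objective. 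Your proof that the feasible region is convex, via joint concavity of the geometric mean $\sqrt{k_s r_y}$, is correct and is if anything cleaner than the paper's midpoint-plus-Cauchy argument for the same constraint.

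The step you leave open---convexity of the program as a whole in the face of the bilinear objective---is a genuine gap in your writeup, but you should know that the paper does not close it either. The paper's part (3) rests entirely on the assertion that ``the objective and all constraints but the last one are linear,'' and this is inaccurate: $\sum_{s,y}k_s r_y E_{s,y}$ is linear in $k$ for fixed $r$ and vice versa, but jointly it is an indefinite quadratic (for $S=C=2$ its Hessian has eigenvalues $\pm(E_{1,1}-E_{1,2}-E_{2,1}+E_{2,2})$), so the paper's justification establishes convexity of the constraint set only, which is precisely what you already proved. Moreover, the ``lossless reduction'' you hoped to find does not obviously exist: in the $q_{s,y}$ coordinates the objective becomes linear but the fairness constraint becomes the non-convex rank-one condition $q_{s,y}=(\sum_{y'}q_{s,y'})(\sum_{s'}q_{s',y})$; in the $(k,r)$ coordinates the feasible set is convex but the objective is bilinear; and biconvexity alone does not yield a convex program (block-wise improvement to vertices fails here because the Hellinger constraint couples $k$ and $r$, so the feasible set is not a product, and the maximum of a bilinear function over a non-product convex set need not lie at an extreme point). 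Notably, for the \shiftingtwo case the paper itself treats $k_s,r_y$ as non-convex variables and grids over them, which is in tension with the convexity claimed here. So the honest conclusion is: the \emph{certificate} and \emph{tightness} claims of \Cref{thm:shiftingone} are fully proved by your argument (matching the paper's), while the boldfaced ``convex'' claim is established by neither---your identification of the bilinear objective as the crux is a legitimate criticism of the theorem as stated rather than a defect of your reduction.
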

        
        \begin{proof}[Proof sketch]
            We decompose distribution $\gP$ and $\gQ$ to $\gP_{s,y}$'s and $\gQ_{s,y}$'s according to their sensitive attribute and label values.
            In \shiftingone, $\Pr(X_o|Y,X_s)$ is fixed, i.e., $\gP_{s,y} = \gQ_{s,y}$, which means $\E_{(X,Y)\sim\gQ_{s,y}} [\ell(h_\theta(X),Y)] = E_{s,y}$ and $\rho_{s,y} = H(\gP_{s,y}, \gQ_{s,y}) = 0$.
            We plug these properties into \Cref{thm:generic}.
            Then, denoting $q_{s,y}$ to $\Pr_{(X,Y)\sim\gQ}[X_s=s,Y=y]$, we can represent the fairness constraint in \Cref{def:fair-dist} as $q_{s_0,y_0} = \left(\sum_{s=1}^S q_{s,y_0}\right)\left(\sum_{y=1}^C q_{s_0,y}\right)$ for any $s_0\in [S]$ and $y_0\in [C]$.
            Next, we parameterize $q_{s,y}$ with $k_sr_y$.
            Such parameterization simplifies the fairness constraint and allow us to prove the convexity of the resulting optimization. 
            Since all the constraints are encoded equivalently, the problem formulation provides a {tight} certification.
            Detailed proof in \Cref{adxsubsec:pf-thm-2}.
        \end{proof}
        
        As \Cref{thm:shiftingone} suggests, we can exploit the expectation information $E_{s,y} = \E_{(X,Y)\sim\gP_{s,y}}[\ell(h_\theta(X),Y)]$ and density information $p_{s,y} = \Pr_{(X,Y)\sim\gP} [X_s=s,Y=y]$ of each $\gP$'s subpopulation to provide a tight fairness certificate in \shiftingone.
        The convex optimization problem with $(S+C)$ variables can be efficiently solved by off-the-shelf packages.
    
 \vspace{-0.7em}
    \subsection{Certified Fairness with \shiftingtwocapital}
        \label{subsec:shifting-two}
\vspace{-0.7em}
    
        For the \shiftingtwo case, we leverage \Cref{thm:generic} and the parameterization trick $q_{s,y} := k_s r_y$ used in \Cref{thm:shiftingone} to reduce \Cref{prob:certified-fairness-shifting-two} to the following constrained optimization.
        
        \begin{lemma}
           Given a distance bound $\rho > 0$, the following constrained optimization, when feasible, provides a \textbf{tight} fairness certificate for \Cref{prob:certified-fairness-shifting-two}:
           \vspace{-0.5em}
           \begin{subequations}
                \label{eq:general_opt_prob_general_shifting_A}
                \begin{small}
                \begin{align}
                    \max_{k_s,r_y,\gQ,\rho_{s,y}} \quad & \sum_{s=1}^S \sum_{y=1}^C k_s r_y \E_{(X,Y)\sim\gQ_{s,y}} [\ell(h_\theta(X), Y)]
                    \label{general_opt_prob_general_shifting_obj_A} \\
                    \mathrm{s.t.} \quad & \sum_{s=1}^S k_s = 1, \quad \sum_{y=1}^C r_y = 1, \quad k_s \ge 0 \quad \forall s \in [S], \quad r_y \ge 0 \quad \forall y \in [C],
                    \label{general_opt_prob_general_shifting_con1_A} \\
                    & \sum_{s=1}^S \sum_{y=1}^C \sqrt{p_{s,y}k_sr_y} (1 - \rho_{s,y}^2) \ge 1 - \rho^2
                    \label{general_opt_prob_general_shifting_con2_A} \\
                    & H(\gP_{s,y}, \gQ_{s,y}) \le \rho_{s,y} \quad \forall s \in [S], y\in [C]
                    \label{general_opt_prob_general_shifting_con3_A},
                \end{align}
                \end{small}
           \end{subequations}
           \label{lemma:shiftingtwo-A}
           where $p_{s,y} := \Pr_{(X,Y)\in \gP}[X_s=s,Y=y]$ is a fixed constant.
           The $\gP_{s,y}$ and $\gQ_{s,y}$ are the subpopulations of $\gP$ and $\gQ$ on the support $\{(X,Y): X\in\gX,X_s=s,Y=y\}$ respectively.
        \end{lemma}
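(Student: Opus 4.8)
The plan is to instantiate the subpopulation decomposition of \Cref{thm:generic} with the $SC$-way partition $\gZ_{s,y} := \{(X,Y) : X_s = s,\, Y = y\}$, exactly as in \Cref{thm:shiftingone}, but \emph{without} imposing $\gP_{s,y}=\gQ_{s,y}$. Writing $\gQ = \sum_{s,y} q_{s,y}\gQ_{s,y}$ with $q_{s,y} := \Pr_{(X,Y)\sim\gQ}[X_s=s,\,Y=y]$, the objective $\E_{(X,Y)\sim\gQ}[\ell(h_\theta(X),Y)]$ decomposes additively into $\sum_{s,y} q_{s,y}\,\E_{(X,Y)\sim\gQ_{s,y}}[\ell(h_\theta(X),Y)]$, matching \Cref{general_opt_prob_general_shifting_obj_A}. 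Since the subpopulation distributions $\gQ_{s,y}$ are now free (general shifting), I would retain them as genuine distributional decision variables together with auxiliary radii $\rho_{s,y}$, so that the per-subpopulation expectations are kept exact rather than being bounded by model statistics; this is precisely what should make the reformulation tight rather than a mere upper bound.

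The second step handles the two remaining constraints. For the distance bound I would invoke the Hellinger identity \Cref{eq:hellinger-prop}, which converts $H(\gP,\gQ)\le\rho$ into $\sum_{s,y}\sqrt{p_{s,y}q_{s,y}}\,(1-H(\gP_{s,y},\gQ_{s,y})^2)\ge 1-\rho^2$; introducing $\rho_{s,y}$ with $H(\gP_{s,y},\gQ_{s,y})\le\rho_{s,y}$ and substituting $q_{s,y}=k_sr_y$ yields \Cref{general_opt_prob_general_shifting_con2_A,general_opt_prob_general_shifting_con3_A}. For the fairness constraint, the key observation is that the fair base rate condition of \Cref{def:fair-dist} depends only on the mixing weights $\{q_{s,y}\}$ and not on the $\gQ_{s,y}$: equalizing $b^{\gQ}_{s,y}=q_{s,y}/\sum_{y'}q_{s,y'}$ across $s$ forces $\Pr_{\gQ}[Y=y\mid X_s=s]=\Pr_{\gQ}[Y=y]$, i.e. independence of $X_s$ and $Y$ under $\gQ$. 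I would then record the short factorization fact that a weight matrix $(q_{s,y})$ on the simplex is independent if and only if $q_{s,y}=k_sr_y$ with $k_s,r_y\ge 0$ and $\sum_s k_s = \sum_y r_y = 1$ (the forward direction takes $k_s,r_y$ to be the two marginals; the converse is immediate), so the product parameterization captures \emph{exactly} the fair distributions.

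The last step certifies tightness by establishing both inclusions. Given any $\gQ$ feasible for \Cref{prob:certified-fairness-shifting-two}, setting $\rho_{s,y}:=H(\gP_{s,y},\gQ_{s,y})$, $k_s:=\Pr_\gQ[X_s=s]$, and $r_y:=\Pr_\gQ[Y=y]$ produces a feasible tuple of the lemma's program with identical objective, so its optimum dominates that of \Cref{prob:certified-fairness-shifting-two}. Conversely, from any feasible tuple I would reconstruct $\gQ := \sum_{s,y}k_sr_y\,\gQ_{s,y}$, which is fair by the factorization fact, and recover the global distance bound from $H(\gP_{s,y},\gQ_{s,y})\le\rho_{s,y}\Rightarrow 1-H(\gP_{s,y},\gQ_{s,y})^2\ge 1-\rho_{s,y}^2$, chained through \Cref{general_opt_prob_general_shifting_con2_A} and back through \Cref{eq:hellinger-prop}; this $\gQ$ attains the same objective, so the lemma's optimum is dominated in turn, and equality of the two values gives the tight certificate.

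I expect the main obstacle to be the tightness argument rather than the reformulation itself, specifically verifying that relaxing the exact per-subpopulation distance $H(\gP_{s,y},\gQ_{s,y})$ to the free auxiliary variable $\rho_{s,y}$ is lossless. The delicate point is that $\rho_{s,y}$ enters with opposite monotonicity in \Cref{general_opt_prob_general_shifting_con2_A} (larger $\rho_{s,y}$ tightens it) and in \Cref{general_opt_prob_general_shifting_con3_A} (larger $\rho_{s,y}$ loosens it), so I must confirm that the monotone substitution $1-\rho_{s,y}^2\le 1-H(\gP_{s,y},\gQ_{s,y})^2$ preserves the overall Hellinger bound in exactly the direction needed, and that nothing constrains $\gQ_{s,y}$ beyond its distance to $\gP_{s,y}$ once the additive objective and the weight-only fairness constraint have been separated out.
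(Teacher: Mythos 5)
Your proposal is correct and follows essentially the same route as the paper's proof: both directions match the paper's two-part argument (a feasible $\gQ$ for \Cref{prob:certified-fairness-shifting-two} induces a feasible tuple via $k_s=\Pr_{\gQ}[X_s=s]$, $r_y=\Pr_{\gQ}[Y=y]$, $\rho_{s,y}=H(\gP_{s,y},\gQ_{s,y})$; conversely any feasible tuple reconstructs $\gQ^\dagger=\sum_{s,y}k_sr_y\,\gQ_{s,y}$), including the use of \Cref{eq:hellinger-prop} and the product factorization of the fair mixing weights. The monotonicity worry you flag at the end is already resolved by your own chain in the converse direction---since $1-\rho_{s,y}^2\le 1-H(\gP_{s,y},\gQ_{s,y})^2$, constraint (\ref{general_opt_prob_general_shifting_con2_A}) with the relaxed radii implies the exact Hellinger bound for $\gQ^\dagger$---which is precisely how the paper establishes feasibility in its tightness part.
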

        
        \vspace{-1em}
        \begin{proof}[Proof sketch]
            We show that \Cref{general_opt_prob_general_shifting_con1_A} ensures a parameterization of $q_{s,y} = \Pr_{(X,Y)\in\gQ} [X_s=s,Y=y]$ that satisfies fairness constraints on $\gQ$.
            Then, leveraging \Cref{thm:generic} we prove that the constrained optimization provides a fairness certificate.
            Since all the constraints are either kept or equivalently encoded, this resulting certification is \textit{tight}.
            Detailed proof in \Cref{adxsubsec:pf-lemma-4-2}.
        \end{proof}
        
        \vspace{-1em}
        Now the main obstacle is to solve the non-convex optimization in Problem \ref{eq:general_opt_prob_general_shifting_A}. 
        Here, as the first step, we upper bound the loss of $h_\theta(\cdot)$ within each shifted subpopulation $\gQ_{s,y}$, i.e., upper bound $\E_{(X,Y)\sim\gQ_{s,y}}[\ell(h_\theta(X),Y)]$ in \Cref{general_opt_prob_general_shifting_obj_A}, by \Cref{thm:gramian-bound} in \Cref{adxsec:gramian-bound}~\cite{weber2022certifying}.
        Then, we apply variable transformations to make some decision variables convex.
        For the remaining decision variables, we observe that they are non-convex but bounded.
        Hence, we propose the technique of grid-based sub-problem construction. Concretely, we divide the feasible region regarding non-convex variables into small grids and consider the optimization problem in each region individually. For each sub-problem, we relax the objective by pushing the values of non-convex variables to the boundary of the current grid and then solve the convex optimization sub-problems.
            Concretely, the following theorem states our computable certificate for \Cref{prob:certified-fairness-shifting-two}, with detailed proof in \Cref{adxsubsec:pf-thm-3}.

        \begin{theorem}
           \label{thm:shiftingtwo}
            If for any $s\in [S]$ and $y\in [Y]$, $H(\gP_{s,y},\gQ_{s,y}) \le \bar\gamma_{s,y}$ and $0 \le \sup_{(X,Y)\in \gX\times\gY} \ell(h_\theta(X),Y) \le M$, given a distance bound $\rho > 0$, for any region granularity $T \in \sN_+$, the following expression provides a fairness certificate for \Cref{prob:certified-fairness-shifting-two}:
           \begin{equation}
                \small
                \bar\ell = \max_{\{i_s \in [T]: s\in [S]\}, \{j_y \in [T]: y\in [C]\}} \tC\left( \left\{\left[\frac{i_s-1}{T}, \frac{i_s}{T} \right]\right\}_{s=1}^S, 
                \left\{\left[\frac{j_y-1}{T}, \frac{j_y}{T}\right]\right\}_{y=1}^C \right), \text{ where }
                \label{eq:thm-general-shifting-1}
            \end{equation}
            \begin{subequations}
                \label{eq:general_opt_prob_general_shifting_C}
                \begin{small}
                \begin{align}
                    \hspace{-2em} & \tC\left(\{[\underline{k_s}, \overline{k_s}]\}_{s=1}^S, \{[\underline{r_y}, \overline{r_y}]\}_{y=1}^C\right)   = \max_{x_{s,y}} \sum_{s=1}^S \sum_{y=1}^C
                    \left(\overline{k_s} \overline{r_y} \left( E_{s,y} + C_{s,y} \right)_+
                    + \underline{k_s} \underline{r_y} \left( E_{s,y} + C_{s,y} \right)_- \right. \nonumber \\
                    &  \hspace{1em}
                    \left. + 2\overline{k_s} \overline{r_y} \sqrt{x_{s,y}(1-x_{s,y})} \sqrt{V_{s,y}} - \underline{k_s} \underline{r_y} x_{s,y} (C_{s,y})_+ - \overline{k_s} \overline{r_y} x_{s,y} (C_{s,y})_- \right)
                    \label{general_opt_prob_general_shifting_obj_C} \\
                  \qquad \quad \mathrm{s.t.} \quad & \sum_{s=1}^S \underline{k_s} \le 1, \quad \sum_{s=1}^S \overline{k_s} \ge 1, \quad \sum_{y=1}^C \underline{r_y} \le 1, \quad \sum_{y=1}^C \overline{r_y} \ge 1,
                    \label{general_opt_prob_general_shifting_con1_C} \\
                   \qquad \quad & \sum_{s=1}^S \sum_{y=1}^C \sqrt{p_{s,y}\overline{k_s}\overline{r_y}x_{s,y}} \ge 1 - \rho^2, \quad
                    (1-\bar\gamma_{s,y}^2)^2 \le x_{s,y} \le 1\quad \forall s\in [S], y\in[C],
                    \label{general_opt_prob_general_shifting_con3_C} 
                \end{align}
                \end{small}
           \end{subequations}
           where $(\cdot)_+ = \max\{\cdot,0\}$, $(\cdot)_- = \min\{\cdot,0\}$; $E_{s,y} = \E_{(X,Y)\sim\gP_{s,y}}[\ell(h_\theta(X),Y)]$, $V_{s,y}=\sV_{(X,Y)\sim\gP_{s,y}}[\ell(h_\theta(X),Y)]$, $p_{s,y} = \Pr_{(X,Y)\sim\gP}[X_s=s,Y=y]$, $C_{s,y} = M-E_{s,y}-\frac{V_{s,y}}{M-E_{s,y}}$, and $\bar\gamma_{s,y}^2 = 1 - (1 + (M-E_{s,y})^2/V_{s,y})^{-\frac12}$.
           \Cref{eq:thm-general-shifting-1} only takes $\tC$'s value when it is feasible, and each $\tC$ queried by \Cref{eq:thm-general-shifting-1} is a \textbf{convex optimization}.
        \end{theorem}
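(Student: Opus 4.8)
The plan is to start from the tight reduction in \Cref{lemma:shiftingtwo-A} and convert its non-convex program into a finite family of convex sub-problems whose maximum is a valid upper bound. The only infinite-dimensional decision variable in \Cref{lemma:shiftingtwo-A} is the shifted subpopulation $\gQ_{s,y}$, which enters the objective solely through $\E_{(X,Y)\sim\gQ_{s,y}}[\ell(h_\theta(X),Y)]$ subject to $H(\gP_{s,y},\gQ_{s,y})\le\rho_{s,y}$. My first step is therefore to eliminate $\gQ_{s,y}$ by invoking the Gramian/Hellinger mean bound (\Cref{thm:gramian-bound}, \cite{weber2022certifying}), which upper bounds this conditional expectation by an explicit function of the known constants $E_{s,y}$, $V_{s,y}$, $M$ and of $\rho_{s,y}$. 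Introducing the substitution $x_{s,y}=(1-\rho_{s,y}^2)^2$ turns this bound into the concave-in-$x_{s,y}$ expression appearing in \Cref{general_opt_prob_general_shifting_obj_C} (the term $\sqrt{x_{s,y}(1-x_{s,y})}$ being the image of the variance contribution $\sqrt{V_{s,y}}$), and simultaneously rewrites the coupling constraint \Cref{general_opt_prob_general_shifting_con2_A}, since $\sqrt{x_{s,y}}=1-\rho_{s,y}^2$, as $\sum_{s,y}\sqrt{p_{s,y}k_sr_yx_{s,y}}\ge 1-\rho^2$. The saturation threshold $\rho_{s,y}=\bar\gamma_{s,y}$ of the Gramian bound, beyond which the bound equals $M$, determines the lower box constraint $(1-\bar\gamma_{s,y}^2)^2\le x_{s,y}\le 1$.

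After this reduction the remaining obstruction is the bilinear coefficient $k_sr_y$ multiplying each per-subpopulation loss bound, which makes the objective non-convex in $(k_s,r_y)$. Since $k_s,r_y\in[0,1]$ are bounded, I would partition each coordinate's $[0,1]$ range into $T$ equal cells and, on the cell $k_s\in[\underline{k_s},\overline{k_s}]$, $r_y\in[\underline{r_y},\overline{r_y}]$, relax the bilinear coefficients to their extreme values. Concretely, because $k_s,r_y$ are non-negative with $\underline{k_s}\underline{r_y}\le k_sr_y\le\overline{k_s}\overline{r_y}$, I would replace $k_sr_y$ by $\overline{k_s}\overline{r_y}$ wherever it multiplies a non-negative quantity and by $\underline{k_s}\underline{r_y}$ wherever it multiplies a non-positive quantity; splitting $E_{s,y}+C_{s,y}$ and $C_{s,y}$ into their $(\cdot)_+$ and $(\cdot)_-$ parts produces exactly \Cref{general_opt_prob_general_shifting_obj_C}. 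The same monotonicity gives $\sqrt{p_{s,y}k_sr_yx_{s,y}}\le\sqrt{p_{s,y}\overline{k_s}\overline{r_y}x_{s,y}}$, so replacing $k_sr_y$ by $\overline{k_s}\overline{r_y}$ in the coupling constraint only enlarges the feasible set; and relaxing the simplex equalities $\sum_s k_s=1$, $\sum_y r_y=1$ to the inclusion conditions \Cref{general_opt_prob_general_shifting_con1_C} guarantees each retained cell still meets the simplex, so the cell containing the true optimizer is admissible and no maximizer is discarded. Hence, evaluating $\tC$ on the cell containing the true optimizer yields a value that dominates the optimum restricted to that cell, and the maximum over all $T^{S+C}$ cells in \Cref{eq:thm-general-shifting-1} is a genuine upper bound on \Cref{prob:certified-fairness-shifting-two}.

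Finally I would verify that each $\tC$ is a convex program in its only free variables $x_{s,y}$ (the grid bounds $\underline{k_s},\overline{k_s},\underline{r_y},\overline{r_y}$ being fixed constants within a cell). The objective \Cref{general_opt_prob_general_shifting_obj_C} is a sum of a part linear in $x_{s,y}$ and the concave terms $2\overline{k_s}\overline{r_y}\sqrt{V_{s,y}}\sqrt{x_{s,y}(1-x_{s,y})}$, hence concave; the distance constraint \Cref{general_opt_prob_general_shifting_con3_C} is a superlevel set of the concave map $x\mapsto\sum_{s,y}\sqrt{p_{s,y}\overline{k_s}\overline{r_y}x_{s,y}}$ and the box constraints are linear, so the feasible region is convex and maximizing a concave objective over it is a convex optimization. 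I expect the main difficulty to be the bookkeeping of the relaxation step: proving that the sign-based replacement of every bilinear coefficient is simultaneously an upper bound on the objective and a valid relaxation of the coupling constraint, while the simplex-inclusion conditions keep the relevant cell feasible. This is where tightness is deliberately sacrificed for computability and where an incorrect sign assignment would break soundness, so it is the crux of the argument; the Gramian reduction and the per-cell convexity check are comparatively routine.
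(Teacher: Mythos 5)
Your proposal follows essentially the same route as the paper's own proof: starting from \Cref{lemma:shiftingtwo-A}, eliminating $\gQ_{s,y}$ via \Cref{thm:gramian-bound}, substituting $x_{s,y}=(1-\rho_{s,y}^2)^2$, partitioning the $(k_s,r_y)$ cube into grid cells, performing the sign-based relaxation of the bilinear coefficients (which the paper also identifies as the step establishing $\tC'\le\tC$), and verifying per-cell convexity from the concavity of $x\mapsto\sqrt{x(1-x)}$ and of the distance-constraint map. The only slight inaccuracy is your characterization of $\bar\gamma_{s,y}$ as the point where the Gramian bound ``equals $M$''; it is in fact the validity threshold of \Cref{thm:gramian-bound} (equivalently, the theorem's standing assumption $H(\gP_{s,y},\gQ_{s,y})\le\bar\gamma_{s,y}$), but the resulting box constraint $(1-\bar\gamma_{s,y}^2)^2\le x_{s,y}\le 1$ is identical, so the argument is unaffected.
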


        \paragraph{Implications.}
        \Cref{thm:shiftingtwo} provides a fairness certificate for \Cref{prob:certified-fairness-shifting-two} under two assumptions:
        (1)~The loss function is bounded~(by $M$). This assumption holds for several typical losses such as 0-1 loss and JSD loss.
        (2)~The distribution shift between training and test distribution within each subpopulation is bounded by $\bar\gamma_{s,y}$, where $\bar\gamma_{s,y}$ is determined by the model's statistics on  $\gP$.
        In practice, this additional distance bound assumption generally holds, since $\bar\gamma_{s,y} \gg \rho$ for common choices of $\rho$.

        In \Cref{thm:shiftingtwo}, we exploit three types of statistics of $h_\theta(\cdot)$ on $\gP$ to compute the fairness certificates: the expectation $E_{s,y} = \E_{(X,Y)\sim\gP_{s,y}} [\ell(h_\theta(X),Y)]$, the variance $V_{s,y} = \sV_{(X,Y)\sim\gP_{s,y}} [\ell(h_\theta(X),Y)]$, and the density $p_{s,y} = \Pr_{(X,Y)\sim\gP} [X_s=s,Y=y]$, all of which are at the subpopulation level and a high-confidence estimation of them based on finite samples are tractable (\Cref{subsec:finite-sampling}).

        Using \Cref{thm:shiftingtwo}, after determining the region granularity $T$, we can provide a fairness certificate for \Cref{prob:certified-fairness-shifting-two} by solving $T^{SC}$ convex optimization problems, each of which has $SC$ decision variables.
        Note that the computation cost is independent of $h_\theta$, and 
        therefore we can numerically compute the certificate for large DNN models used in practice.
        Specifically, when $S=2$~(binary sensitive attribute) or $C=2$~(binary classification) which is common in the fairness evaluation setting, we can construct the region for only one dimension $k_1$ or $r_1$, and use $1-k_1$ or $1-r_1$ for the other dimension.
        Thus, for the typical setting $S=2,C=2$, we only need to solve $T^2$ convex optimization problems.

        Note that for \Cref{prob:certified-fairness-shifting-one}, our certificate in \Cref{thm:shiftingone} is tight, whereas for \Cref{prob:certified-fairness-shifting-two}, our certificate in \Cref{thm:shiftingtwo} is not.
        This is because in \Cref{prob:certified-fairness-shifting-two}, extra distribution shift exists within each subpopulation, i.e., $\Pr(X_o|Y,X_s)$ changes from $\gP$ to $\gQ$, and to bound such shift, we need to leverage Thm.~2.2 in \cite{weber2022certifying} which has no tightness guarantee.
        Future work providing tighter bounds than \cite{weber2022certifying} can be seamlessly incorporated into our framework to tighten our fairness certificate for \Cref{prob:certified-fairness-shifting-two}.

        \vspace{-0.7em}
    \subsection{Dealing with Finite Sampling Error}
        \label{subsec:finite-sampling}
      \vspace{-0.7em}
        
        In \Cref{subsec:shifting-one} and \Cref{subsec:shifting-two}, we present \Cref{thm:shiftingone} and \Cref{thm:shiftingtwo} that provide computable fairness certificates for \shiftingone and \shiftingtwo scenarios respectively.
        In these theorems, we need to know the quantities related to the training distribution and trained $\gP$ and model $h_\theta(\cdot)$:
        \begin{equation}
            \small
            E_{s,y} = \underset{(X,Y)\sim\gP_{s,y}}{\E} [\ell(h_\theta(X),Y)], 
            V_{s,y} = \underset{(X,Y)\sim\gP_{s,y}}{\sV} [\ell(h_\theta(X),Y)],
            p_{s,y} = \Pr_{(X,Y)\sim\gP} [X_s=s,Y=y].
        \end{equation}
        \Cref{subsec:shifting-two} further requires $C_{s,y}$ and $\bar\gamma_{s,y}$ which are functions of $E_{s,y}$ and $V_{s,y}$.
        However, a practical challenge is that common training distributions do not have an analytical expression that allows us to precisely compute these quantities.
        Indeed, we only have access to a finite number of individually drawn samples, i.e., the training dataset, from  $\gP$.
        Thus, we will provide high-confidence bounds for $E_{s,y}$, $V_{s,y}$, and $p_{s,y}$ in \Cref{lemma:finite-sampling-conf-interval}~(stated in \Cref{adxsubsec:finite-sampling-conf-interval}).  

        For \Cref{thm:shiftingone}, we can replace  $E_{s,y}$ in the objective by the upper bounds of $E_{s,y}$ and replace the concrete quantities of $p_{s,y}$ by interval constraints and the unit constraint $\sum_s \sum_y p_{s,y}=1$, which again yields a convex optimization that can be effectively solved.
        For \Cref{thm:shiftingtwo}, we compute  the confidence intervals of $C_{s,y}$ and $\rho_{s,y}$, then plug in either the lower bounds or the upper bounds to the objective~(\ref{general_opt_prob_general_shifting_obj_C}) based on the coefficient, and finally replace the concrete quantities of $p_{s,y}$ by interval constraints and the unit constraint $\sum_s \sum_y p_{s,y}=1$.
        The resulting optimization is proved to be convex and provides an upper bound for any possible values of $E_{s,y}$, $V_{s,y}$, and $p_{s,y}$ within the confidence intervals. 
        We defer the statement of \Cref{thm:shiftingone} and \Cref{thm:shiftingtwo} considering finite sampling error to \Cref{adxsubsec:statement-finite-sampling}.
        To this point, we have presented our framework for computing high-confidence fairness certificates given access to model $h_\theta(\cdot)$ and a finite number of samples drawn from $\gP$.

  \vspace{-2mm}
\section{Experiments}
\vspace{-2mm}
In this section, we evaluate the certified fairness under both \textit{sensitive shifting} and \textit{general shifting} scenarios on six real-world datasets. We observe that under the sensitive shifting, our certified fairness bound is \textit{tight} (\Cref{sec:exp_sensitive_shifting}); while the bound is less tight under general shifting (\Cref{sec:exp_general_shifting}) which depends on the tightness of generalization bounds within each subpopulation (details in \Cref{subsec:shifting-two}).
In addition, we show that our certification framework can flexibly integrate more constraints on $\gQ$, leading to a tighter fairness certification (\Cref{sec:exp_more_cons}).
Finally, we compare our certified fairness bound with existing distributional robustness bound~\cite{sinha2017certifying} (\cref{sec:exp_comp_w_gen}), since both consider a shifted distribution while our bound is optimized with an additional fairness constraint which is challenging to be directly integrated to the existing distributional robustness optimization. We show that with the fairness constraint and our optimization approach, our bound is much tighter. 

\begin{figure}[thb]
\vspace{-3mm}
\subfigure{
    \rotatebox{90}{\hspace{-1.7em}\footnotesize{Error}}
    \begin{minipage}{0.24\linewidth}
    \centerline{\footnotesize{\quad ADULT}}
 	\vspace{1pt}
 	\centerline{\includegraphics[width=1.0\textwidth]{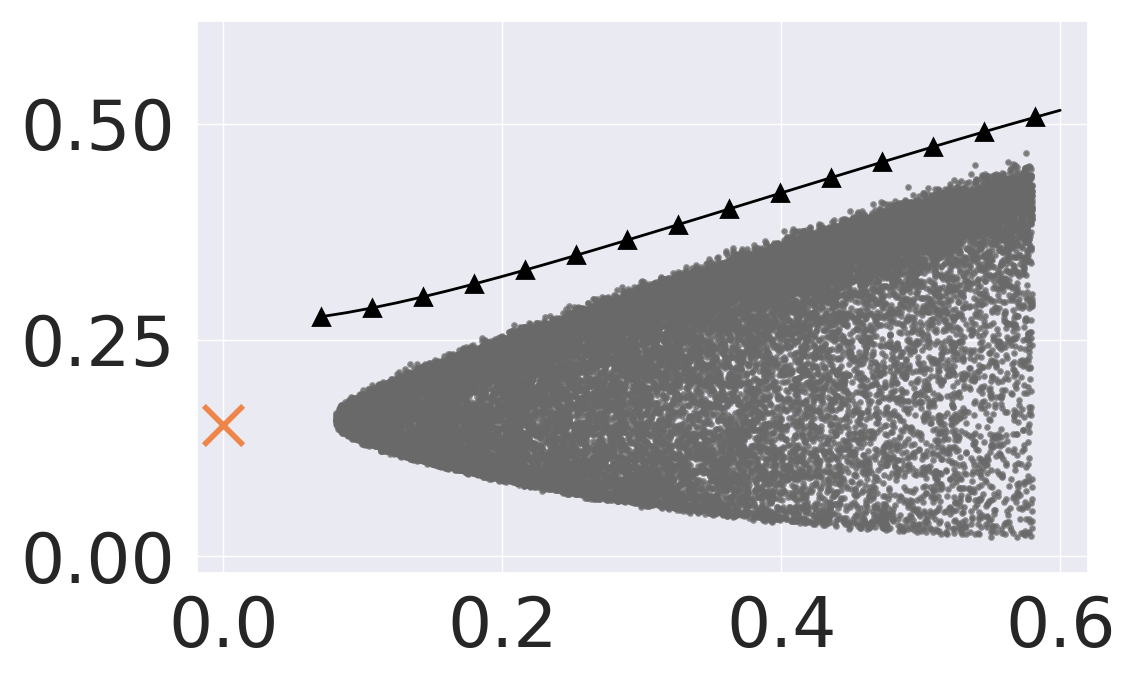}}
 	\vspace{-1.3em}
 \end{minipage}
 \begin{minipage}{0.24\linewidth}
    \centerline{\footnotesize{\quad COMPAS}}
 	\vspace{1pt}
 	\centerline{\includegraphics[width=1.0\textwidth]{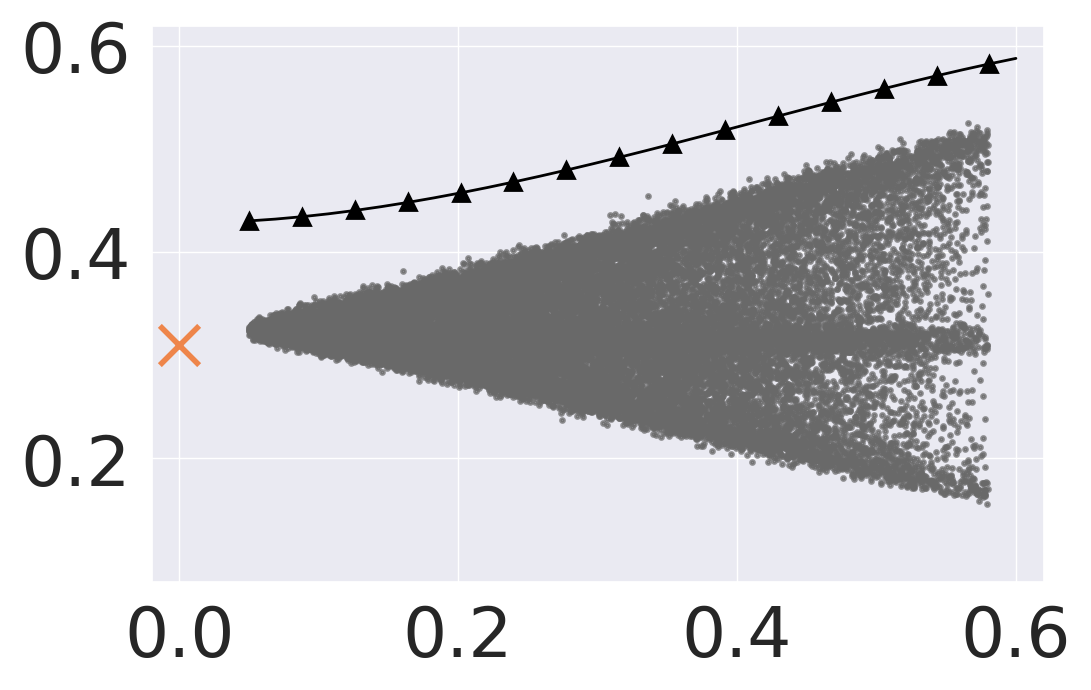}}
 	\vspace{-1.3em}
 \end{minipage}
 \begin{minipage}{0.24\linewidth}
    \centerline{\footnotesize{\quad HEALTH}}
 	\vspace{1pt}
 	\centerline{\includegraphics[width=1.0\textwidth]{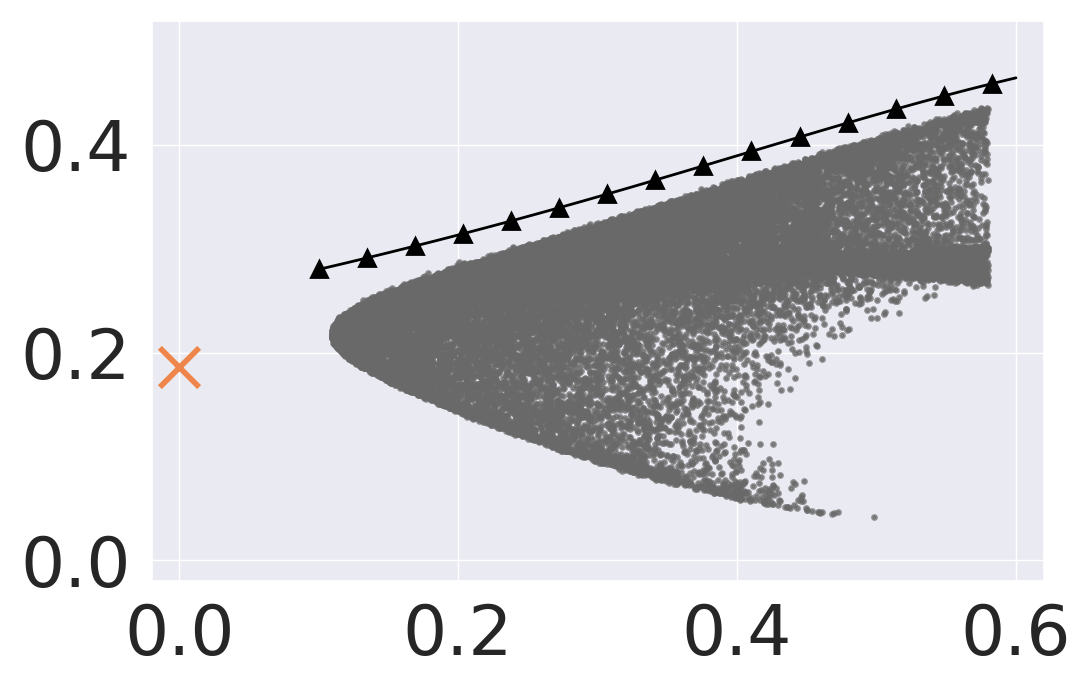}}
 	\vspace{-1.3em}
 \end{minipage}
 \begin{minipage}{0.24\linewidth}
    \centerline{\footnotesize{\quad LAW SCHOOL}}
 	\vspace{1pt}
 	\centerline{\includegraphics[width=1.0\textwidth]{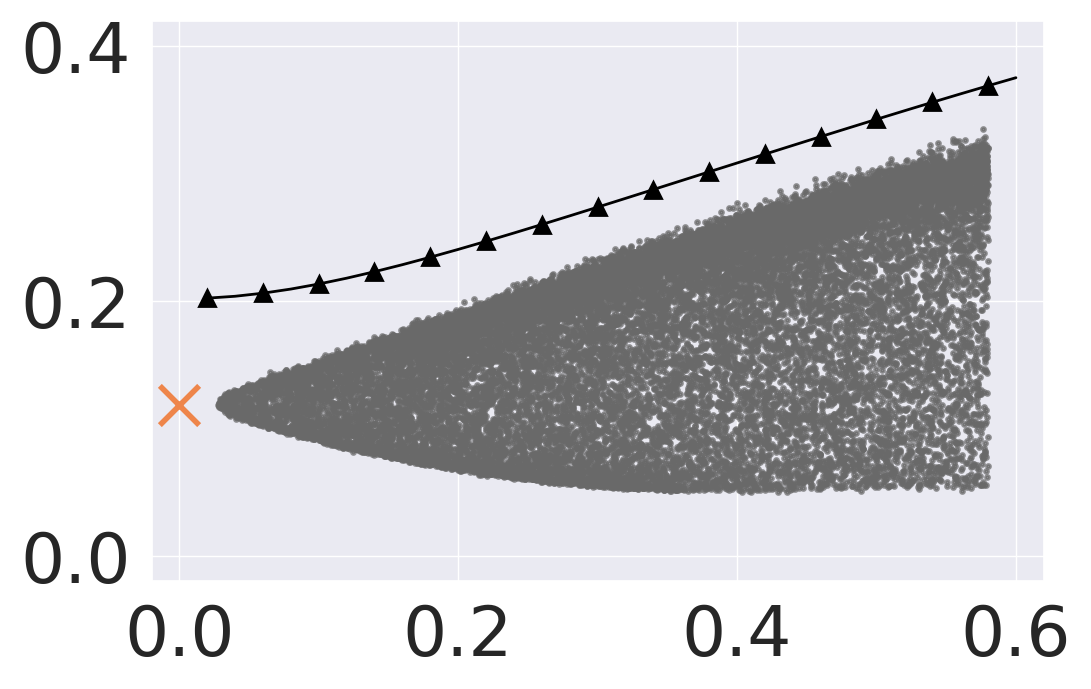}}
 	\vspace{-1.3em}
 \end{minipage}
}
\vspace{-0.7em}
\subfigure{
    \rotatebox{90}{\hspace{-1.1em}\footnotesize{BCE Loss}}
    \begin{minipage}{0.24\linewidth}
 	\centerline{\includegraphics[width=1.0\textwidth]{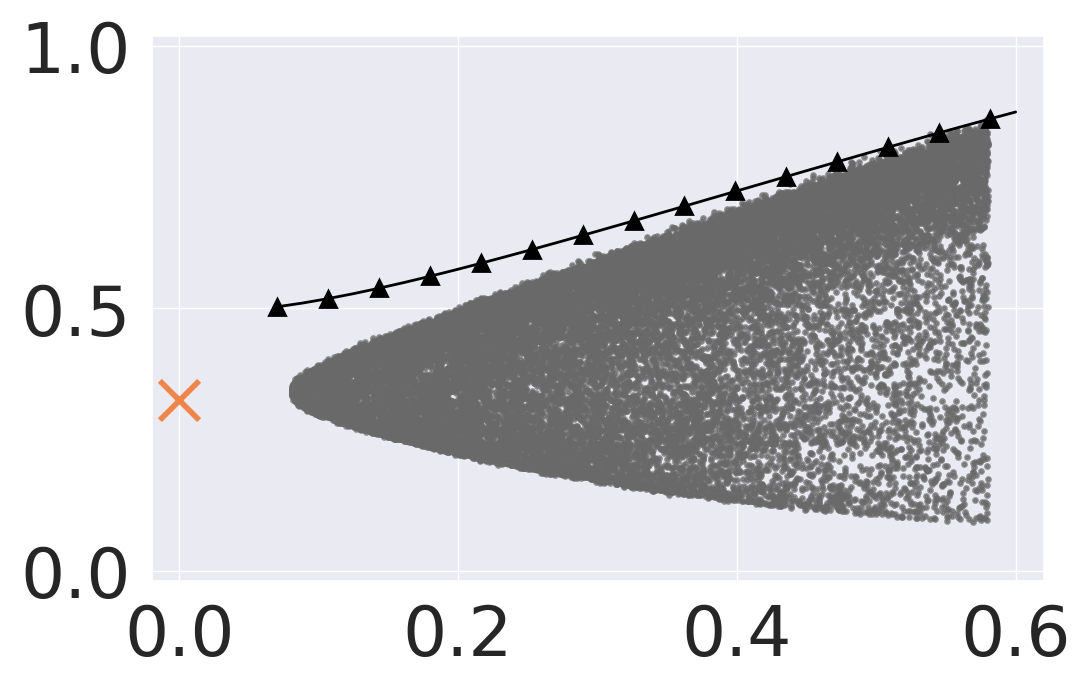}}
  	\vspace{-0.5em}
 	\centerline{\footnotesize{~~~Hellinger Distance}}
 	 \vspace{-0.5em}
 \end{minipage}
\begin{minipage}{0.24\linewidth}
 	\centerline{\includegraphics[width=1.0\textwidth]{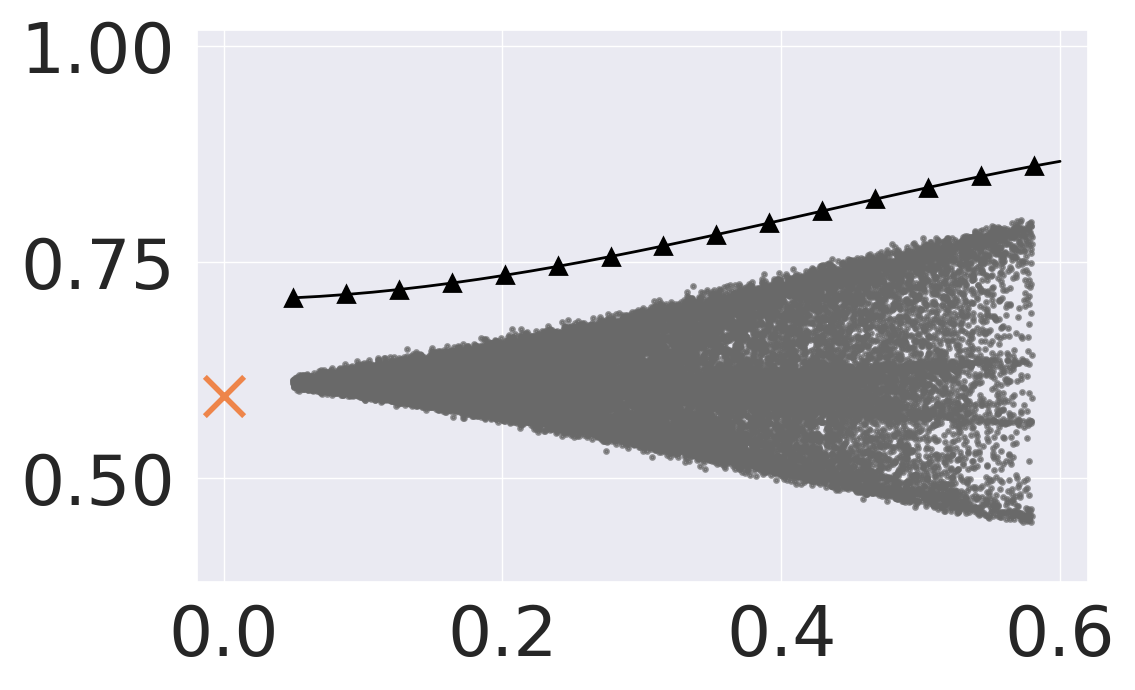}}
\vspace{-0.5em}
 	\centerline{\footnotesize{~~~Hellinger Distance}}
 	 \vspace{-0.5em}
 \end{minipage}
\begin{minipage}{0.24\linewidth}
 	\centerline{\includegraphics[width=1.0\textwidth]{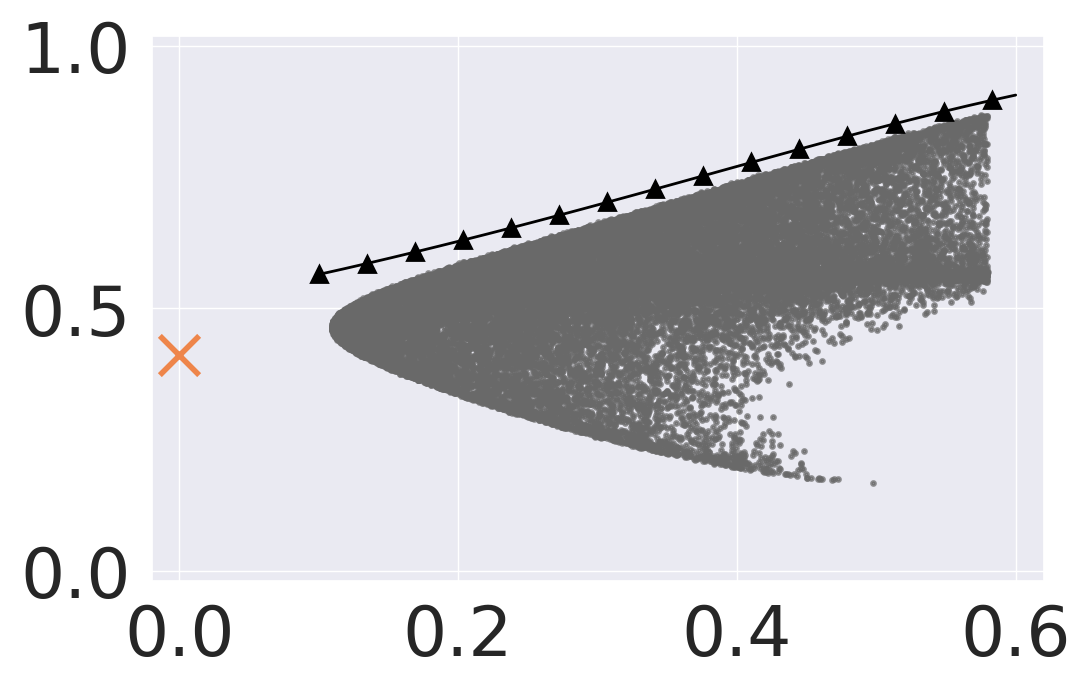}}
\vspace{-0.5em}
 	\centerline{\footnotesize{~~Hellinger Distance}}
 	 \vspace{-0.5em}
 \end{minipage}
\begin{minipage}{0.24\linewidth}
 	\centerline{\includegraphics[width=1.0\textwidth]{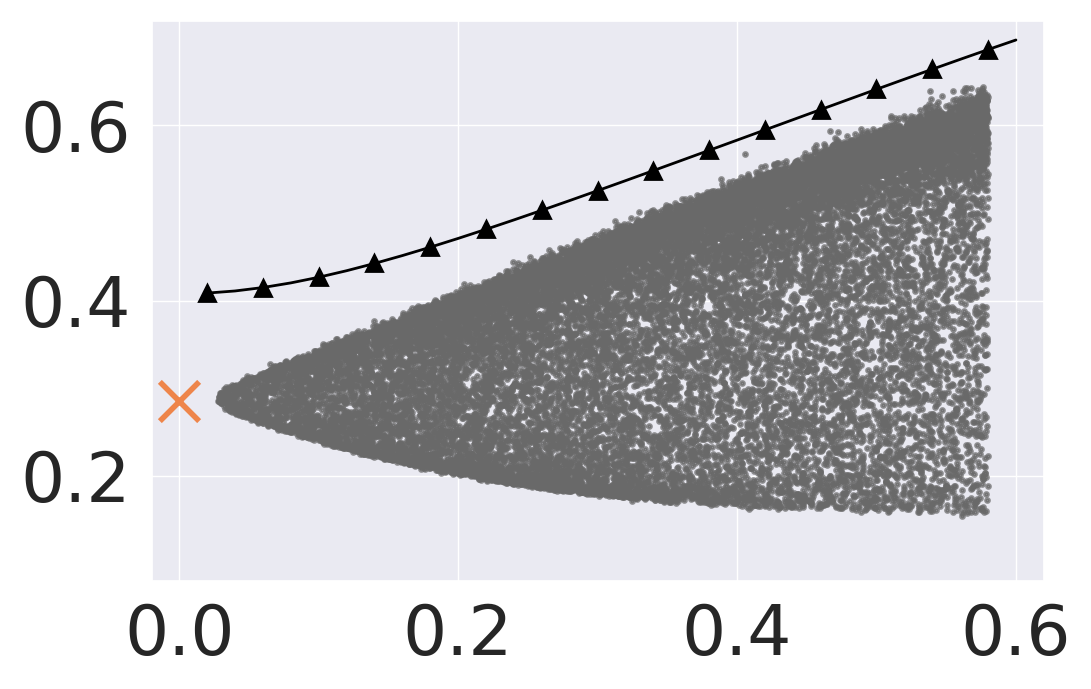}}
\vspace{-0.5em}
 	\centerline{\footnotesize{~~Hellinger Distance}}
 \vspace{-0.5em}
 \end{minipage}
}
\subfigure{
\centerline{\includegraphics[width=0.7\textwidth]{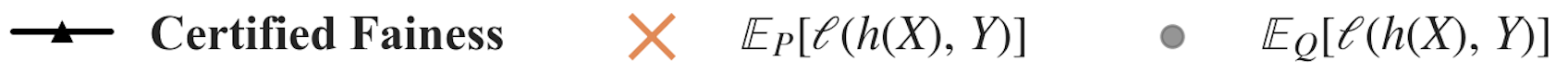}}}
\vspace{-1.8em}
\caption{\small Certified fairness with \shiftingone. 
Grey points are results on generated distributions ($\gQ$) and the black line is our fairness certificate based on \Cref{thm:shiftingone}. We observe that our fairness certificate is usually tight.}
\label{fig:shiftingone}
\vspace{-1.2em}
\end{figure}

\paragraph{Dataset \& Model.} We validate our certified fairness on \textit{six} real-world datasets: Adult~\cite{asuncion2007uci}, Compas~\cite{angwin2016machine},  Health~\cite{healthdataset}, Lawschool~\cite{wightman1998lsac}, Crime~\cite{asuncion2007uci}, and German~\cite{asuncion2007uci}.
Details on the datasets and data processing steps are provided in \Cref{sec:datasets}.
Following the standard setup of fairness evaluation in the literature~\cite{ruoss2020learning,roh2021sample,marity2021does,shekhar2021adaptive}, we consider the scenario that the sensitive attributes and labels take binary values.
The ReLU network composed of 2 hidden layers of size 20 is used for all datasets.

\paragraph{Fairness Certification.} We perform vanilla model training and then 
leverage our fairness certification framework to calculate the fairness certificate.
Concretely, we input the trained model information on $\gP$
and the framework would output the fairness certification for both \shiftingone and \shiftingtwo scenarios following \Cref{thm:shiftingone} and \Cref{thm:shiftingtwo}, respectively.

Code, model, and all experimental data are publicly available at \url{https://github.com/AI-secure/Certified-Fairness}.



\vspace{-0.7em}
\subsection{Certified Fairness with \shiftingonecapital}
\label{sec:exp_sensitive_shifting}
\vspace{-0.7em}

\paragraph{Generating Fair Distributions.}
To evaluate how well our certificates capture the fairness risk in practice, we compare our certification bound with the empirical loss evaluated on randomly generated $30,000$ fairness constrained distributions $\gQ$ shifted from  $\gP$.
The detailed steps for generating fairness constrained distributions $\gQ$ are provided in \Cref{sec:fair_gen}.
Under \shiftingone, since each subpopulation divided by the sensitive attribute and label does not change (\Cref{subsec:formulation}),
we tune only the portion of each subpopulation $q_{s,y}$ satisfying the base rate fairness constraint, and then sample from each subpopulation of $\gP$ individually according to the proportion $q_{s,y}$. In this way, our protocols can generate distributions with different combinations of subpopulation portions. 
If the classifier is biased toward one subpopulation (i.e., it achieves high accuracy in the group but low accuracy in others), the worst-case accuracy on generated distribution is low since the portion of the biased subpopulation in the generated distribution can be low; 
in contrast, a fair classifier which performs uniformly well for each group can achieve high worst-case accuracy (high certified fairness). 
Therefore, we believe that our protocols can demonstrate real-world training distribution bias as well as reflect the model’s unfairness and certification tightness in real-world scenarios.

\vspace{-1em}

\paragraph{Results.}
We report the classification error (Error) and BCE loss as the evaluation metric.
\Cref{fig:shiftingone} illustrates the certified fairness on Adult, Compas, Health, and Lawschool under \shiftingone. More results on two relatively small datasets (Crime, German) are shown in \Cref{sec:app_results}.
From the results, we see that our certified fairness is tight in practice.

\vspace{-0.7em}
\subsection{Certified Fairness with \shiftingtwocapital}
\label{sec:exp_general_shifting}
\vspace{-0.7em}

In the \shiftingtwo scenario, we similarly randomly generate $30,000$ fair distributions $\gQ$ shifted from  $\gP$.
Different from \shiftingone, the distribution conditioned on sensitive attribute $X_s$ and label $Y$ can also change in this scenario.
Therefore, we construct another distribution $\gQ^\prime$ disjoint with  $\gP$ on non-sensitive attributes and mix $\gP$ and $\gQ^\prime$ in each subpopulation individually guided by mixing parameters satisfying fair base rate constraint.
Detailed generation steps are given in \Cref{sec:fair_gen}.
Since the fairness certification for \shiftingtwo requires bounded loss, we select classification error (Error) and Jensen-Shannon loss (JSD Loss) as the evaluation metric.
\Cref{fig:shiftingtwo} illustrates the certified fairness with classification error metric under \shiftingtwo. Results of JSD loss and more results on two relatively small datasets (Crime, German) are  in \Cref{sec:app_results}.

 \vspace{-0.7em}
\subsection{Certified Fairness with Additional Non-Skewness Constraints}
\label{sec:exp_more_cons}
 \vspace{-0.7em}
 
In \Cref{subsec:formulation}, we discussed that to represent different real-world scenarios we can add more constraints such as \Cref{eqn:skewness} to prevent the skewness of $\gQ$, which can be flexibly incorporated into our certificate framework.
Concretely, for \shiftingone, we only need to add one more box constraint\footnote{Note that such modification is only viable when sentive attributes take binary values, which is the typical scenario in the literature of fairness evaluation~\cite{ruoss2020learning,roh2021sample,marity2021does,shekhar2021adaptive}.\label{specify}} $0.5-\Delta_s/2 \le k_s \le 0.5 + \Delta_s/2$ where $\Delta_s$ is a parameter controlling the skewness of $\gQ$, which still guarantees convexity.
For general shifting, we only need to modify the region partition step\footref{specify}, where we split $[0.5-\Delta_s/2, 0.5 + \Delta_s/2]$ instead of $[0,1]$.
The certification results with additional constraints are in \Cref{fig:different_k_sensitive,fig:different_k_general}, which suggests that if the added constraints are strict (i.e., smaller $\Delta_s$), the bound is tighter.
More constraints w.r.t. labels can also be handled by our framework and the corresponding results as well as results on more datasets are in \Cref{sec:app_results_nonskew}.

\begin{figure}[t]
\subfigure{
    \rotatebox{90}{\hspace{-0.9em}\footnotesize{Error}}
    \begin{minipage}{0.24\linewidth}
    \centerline{\footnotesize{\quad ADULT}}
 	\vspace{1pt}
 	\centerline{\includegraphics[width=1.0\textwidth]{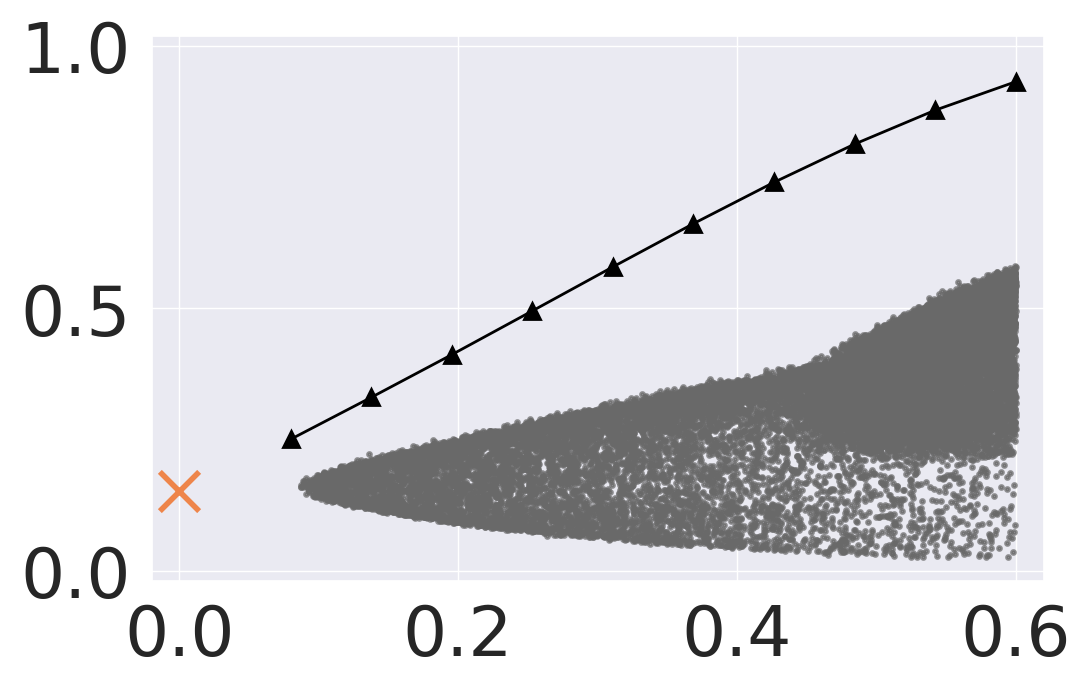}}
 	\vspace{-0.5em}
 	\centerline{\footnotesize{~~~Hellinger Distance}}
 	\vspace{-1.1em}
 \end{minipage}
 \begin{minipage}{0.24\linewidth}
    \centerline{\footnotesize{\quad COMPAS}}
 	\vspace{1pt}
 	\centerline{\includegraphics[width=1.0\textwidth]{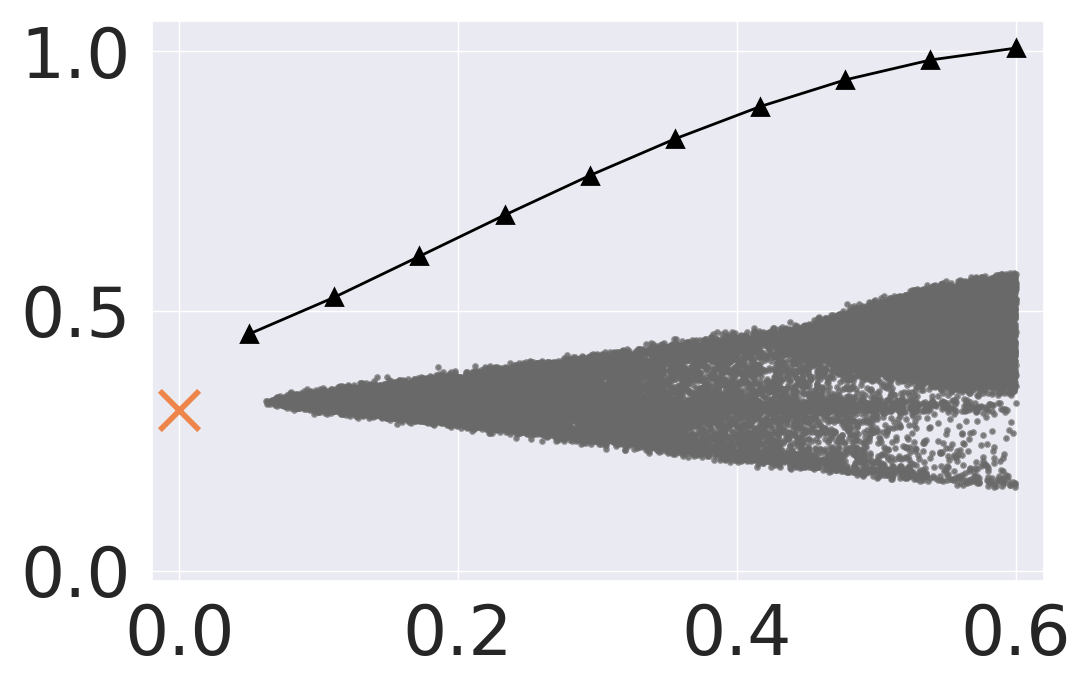}}
 	\vspace{-0.5em}
 	\centerline{\footnotesize{~~~Hellinger Distance}}
 	\vspace{-1.1em}
 \end{minipage}
 \begin{minipage}{0.24\linewidth}
    \centerline{\footnotesize{\quad HEALTH}}
 	\vspace{1pt}
 	\centerline{\includegraphics[width=1.0\textwidth]{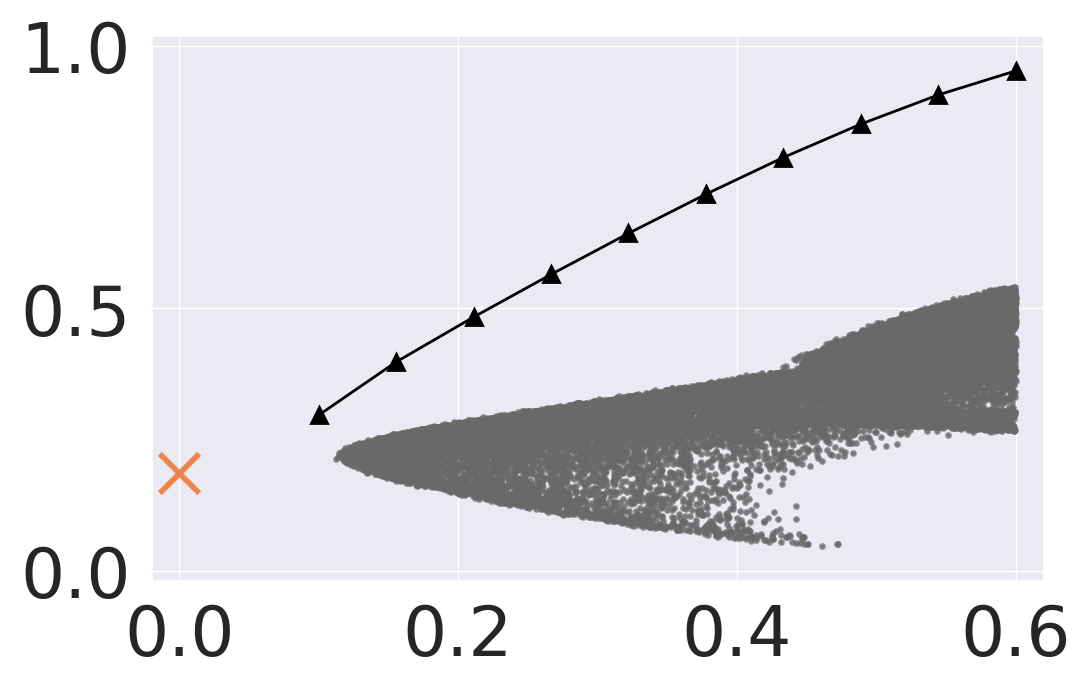}}
 	\vspace{-0.5em}
 	\centerline{\footnotesize{~~~Hellinger Distance}}
 	\vspace{-1.1em}
 \end{minipage}
 \begin{minipage}{0.24\linewidth}
    \centerline{\footnotesize{\quad LAW SCHOOL}}
 	\vspace{1pt}
 	\centerline{\includegraphics[width=1.0\textwidth]{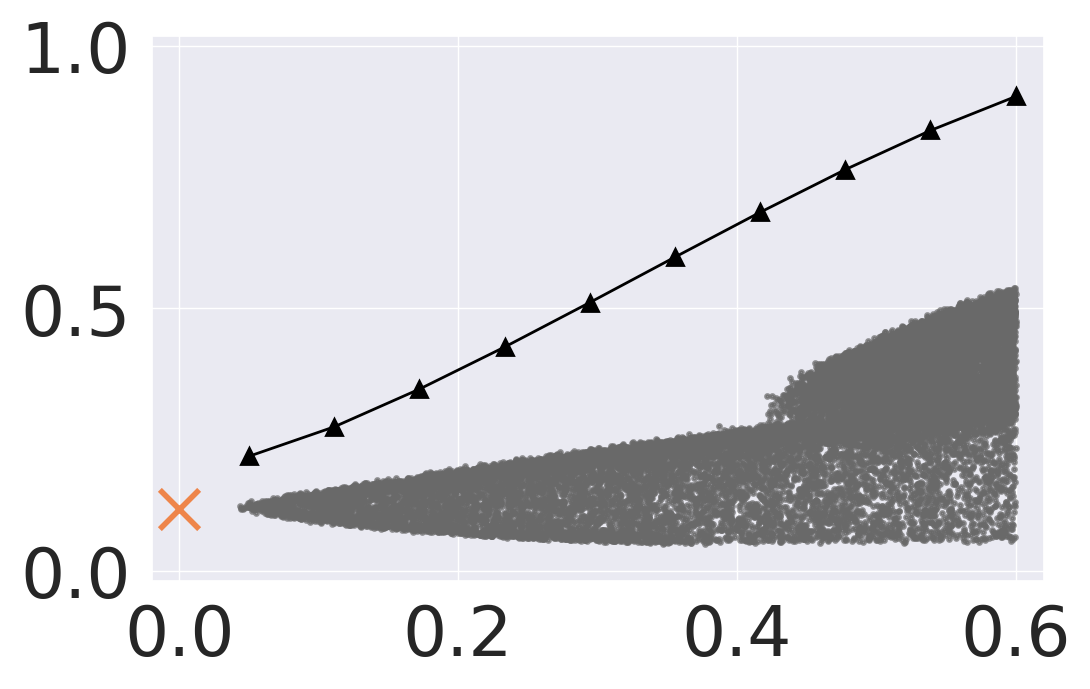}}
 	\vspace{-0.5em}
 	\centerline{\footnotesize{~~~Hellinger Distance}}
 	\vspace{-1.1em}
 \end{minipage}
}

\subfigure{
\centerline{\includegraphics[width=0.7\textwidth]{Figures/legend_1.png}}}
\vspace{-1.9em}
\caption{\small Certified fairness with \shiftingtwo. 
Grey points are results on generated distributions ($\gQ$) and the black line is our fairness certificate based on \Cref{thm:shiftingtwo}. We observe that our fairness certificate is non-trivial.
}
\label{fig:shiftingtwo}
\vspace{-1.5em}
\end{figure}
\begin{figure}[t]
    \centering
    \subfigure[Sentitive Shifting]{
    \includegraphics[width=0.32\linewidth]{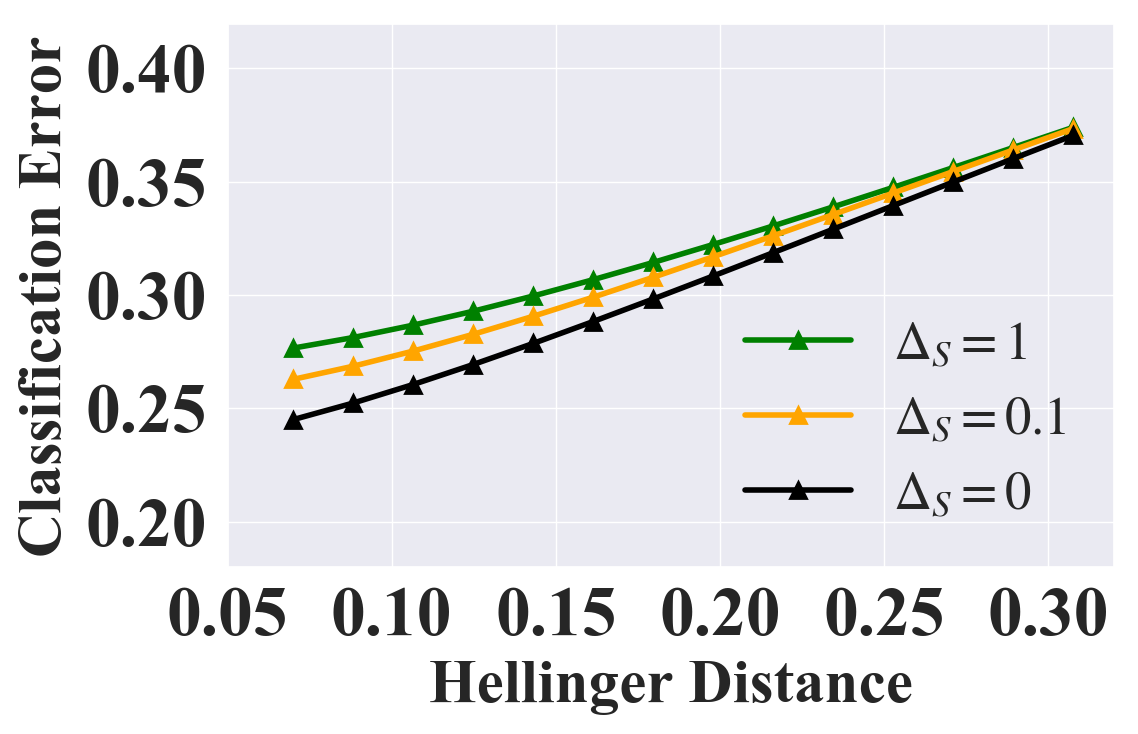}
    \vspace{-1em}
    \label{fig:different_k_sensitive}}\hfill
    \subfigure[General Shifting]{
    \includegraphics[width=0.32\linewidth]{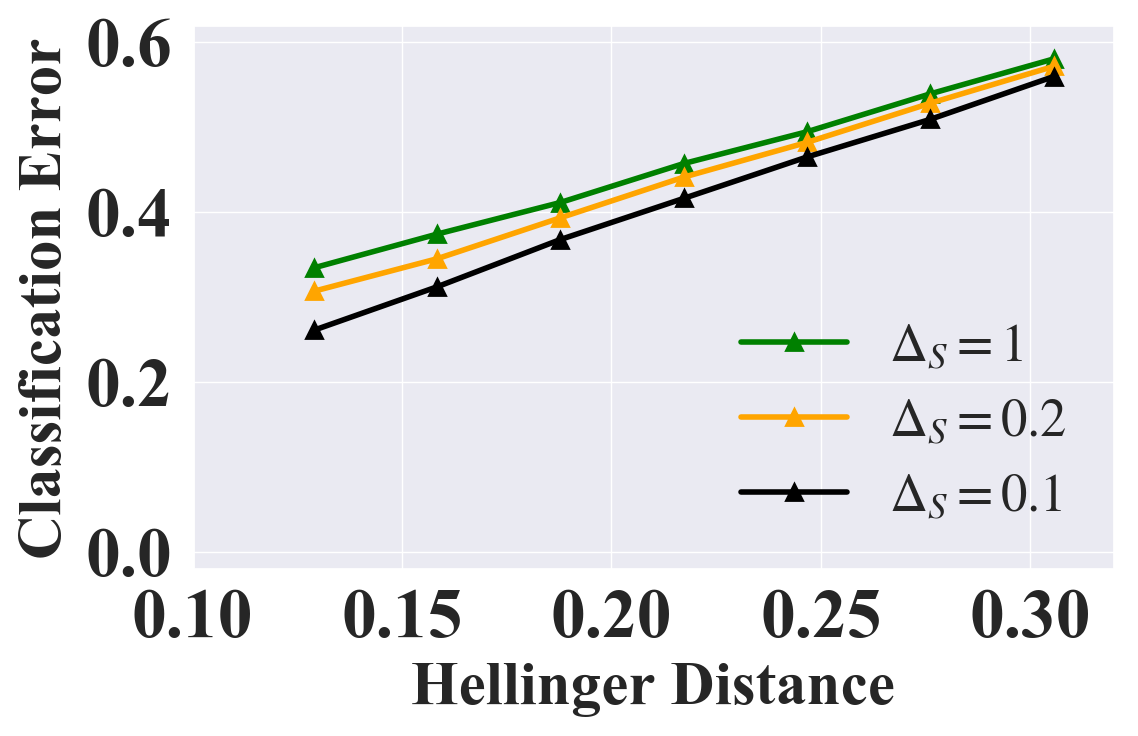}
    \vspace{-1em}
    \label{fig:different_k_general}}\hfill
    \subfigure[Comparison with WRM]{
    \includegraphics[width=0.32\linewidth]{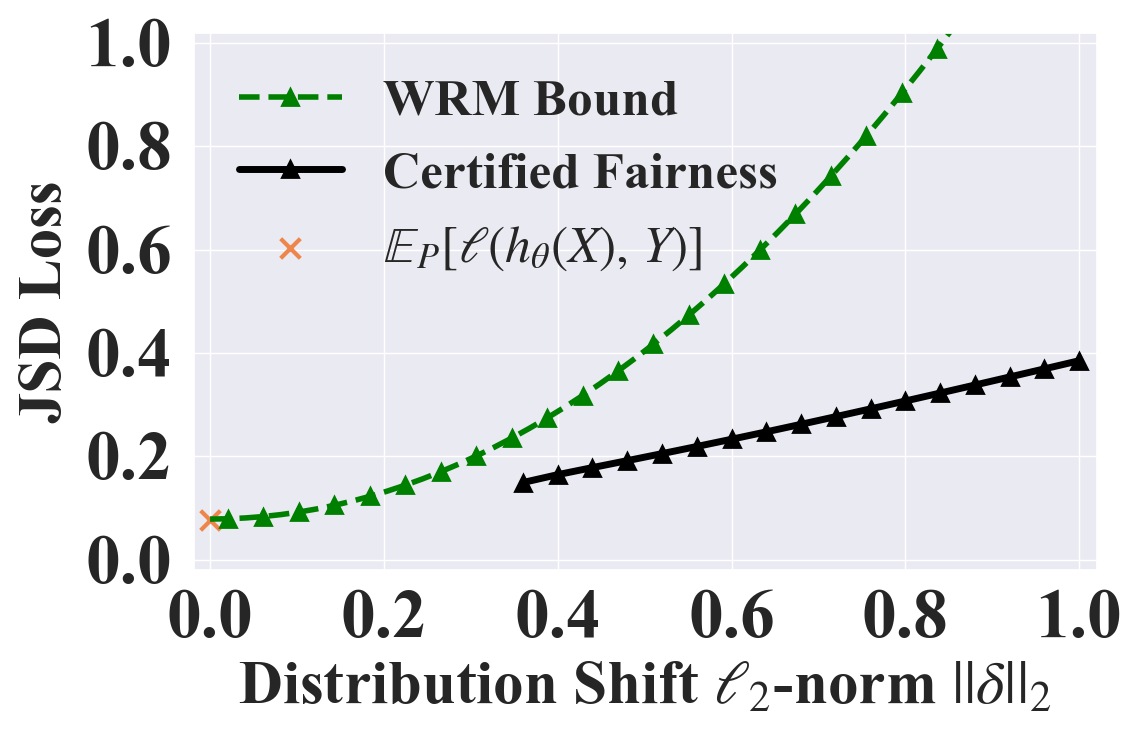} 
    \vspace{-1em}
    \label{fig:comparison}}\hfill
    \vspace{-1.0em}
    \caption{\small
    Certified fairness with additional non-skewness constraints on Adult dataset is shown in (a) (b). $\Delta_s$ controls the skewness of  $\gQ$ ($| \Pr_{(X,Y)\sim \gQ}[X_s=0]-\Pr_{(X,Y)\sim \gQ}[X_s=1] | \le \Delta_s$).
    More analysis in \Cref{sec:exp_more_cons}.
    In (c), we compare our certified fairness bound with the distributional robustness bound \cite{sinha2017certifying}.
    More analysis in \Cref{sec:exp_comp_w_gen}.
    }
    \vspace{-1.7em}
\end{figure}

 	

\vspace{-0.7em}
\subsection{Comparison with Distributional Robustness Bound}
\label{sec:exp_comp_w_gen}
\vspace{-0.7em}
To the best of our knowledge, there is no existing work providing \textit{certified fairness} on the end-to-end model performance. 
Thus, we try to compare our bound with the distributional robustness bound since both consider certain distribution shifts. However, it is challenging to directly integrate the fairness constraints into existing bounds.
Therefore, we compare with the state-of-the-art distributional robustness certification WRM~\cite{sinha2017certifying}, which solves the similar optimization problem as ours except for the fairness constraint.
For fair comparison, we construct a synthetic dataset following~\cite{sinha2017certifying}, on which there is a one-to-one correspondence between the Hellinger and Wasserstein distance used by WRM.
We randomly select one dimension as the sensitive attribute.
Since WRM has additional assumptions on smoothness of models and losses, we use JSD loss and a small ELU network with 2 hidden layers of size 4 and 2 following their setting.
More implementation details are in \Cref{sec:detail_wrm}.
Results in \Cref{fig:comparison} suggest that  1) our certified fairness bound is much tighter than WRM given the additional fairness distribution constraint and our optimization framework; 2) with additional fairness constraint, our certificate problem could be infeasible under very small distribution distances since the fairness constrained distribution $\gQ$ does not exist near the skewed original distribution $\gP$; 3) with the fairness constraint, we provide non-trivial fairness certification bound even when the distribution shift is large. 

  \vspace{-3mm}
\section{Conclusion}
\vspace{-3mm}
In this paper, we provide the first \textit{fairness certification} on end-to-end model performance, based on a fairness constrained distribution which has bounded distribution distance from the training distribution. 
We show that our fairness certification has strong connections with existing fairness notions such as group parity, and we provide an effective framework to calculate the certification under different scenarios. We provide both theoretical and empirical analysis of our fairness certification.


\paragraph{Acknowledgements.}
MK, LL, and BL are partially supported by 
the NSF grant No.1910100, NSF CNS No.2046726, C3 AI, and the Alfred P. Sloan Foundation.
 YL is partially supported by the NSF grants IIS-2143895 and IIS-2040800.

\newpage
    
\bibliographystyle{plain}
\bibliography{references}

\newpage
\appendix

\part*{Appendices} 
\DoToC

\newpage

\section{Broader Impact}
    \label{sec:boarder-impact}
    
    This paper aims to calculate a \textit{fairness certificate} under some distributional fairness constraints on the performance of an end-to-end ML model.
    We believe that the rigorous fairness certificates provided by our framework will significantly benefit and advance social fairness in the era of deep learning.
    Especially, such fairness certificate can be directly used to measure the fairness of an ML model regardless the target domain, which means that it will measure the unique property of the model itself with theoretical guarantees, and thus help people understand the risks of existing ML models.
    As a result, the ML community may develop ML training algorithms that explicitly reduce the fairness risks by regularizing on this fairness certificate.
    
    A possible negative societal impact may stem from the misunderstanding or inaccurate interpretation of our fairness certificate.
    As a first step towards distributional fairness certification, we define the fairness through the lens of worst-case performance loss on a fairness constrained distribution.
    This fairness definition may not explicitly imply an absoluate fairness guarantee under some other criterion.
    For example, it does not imply that for any possible individual input, the ML model will give fair prediction.
    We tried our best in \Cref{sec:background-and-formulatiuon} to define the certification goal, and the practitioners may need to understand this goal well to avoid misinterpretation or misuse of our fairness certification.
    
\section{Omitted Background}

    We illustrate omitted background in this appendix.

    \subsection{Hellinger Distance}
        \label{adxsubsec:hellinger}
        
        As illustrated in the beginning of \Cref{sec:method}, our framework uses Hellinger distance to bound the distributional distance.
        A formal definition of Hellinger distance is as below.
        
        \begin{definition}[Hellinger Distance]
        Let $\gP$ and $\gQ$ be distributions on $\gZ := \gX\times\gY$ that are absolutely continuous with respect to a reference measure $\mu$ with $\gP,\gQ \ll \mu$. The Hellinger distance between $\gP$ and $\gQ$ is defined as 
        \begin{equation}
            H(\gP,\gQ) := \sqrt{\frac{1}{2}\int_{\gZ} \left(\sqrt{p(z)} - \sqrt{q(z)} \right)^2 \d\mu(z)}
        \end{equation}
        where $p=\frac{d\gP}{d\mu}$ and $q=\frac{d\gQ}{d\mu}$ are the Radon-Nikodym derivatives of $\gP$ and $\gQ$ with respect to $\mu$, respectively. The Hellinger distance is independent of the choice of the reference measure $\mu$.
        \label{def:hellinger}
    \end{definition}
    
    Representative properties for the Hellinger distance are discussed in \Cref{sec:method}.

    \subsection{Thm. 2.2 in \texorpdfstring{\cite{weber2022certifying}}{Weber et al}}
    \label{adxsec:gramian-bound}
    
    As mentioned in \Cref{subsec:shifting-two}, we leverage Thm. 2.2 from \cite{weber2022certifying} to upper bound the expected loss of $h_\theta(\cdot)$ in each shifted subpopulation $\gQ_{s,y}$.
    Here we restate Thm. 2.2 for completeness.
    
    \begin{theorem}[Thm. 2.2, \cite{weber2022certifying}]
    Let $\gP'$ and $\gQ'$ denote two distributions supported on $\gX\times\gY$,
    suppose that $0\le \ell(h_\theta(X),Y) \le M$,
            then
            \begin{equation}
                \begin{aligned}
                    & \max_{\gQ', \theta} \E_{(X,Y)\sim\gQ'} [\ell(h_\theta(X),Y)] \quad \mathrm{s.t.} \quad H(\gP',\gQ') \le \rho \\
                    \le & \E_{(X,Y)\sim\gP'} [\ell(h_\theta(X),Y)] + 2C_\rho \sqrt{\sV_{(X,Y)\sim\gP'}[\ell(h_\theta(X),Y)]} + \\
                    & \rho^2 (2-\rho^2) 
                    \left(
                    M - \E_{(X,Y)\sim\gP'}[\ell(h_\theta(X),Y)] - \dfrac{\sV_{(X,Y)\sim\gP'}[\ell(h_\theta(X),Y)]}{M - \E_{(X,Y)\sim\gP'}[\ell(h_\theta(X),Y)]}
                    \right),
                \end{aligned}
            \end{equation}
            where $C_\rho = \sqrt{\rho^2 (1 - \rho^2)^2 (2 - \rho^2)}$, for any given distance bound $\rho > 0$ that satisfies
            \begin{equation}
                \rho^2 \le 1 - \left( 1 + \dfrac{(M - \E_{(X,Y)\sim\gP'}[\ell(h_\theta(X),Y)])^2}{\sV_{(X,Y)\sim\gP'}[\ell(h_\theta(X),Y)]} \right)^{-1/2}.
            \end{equation}
            \label{thm:gramian-bound}
    \end{theorem}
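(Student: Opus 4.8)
The plan is to reduce the functional optimization over distributions on $\gX\times\gY$ to a one-dimensional moment problem, and then resolve that problem by Lagrangian duality together with an extremal-moment (quadratic-majorant) argument. First I would push both $\gP'$ and $\gQ'$ forward through the scalar loss map $z=(X,Y)\mapsto t:=\ell(h_\theta(X),Y)\in[0,M]$, obtaining laws $\nu$ and $\nu'$ on $[0,M]$. Because the Hellinger distance is an $f$-divergence it obeys the data-processing inequality, so $H(\nu,\nu')\le H(\gP',\gQ')\le\rho$; moreover the objective only sees the pushforward, $\E_{(X,Y)\sim\gQ'}[\ell(h_\theta(X),Y)]=\E_{\nu'}[t]$. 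Since $\nu$ carries exactly the data that appear on the right-hand side ($\E_\nu[t]=E$, $\sV_\nu[t]=V$, $\mathrm{supp}\,\nu\subseteq[0,M]$), it suffices to upper bound $\sup\{\E_{\nu'}[t]:H(\nu,\nu')\le\rho\}$ in terms of $E,V,M,\rho$ only. This step also explains the joint $\max$ over $\theta$: for each fixed $\theta$ the inequality depends on $h_\theta$ solely through $E$ and $V$.

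Next I would change variables to the likelihood ratio $L=d\nu'/d\nu$ and set $g=\sqrt{L}\ge0$, turning the constraint $H(\nu,\nu')\le\rho$ into the two moment conditions $\E_\nu[g^2]=1$ and $\E_\nu[g]\ge 1-\rho^2$ (the latter being the Hellinger affinity), so the problem becomes $\max_{g\ge0}\E_\nu[t\,g^2]$ subject to those constraints. Using $\E_\nu[g^2]=1$ and $\E_\nu[t]=E$ I would split the objective as $E+\E_\nu[(t-E)(g^2-1)]$, isolating the fluctuation term that the variance and the range must jointly control. Because relaxing the affinity lower bound only enlarges the feasible set, the binding case is $\E_\nu[g]=1-\rho^2$, i.e.\ the Hellinger boundary.

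I would then form the Lagrangian with multiplier $\lambda$ for $\E_\nu[g^2]=1$ and $\mu\ge0$ for $\E_\nu[g]\ge1-\rho^2$ and maximize pointwise over $g\ge0$. Boundedness forces $\lambda\ge M$ (so $t-\lambda\le0$ and the quadratic in $g$ is concave); the pointwise maximizer is $g^\star=\mu/(2(\lambda-t))$, yielding the weak-duality bound $\E_{\nu'}[t]\le \E_\nu[\mu^2/(4(\lambda-t))]+\lambda-\mu(1-\rho^2)$. The remaining obstacle, and the technical heart of the argument, is that $\E_\nu[(\lambda-t)^{-1}]$ still depends on the full shape of $\nu$, whereas the target uses only $E,V,M$. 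I would remove this dependence by majorizing the convex function $t\mapsto(\lambda-t)^{-1}$ on $[0,M]$ by the tightest quadratic $a_0+a_1t+a_2t^2$ with $a_2\ge0$ and taking expectations, so that $\E_\nu[(\lambda-t)^{-1}]\le a_0+a_1E+a_2(E^2+V)$; equivalently, I would solve the extremal moment problem $\max\{\E_\nu[(\lambda-t)^{-1}]:\E_\nu t=E,\ \sV_\nu t=V,\ \mathrm{supp}\subseteq[0,M]\}$, whose optimizer is the two-point law on $\{E-\tfrac{V}{M-E},\,M\}$.

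Finally I would minimize the resulting explicit bound over $\lambda\ge M$ and $\mu\ge0$. Writing $\alpha=(1-\rho^2)^2$, the optimization collapses to the claimed closed form: the Cauchy--Schwarz-type term $2C_\rho\sqrt{V}$ with $C_\rho=\sqrt{\alpha(1-\alpha)}=\sqrt{\rho^2(1-\rho^2)^2(2-\rho^2)}$, plus the range-correction $\rho^2(2-\rho^2)\,(M-E-\tfrac{V}{M-E})$ arising from the mass that the worst case relocates to $t=M$. The feasibility hypothesis $\rho^2\le 1-(1+(M-E)^2/V)^{-1/2}$ is exactly the regime in which this two-point extremal law is admissible: setting $\pi:=V/(V+(M-E)^2)$ as the atom mass at $M$, one checks $(1+(M-E)^2/V)^{-1/2}=\sqrt{\pi}$, so the condition reads $\alpha\ge\pi$, keeping the optimal $\lambda$ and $\mu$ in their feasible ranges. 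I expect the main difficulty to be this extremal-moment step together with the bookkeeping that matches the optimized multipliers to the clean constant $C_\rho$; the reduction and duality steps are routine once set up, and checking that the two-point law attains the bound simultaneously verifies its tightness.
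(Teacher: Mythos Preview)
The paper does not prove this statement. It is Theorem~2.2 of \cite{weber2022certifying}, restated verbatim in the appendix purely ``for completeness,'' and every subsequent result in the paper invokes it as a black box. There is therefore no proof here to compare your proposal against.

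That said, your plan is a sound route to the inequality and matches the moment-problem spirit suggested by the label \texttt{gramian-bound}. The pushforward reduction via the data-processing inequality for Hellinger distance, the reparametrization $g=\sqrt{d\nu'/d\nu}$ turning the Hellinger ball into the constraints $\E_\nu[g^2]=1$ and $\E_\nu[g]\ge 1-\rho^2$, and the Lagrangian dual with pointwise maximization are all standard and correct. Your identification of the extremal two-point law on $\{E-V/(M-E),\,M\}$ with mass $\pi=V/(V+(M-E)^2)$ at $M$ is right (and the lower atom is automatically nonnegative since $V\le E(M-E)$ for any law on $[0,M]$), as is your reading of the feasibility condition as $\alpha\ge\pi$. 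One technical point worth tightening: writing $d\nu'=g^2\,d\nu$ tacitly assumes $\nu'\ll\nu$, but when the pushforward $\nu$ does not have full support on $[0,M]$ the worst $\nu'$ may place mass (in particular at $t=M$) outside $\mathrm{supp}\,\nu$. You should either handle the singular part of $\nu'$ explicitly in the duality step, or argue by approximating $\nu$ with full-support measures and passing to the limit; otherwise the weak-duality inequality you write down is not yet justified for all feasible $\nu'$.
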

    
    This theorem provides a closed-form expression that upper bounds the mean loss of $h_\theta(\cdot)$ on shifted distribution~(namely $\E_{\gQ'} [\ell(h_\theta(X),Y)]$), given bounded Hellinger distance $H(\cP,\gQ)$ and the mean $E$ and variance $V$ of loss on $\gP$ under two mild conditions: 
    (1)~the function is positive and bounded~(denote the upper bound by $M$); and (2)~the distance $H(\gP,\gQ)$ is not too large~(specifically, $H(\gP,\gQ)^2 \le \bar\gamma^2 := 1 - (1+(M-E)^2/V)^{-\frac12}$).
    Since \Cref{thm:gramian-bound} holds for arbitrary models and loss functions $\ell(h_\theta(\cdot),\cdot)$ as long as the function value is bounded by $[0, M]$, using \Cref{thm:gramian-bound} allows us to provide a generic and succinct fairness certificate in \Cref{thm:shiftingtwo} for \shiftingtwo case that holds for generic models including DNNs without engaging complex model architectures.
    Indeed, we only need to query the mean and variance under $\gP$ for the given model to compute the certificate in \Cref{thm:gramian-bound}, and this benefit is also inherited by our certification framework expressed by \Cref{thm:shiftingtwo}.
    Note that there is no tightness guarantee for this bound yet, which is also inherited by our \Cref{thm:shiftingtwo}.
    

\section{Proofs of Main Results}

    \label{adxsec:main-result}
    
    This appendix entails the complete proofs for \Cref{prop:fairness}, \Cref{thm:generic}, \Cref{thm:shiftingone}, \Cref{lemma:shiftingtwo-A}, and \Cref{thm:shiftingtwo} in the main text.
    For complex proofs such as that for \Cref{thm:shiftingtwo}, we also provide high-level illustration before going into the formal proof.

    \subsection{Proof of \texorpdfstring{\Cref{prop:fairness}}{Proposition 1}}
    \label{proof-fairness-notion}

        \begin{proof}[Proof of \Cref{prop:fairness}]
            Since each term $\Pr_{(X,Y)\sim\gQ}[h_{\theta}(X) \neq Y|Y = y, X_s = i ]$ is within $[0, \epsilon]$,
            we consider two cases: $y \neq 1$ and $y = 1$.
            If $y \neq 1$, $\Pr_{(X,Y)\sim\gQ} [h_\theta(X) = 1 | Y=y, X_s=i] \le \Pr_{(X,Y)\sim\gQ} [h_\theta(X)\neq Y | Y=y, X_s=i] \le \epsilon$ and so will be their differences for $X_s=i$ and $X_s=j$.
            If $y = 1$, $\Pr_{(X,Y)\sim\gQ} [h_\theta(X) = 1 | Y=y, X_s=i] = 1 - \Pr_{(X,Y)\sim\gQ} [h_\theta(X) \neq Y | Y=y, X_s=i] \in [1-\epsilon,1]$, and also the differences for $X_s=i$ and $X_s=j$ are always within $\epsilon$.
            This proves $\epsilon$-EO.

        Now consider DP.
        We notice that for any $a$,
        \begin{equation}
            \begin{small}
                \Pr_{(X,Y)\sim\gQ} [h_\theta(X)=1|X_s=a]
                =
                \sum_{y=1}^C \Pr_{(X,Y)\sim\gQ} [h_\theta(X)=1|Y=y,X_s=a] \cdot \Pr_{(X,Y)\sim\gQ} [Y=y|X_s=a].
            \end{small}
        \end{equation}
        Thus,
        \begin{align*}
            &  \Big|\Pr_{(X,Y)\sim\gQ}[h_{\theta}(X) = 1|X_s = i]-
        \Pr_{(X,Y)\sim\gQ}[h_{\theta}(X) = 1|X_s = j]\Big| \\
            \overset{(*)}{\le} & \sum_{y=1}^C \Big|\Pr_{(X,Y)\sim\gQ}[h_{\theta}(X) = 1|Y=y,X_s = i] - \Pr_{(X,Y)\sim\gQ}[h_{\theta}(X) = 1|Y=y,X_s = j]\Big| \\ & \hspace{20em} \cdot \Pr_{(X,Y)\sim\gQ}[Y=y|X_s = i] \\
            \le & \sum_{y=1}^C \epsilon \Pr_{(X,Y)\sim\gQ}[Y=y|X_s = i] = \epsilon
        \end{align*}
        which proves $\epsilon$-DP,
        where $(*)$ leverages the fair base rate property of $\gQ$ which gives $\Pr_{(X,Y)\sim\gQ} [Y=y|X_s=i] = \Pr_{(X,Y)\sim\gQ} [Y=y|X_s=j]$.

        \end{proof}

    \subsection{Proof of \texorpdfstring{\Cref{thm:generic}}{Theorem 1}}
        \label{adxsubsec:pf-thm-1}
        
        \begin{proof}[Proof of \Cref{thm:generic}]
            We first prove the key \cref{eq:hellinger-prop}.
                \begin{align}
                    H(\gP,\gQ) \le \rho &
                    \iff H^2(\gP,\gQ) \le \rho^2 \nonumber \\
                    & \iff \dfrac12 \int_{\gZ} \left(\sqrt{p(z)} - \sqrt{q(z)}\right)^2 \d \mu(z) \le \rho^2 \nonumber \\
                    & \iff \dfrac12 \left(\int_{\gZ} p(z)\d\mu(z) + \int_{\gZ} q(z)\d\mu(z)\right) - \int_{\gZ} \sqrt{p(z)q(z)} \d\mu(z) \le \rho^2 \nonumber \\
                    & \iff \int_{\gZ} \sqrt{p(z)q(z)} \d\mu(z) \ge 1 - \rho^2 \nonumber \\
                    & \iff \sum_{i=1}^N \int_{\gZ_i} \sqrt{p_iq_i} \cdot \sqrt{p_i(z)q_i(z)} \d \mu(z) \ge 1 - \rho^2 \nonumber \\
                    & \iff \sum_{i=1}^N \sqrt{p_iq_i} \left(1 - H^2(\gP_i,\gQ_i)  \right) \ge 1 - \rho^2 
                    \label{eq:helling-prop-pf}
                \end{align}
            where $p_i(\cdot)$ and $q_i(\cdot)$ are density functions of subpopulation distributions $\gP_i$ and $\gQ_i$ respectively.
            
            Then, we show that any feasible solution of \Cref{eq:generic-prob} satisfies the constraints in \Cref{cons-opt-generic}.
            We let $\gQ^\star$ and $\theta^\star$ denote a feasible solution of \Cref{eq:generic-prob}, i.e.,
            \begin{equation}
                H(\gP,\gQ^\star) \le \rho,
                \quad
                e_j(\gP,h_{\theta^\star}) \le v_j \, \forall j \in [L],
                \quad
                g_j(\gQ^\star) \le u_j \, \forall j \in [M].
            \end{equation}
            We let $\{q_i^\star\}_{i=1}^N$ denote the proportions of $\gQ^\star$ within each support partition $\gZ_i$, and $\{\gQ_i^\star\}_{i=1}^N$ the $\gQ^\star$ in each subpopulation. 
            By \Cref{eq:helling-prop-pf}, we have $1 - \rho^2 - \sum_{i=1}^N \sqrt{p_iq_i^\star} (1 - \rho_i^2) \le 0$ where $\rho_i = H^2(\gP_i, \gQ_i^\star)$.
            Note that by definition, $\sum_{i=1}^N q_i^\star = 1$ and $\forall i\in [N], q_i^\star \ge 0, \rho_i \ge 0$.
            Furthermore, 
            by the implication relations stated in \Cref{thm:generic}, for any $j\in [L]$, $e_j'(\{\gP_i\}_{i=1}^N, \{p_i\}_{i=1}^N, h_{\theta^\star}) \le v_j'$;
            and for any $j\in[M]$, $g_j'(\{\gQ_i^\star\}_{i=1}^N, \{q_i^\star\}_{i=1}^N) \le u_j'$.
            To this point, we have shown $\gQ^\star$ and $\theta^\star$ satisfy all constraints in \Cref{cons-opt-generic}, i.e., $\gQ^\star$ and $\theta^\star$ is a feasible solution of \Cref{cons-opt-generic}.
            Since \Cref{cons-opt-generic} expresses the optimal~(maximum) solution, \Cref{cons-opt-generic}~(in \Cref{thm:generic}) $\ge$ \Cref{eq:generic-prob}.
        \end{proof}
    
    \subsection{Proof of \texorpdfstring{\Cref{thm:shiftingone}}{Theorem 2}}
        \label{adxsubsec:pf-thm-2}
        
        \begin{proof}[Proof of \Cref{thm:shiftingone}]
            The proof of \Cref{thm:shiftingone} is composed of three parts:
            (1)~the optimization problem provides a fairness certificate for \Cref{prob:certified-fairness-shifting-one};
            (2)~the certificate is tight;
            and (3)~the optimization problem is convex.
            
            \begin{enumerate}[label={(\bfseries\arabic*)},leftmargin=*]
                \item 
                
                Suppose the maximum of \Cref{prob:certified-fairness-shifting-one} is attained with the test distribution $\gQ^\star$ in the \shiftingone setting,
                then we decompose both $\gP$ and $\gQ^\star$ according to both the sensitive attribute and the label:
                \begin{equation}
                    \gP = \sum_{s=1}^S \sum_{y=1}^C p_{s,y} \gP_{s,y},
                    \quad
                    \gQ^\star = \sum_{s=1}^S \sum_{y=1}^C q^\star_{s,y} \gQ^\star_{s,y}.
                \end{equation}
                Since $\gQ^\star$ is a fair base rate distribution, for any $i,j\in [S]$, $b^{\gQ^\star}_{i,y} = b^{\gQ^\star}_{j,y}$ where $b^{\gQ^\star}_{s,y} = \Pr_{(X,Y)\sim\gQ^\star} [Y=y|X_s=s]$.
                As a result,
                $\Pr_{(X,Y)\sim\gQ^\star} [Y=y|X_s=s] = \Pr_{(X,Y)\sim\gQ^\star} [Y=y]$.
                Now we define
                \begin{equation}
                    k_s^\star := \Pr_{(X,Y)\sim\gQ^\star} [X_s=s],
                    \quad
                    r^\star_y := \Pr_{(X,Y)\sim\gQ^\star} [Y=y],
                \end{equation}
                and then
                \begin{equation}
                    q_{s,y}^\star = \Pr_{(X,Y)\sim\gQ^\star} [X_s=s,Y=y]
                    = \Pr_{(X,Y)\sim\gQ^\star} [X_s=s] \cdot \Pr_{(X,Y)\sim\gQ^\star} [Y=y|X_s=s] = k_s^\star r^\star_y.
                    \label{pf-thm-2-eq1}
                \end{equation}
                By the distance constraint in \Cref{prob:certified-fairness-shifting-one}~(namely $H(\gP,\gQ^\star) \le \rho$) and \Cref{eq:helling-prop-pf}, we have
                \begin{equation}
                    \sum_{s=1}^S \sum_{y=1}^C \sqrt{p_{s,y} q^\star_{s,y}} \left(1 - H^2(\gP_{s,y}, \gQ^\star_{s,y}) \right) \ge 1 - \rho^2.
                \end{equation}
                Since there is only \shiftingone, $H^2(\gP_{s,y},\gQ^\star_{s,y}) = 0$, given \Cref{pf-thm-2-eq1}, we have
                \begin{equation}
                    \sum_{s=1}^S \sum_{y=1}^C \sqrt{p_{s,y} k_s^\star r_y^\star} \ge 1 - \rho^2.
                \end{equation}
                Now, we can observe that the $k_s^\star$ and $r_y^\star$ induced by $\gQ^\star$ satisfy all constraints of \Cref{prob:certified-fairness-shifting-one}.
                For the objective, 
                    \begin{align*}
                    &\text{Objective in \Cref{thm:shiftingone}} \\
                    =& \sum_{s=1}^S \sum_{y=1}^C k_s^\star r_s^\star \E_{(X,Y)\sim\gP_{s,y}} [\ell(h_\theta(X),Y)] \\
                    = & \sum_{s=1}^S \sum_{y=1}^C q^\star_{s,y} \E_{(X,Y)\sim\gQ^\star_{s,y}} [\ell(h_\theta(X),Y)] & \text{(by \Cref{pf-thm-2-eq1} and $H^2(\gP_{s,y},\gQ^\star_{s,y}) = 0$)} \\
                    = & \E_{(X,Y)\sim\gQ^\star} [\ell(h_\theta(X),Y)] \\
                    = & \text{Optimal value of \Cref{prob:certified-fairness-shifting-one}}.
                    \end{align*}
                Therefore, the \emph{optimal} value of \Cref{thm:shiftingone} will be larger or equal to the optimal value of \Cref{prob:certified-fairness-shifting-one} which concludes the proof of the first part.
                
                \item 
                Suppose the optimal value of \Cref{thm:shiftingone} is attained with $k_s^\star$ and $r_y^\star$.
                We then construct $\gQ^\star = \sum_{s=1}^S \sum_{y=1}^C k_s^\star r_y^\star \gP_{s,y}$.
                We now inspect each constraint of \Cref{prob:certified-fairness-shifting-one}.
                The constraint $\dist(\gP,\gQ^\star) \le \rho$ is satisfied because $1-\rho^2 - \sum_{s=1}^S \sum_{y=1}^C \sqrt{p_{s,y}k_s^\star r_y^\star} \le 0$ is satisfied as a constraint of \Cref{thm:shiftingone}.
                Apparently, $\gP_{s,y} = \gQ^\star_{s,y}$.
                Then, $\gQ^\star$ is a fair base rate distribution because 
                \begin{equation}
                    b_{s,y}^{\gQ^\star} = \Pr_{(X,Y)\sim\gQ^\star} [Y=y|X_s=s] = \dfrac{k_s^\star r_y^\star}{k_s^\star} = r_y^\star
                    \label{pf-thm2-eq3}
                \end{equation}
                is a constant across all $s\in [S]$.
                Thus, $\gQ^\star$ satisfies all constraints of \Cref{prob:certified-fairness-shifting-one} and 
                \begin{equation}
                    \begin{aligned}
                        & \text{Optimal objective of \Cref{prob:certified-fairness-shifting-one}} \\
                        \ge & \E_{(X,Y)\sim\gQ^\star} [\ell(h_\theta(X),Y)] \\
                        = & \sum_{s=1}^S \sum_{y=1}^C k_s^\star r_y^\star \E_{(X,Y)\sim\gP_{s,y}} [ \ell(h_\theta(X),Y)] \\
                        = & \sum_{s=1}^S \sum_{y=1}^C k_s^\star r_y^\star E_{s,y} = \text{Optimal objective of \Cref{thm:shiftingone}}.
                    \end{aligned}
                \end{equation}
                Combining with the conclusion of the first part, we know optimal values of \Cref{thm:shiftingone} and \Cref{prob:certified-fairness-shifting-one} match, i.e., the certificate is tight.
                
                \item Inspecting the problem definition in \Cref{thm:shiftingone}, we find the objective and all constraints but the last one are linear.
                Therefore, to prove the convexity of the optimization problem, we only need to show that the last constraint
                \begin{equation}
                    1 - \rho^2 - \sum_{s=1}^S \sum_{y=1}^C \sqrt{p_{s,y}k_sr_y} \le 0
                    \label{pf-thm2-eq2}
                \end{equation}
                is convex with respect to $k_s$ and $r_y$.
                Given two arbitrary feasible pairs of $k_s$ and $r_y$ satisfying \Cref{pf-thm2-eq2}, namely $(k_s^a, r_y^a)$ and $(k_s^b, r_y^b)$, we only need to show that $(k_s^m, r_y^m)$ also satisfies \Cref{pf-thm2-eq2}, where $k_s^m = (k_s^a + k_s^b)/2$, $r_y^m = (r_y^a + r_y^b)/2$.
                Indeed,
                    \begin{align*}
                        & 1 - \rho^2 - \sum_{s=1}^S \sum_{y=1}^C \sqrt{p_{s,y} k_s^m r_y^m} \\
                        = & 1 - \rho^2 - \dfrac12 \sum_{s=1}^S \sum_{y=1}^C \sqrt{p_{s,y}} \cdot \sqrt{k_s^a + k_s^b} \cdot \sqrt{r_y^a + r_y^b} \\
                        \le & 1 - \rho^2 - \dfrac12 \sum_{s=1}^S \sum_{y=1}^C \sqrt{p_{s,y}} \cdot \left(\sqrt{k_s^a r_y^a} + \sqrt{k_s^b r_y^b}\right) & \text{(Cauchy's inequality)} \\
                        = & \dfrac12 \left( 1 - \rho^2 \sum_{s=1}^{S} \sum_{y=1}^C \sqrt{p_{s,y}k_s^a r_y^a} \right) + \dfrac12 \left( 1 - \rho^2 \sum_{s=1}^{S} \sum_{y=1}^C \sqrt{p_{s,y}k_s^b r_y^b} \right) \\
                        \le & 0.
                    \end{align*}
            \end{enumerate}
        \end{proof}
    
    \subsection{Proof of \texorpdfstring{\Cref{lemma:shiftingtwo-A}}{Lemma 3.1}}
        \label{adxsubsec:pf-lemma-4-2}
        
        \begin{proof}[Proof of \Cref{lemma:shiftingtwo-A}]
            The proof of \Cref{lemma:shiftingtwo-A} is composed of two parts: (1)~the optimization problem provides a fairness certificate for \Cref{prob:certified-fairness-shifting-two}; and (2)~the certificate is tight.
            The high-level proof sketch is similar to the proof of \Cref{thm:shiftingone}.
            
            \begin{enumerate}[label={(\bfseries\arabic*)},leftmargin=*]
                \item 
                Suppose that the maximum of \Cref{prob:certified-fairness-shifting-two} is attained with the test distribution $\gQ^\star$ under the \shiftingtwo setting, then we decompose both $\gP$ and $\gQ^\star$ according to both the sensitive attribute and the label:
                \begin{equation}
                    \gP = \sum_{s=1}^S \sum_{y=1}^C p_{s,y}\gP_{s,y},
                    \quad
                    \gQ^\star = \sum_{s=1}^S \sum_{y=1}^C q^\star_{s,y} \gQ^\star_{s,y}.
                \end{equation}
                Unlike \shiftingone setting, in \shiftingtwo setting, here the subpopulation of $\gQ^\star$ is $\gQ^\star_{s,y}$ instead of $\gP_{s,y}$ due to the existence of distribution shifting within each subpopulation.
                
                Following the same argument as in the first part proof of \Cref{thm:shiftingone}, since $\gQ^\star$ is a fair base rate distribution, we can define
                \begin{equation}
                    k_s^\star := \Pr_{(X,Y)\sim\gQ^\star} [X_s = s],
                    \quad
                    r_y^\star := \Pr_{(X,Y)\sim\gQ^\star} [Y=y],
                \end{equation}
                and write 
                \begin{equation}
                    \gQ^\star := \sum_{s=1}^S \sum_{y=1}^C k_s^\star r_y^\star \gQ^\star_{s,y}
                \end{equation}
                since $q_{s,y}^\star = k_s^\star r_y^\star$.
                We also define $\rho_{s,y}^\star = H(\gP_{s,y}, \gQ^\star_{s,y})$.
                Now we show these $k_s^\star, r_y^\star, \gQ_{s,y}^\star, \rho_{s,y}^\star$ along with model parameter $\theta$ constitute a feasible point of \Cref{eq:general_opt_prob_general_shifting_A}, and the objectives of \Cref{eq:general_opt_prob_general_shifting_A} and \Cref{prob:certified-fairness-shifting-one} are the same given $\gQ^\star$.
                \begin{itemize}[leftmargin=*]
                    \item (Feasibility) \\
                    There are three constraints in \Cref{eq:general_opt_prob_general_shifting_A}.
                    By the definition of $k_s^\star$ and $r_y^\star$, naturally \Cref{general_opt_prob_general_shifting_con1_A} is satisfied.
                    Then, according to \Cref{eq:helling-prop-pf} and the definifition of $\rho^\star_{s,y}$ above, \Cref{general_opt_prob_general_shifting_con2_A} and \Cref{general_opt_prob_general_shifting_con3_A} are satisfied.
                    
                    \item (Objective Equality) \\
                    \begin{equation}
                        \begin{aligned}
                            \text{\Cref{general_opt_prob_general_shifting_obj_A}}
                            & = \sum_{s=1}^S \sum_{y=1}^C k_s^\star r_y^\star \E_{(X,Y)\sim\gQ^\star_{s,y}} [\ell(h_\theta(X),Y)]  \\
                            & = \sum_{s=1}^S \sum_{y=1}^C q_{s,y}^\star \E_{(X,Y)\sim\gQ^\star_{s,y}} [\ell(h_\theta(X),Y)]  
                            \\
                            & = \E_{(X,Y)\sim\gQ^\star} [\ell(h_\theta(X),Y)]  
                            = \text{Optimal value of \Cref{prob:certified-fairness-shifting-two}}.
                        \end{aligned}
                    \end{equation}
                \end{itemize}
                As a result, the optimal value of \Cref{eq:general_opt_prob_general_shifting_A} is larger than or equal to the optimal value of \Cref{prob:certified-fairness-shifting-two}, and hence the optimization problem encoded by \Cref{eq:general_opt_prob_general_shifting_A} provides a fairness certificate.
            
                \item 
                To prove the tightness of the certificate, we only need to show that the optimal value of the optimization problem in \Cref{eq:general_opt_prob_general_shifting_A} is also attainable by the original \Cref{prob:certified-fairness-shifting-two}.
                
                Suppose that the optimal objective of \Cref{eq:general_opt_prob_general_shifting_A} is achieved by optimizable parameters $k_s^\star, r_y^\star, \gQ^\star$, and $\rho_{s,y}^\star$.
                Then, we construct $\gQ^\dagger = \sum_{s=1}^S \sum_{y=1}^C k_s^\star r_y^\star \gQ^\star_{s,y}$.
                We first show that $\gQ^\dagger$ is a feasible point of \Cref{prob:certified-fairness-shifting-two}, and then show that the objective given $\gQ^\dagger$ is equal to the optimal objective of \Cref{eq:general_opt_prob_general_shifting_A}.
                \begin{itemize}[leftmargin=*]
                    \item (Feasibility) \\
                    There are two constraints in \Cref{prob:certified-fairness-shifting-two}: the bounded distance constraint and the fair base rate constraint.
                    The bounded distance constraint is satisfied due to applying \Cref{eq:helling-prop-pf} along with \Cref{general_opt_prob_general_shifting_con2_A,general_opt_prob_general_shifting_con3_A}.
                    The fair base rate constraint is satisfied following the same deduction as in \Cref{pf-thm2-eq3}.
                    
                    \item (Objective Equality) \\
                    \begin{equation*}
                    \begin{aligned}
                        \text{Objective \Cref{prob:certified-fairness-shifting-two}}
                        & =
                        \E_{(X,Y)\sim\gQ^\dagger} [\ell(h_\theta(X),Y)]
                        =
                        \sum_{s=1}^S \sum_{y=1}^C k_s^\star r_y^\star \E_{(X,Y)\sim\gQ^\star_{s,y}} [\ell(h_\theta(X),Y)] \\
                        & = 
                        \text{Optimal value of \Cref{eq:general_opt_prob_general_shifting_A}}.
                    \end{aligned}
                    \end{equation*}
                \end{itemize}
                Thus, the optimal value of the optimization problem in \Cref{eq:general_opt_prob_general_shifting_A} is attainable also by the original \Cref{prob:certified-fairness-shifting-two} which concludes the tightness proof.
            \end{enumerate}
        \end{proof}
    
    \subsection{Proof of \texorpdfstring{\Cref{thm:shiftingtwo}}{Theorem 3}}
        \label{adxsubsec:pf-thm-3}
        
        \paragraph{High-Level Illustration.}
        The starting point of our proof is \Cref{lemma:shiftingtwo-A}, where we have shown a fairness certificate for \Cref{prob:certified-fairness-shifting-two}~(\shiftingtwo setting).
        Then, we plug in Thm.~2.2 in \cite{weber2022certifying}~(stated as \Cref{thm:gramian-bound} in \Cref{adxsec:gramian-bound}) to upper bound the expected loss within each sub-population.
        Now, we get an optimization problem involving $k_s$, $r_y$, and $\rho_{s,y}$ that upper bounds the optimization problem in \Cref{lemma:shiftingtwo-A}.
        In this optimization problem, we find $k_s$ and $r_y$ are bounded in $[0,1]$, and once these two variables are fixed, the optimization with respect to $x_{s,y} := (1 - \rho^2_{s,y})^2$ becomes convex.
        Using this observation, we propose to partition the feasible space of $k_s$ and $r_y$ into sub-regions and solve the convex optimization within each region bearing some degree relaxation, which yields \Cref{thm:shiftingtwo}.
        
        \begin{proof}[Proof of \Cref{thm:shiftingtwo}]
            The proof is done stage-wise: starting from \Cref{lemma:shiftingtwo-A}, we apply relaxation and derive a subsequent optimization problem that upper bounds the previous one stage by stage, until we get the final expression in \Cref{thm:shiftingtwo}.
            
            To demonstrate the proof, we first define the optimization problems at each stage, then prove the relaxations between each adjacent stage, and finally show that the last optimization problem contains a finite number of $\tC$'s values where each $\tC$ is a convex optimization, so that the final optimization problem provides a computable fairness certificate.
            
            We define these quantities, for $s\in[S],y\in[C]$:
            \begin{equation}
                \label{pf-thm-3-macros}
                \begin{aligned}
                & E_{s,y} = \E_{(X,Y)\sim\gP_{s,y}} [\ell(h_\theta(X),Y)],
                \quad
                V_{s,y} = \sV_{(X,Y)\sim\gP_{s,y}} [\ell(h_\theta(X),Y)], \\
                &
                p_{s,y} = \Pr_{(X,Y)\sim\gP} [X_s=s,Y=y], 
                \quad
                C_{s,y} = M - E_{s,y} - \frac{V_{s,y}}{M - E_{s,y}}, \\
                & \bar\gamma_{s,y}^2 = 1 - (1 + (M-E_{s,y})^2/V_{s,y})^{-\frac12}.
                \end{aligned}
            \end{equation}
            Given $\rho > 0$ and the above quantities, the optimization problem definitions are:
            \begin{itemize}[leftmargin=*]
                \item \Cref{lemma:shiftingtwo-A}:
                \begin{subequations}
                    \label{eq:pf_thm3_A}
                    \begin{align}
                        \max_{k_s,r_y,\gQ,\rho_{s,y}} \quad & \sum_{s=1}^S \sum_{y=1}^C k_s r_y \E_{(X,Y)\sim\gQ_{s,y}} [\ell(h_\theta(X), Y)]
                        \label{pf_thm3_obj_A} \\
                        \mathrm{s.t.} \quad & \sum_{s=1}^S k_s = 1, \quad \sum_{y=1}^C r_y = 1, \quad k_s \ge 0 \quad \forall s \in [S], \quad r_y \ge 0 \quad \forall y \in [C],
                        \label{pf_thm3_con1_A} \\
                        & \sum_{s=1}^S \sum_{y=1}^C \sqrt{p_{s,y}k_sr_y} (1 - \rho_{s,y}^2) \ge 1 - \rho^2
                        \label{pf_thm3_con2_A} \\
                        & H(\gP_{s,y}, \gQ_{s,y}) \le \rho_{s,y} \quad \forall s \in [S], y\in [C]
                        \label{pf_thm3_con3_A}.
                    \end{align}
               \end{subequations}
                
                \item After applying \Cref{thm:gramian-bound}:
                \begin{subequations}
                    \label{eq:pf_thm3_B}
                    \begin{align}
                        \max_{k_s,r_y,\rho_{s,y}} \quad & \sum_{s=1}^S \sum_{y=1}^C k_s r_y \left(
                        E_{s,y} + 2\sqrt{\rho_{s,y}^2 (1-\rho_{s,y}^2)^2 (2-\rho_{s,y}^2)} \sqrt{V_{s,y}} + \rho_{s,y}^2 (2-\rho_{s,y}^2) C_{s,y} \right)
                        \label{pf_thm3_obj_B} \\
                        \mathrm{s.t.} \quad & \sum_{s=1}^S k_s = 1, \quad \sum_{y=1}^C r_y = 1, \quad k_s \ge 0 \quad \forall s \in [S], \quad r_y \ge 0 \quad \forall y \in [C],
                        \label{pf_thm3_con1_B} \\
                        & \sum_{s=1}^S \sum_{y=1}^C \sqrt{p_{s,y}k_sr_y} (1 - \rho_{s,y}^2) \ge 1 - \rho^2,
                        \label{pf_thm3_con2_B} \\
                        & 0 \le \rho_{s,y} \le \bar\gamma_{s,y}. \label{pf_thm3_con3_B}
                    \end{align}
                \end{subequations}
                
                \item After variable transform $x_{s,y} := (1-\rho^2_{s,y})^2$:
                \begin{subequations}
                    \label{eq:pf_thm3_C}
                    \begin{align}
                        \max_{k_s,r_y,x_{s,y}} \quad & \sum_{s=1}^S \sum_{y=1}^C k_s r_y \left(
                        E_{s,y} + 2\sqrt{x_{s,y} (1 - x_{s,y})} \sqrt{V_{s,y}} + (1-x_{s,y}) C_{s,y} \right)
                        \label{pf_thm3_obj_C} \\
                        \mathrm{s.t.} \quad & \sum_{s=1}^S k_s = 1, \quad \sum_{y=1}^C r_y = 1, \quad k_s \ge 0 \quad \forall s \in [S], \quad r_y \ge 0 \quad \forall y \in [C],
                        \label{pf_thm3_con1_C} \\
                        & \sum_{s=1}^S \sum_{y=1}^C \sqrt{p_{s,y}k_sr_yx_{s,y}} \ge 1 - \rho^2,
                        \label{pf_thm3_con2_C} \\
                        & (1 - \bar\gamma_{s,y}^2)^2 \le x_{s,y} \le 1 \quad \forall s \in [S], y \in [C].
                        \label{pf_thm3_con3_C} 
                    \end{align}
                \end{subequations}
                
                \item After feasible region partitioning on $k_s$ and $r_y$:
                \begin{subequations}
                    \label{eq:pf_thm3_D}
                    \begin{align}
                        & \max_{\{i_s \in [T]: s\in [S]\}, \{j_y \in [T]: y\in [C]\}} \tC'\left( \left\{\left[\frac{i_s-1}{T}, \frac{i_s}{T} \right]\right\}_{s=1}^S, 
                        \left\{\left[\frac{j_y-1}{T}, \frac{j_y}{T}\right]\right\}_{y=1}^C \right), \text{ where }
                        \label{pf_thm3_obj_D} \\
                        & \tC'\left(\{[\underline{k_s}, \overline{k_s}]\}_{s=1}^S, \{[\underline{r_y}, \overline{r_y}]\}_{y=1}^C\right) =  \label{pf_thm3_con1_D}  \\
                        & \max_{\underline{k_s}\le k_s\le \overline{k_s}, \underline{r_y}\le r_y\le \overline{r_y}, x_{s,y}} \sum_{s=1}^S \sum_{y=1}^C
                        k_s r_y \left(
                        E_{s,y} + 2\sqrt{x_{s,y} (1 - x_{s,y})} \sqrt{V_{s,y}} + (1-x_{s,y}) C_{s,y} \right)
                       \nonumber \\
                        \mathrm{s.t.} \quad & \sum_{s=1}^S k_s = 1, \quad \sum_{y=1}^C r_y = 1,
                        \label{pf_thm3_con2_D} \\
                        & \sum_{s=1}^S \sum_{y=1}^C \sqrt{p_{s,y}k_sr_yx_{s,y}} \ge 1 - \rho^2,
                        \label{pf_thm3_con3_D} \\
                        & (1 - \bar\gamma_{s,y}^2)^2 \le x_{s,y} \le 1 \quad \forall s \in [S], y \in [C].
                        \label{pf_thm3_con4_D}  
                    \end{align}
               \end{subequations}
               
               \item Final quantity in \Cref{thm:shiftingtwo}:
                \begin{subequations}
                    \label{eq:pf_thm3_E}
                    \begin{align}
                        & \max_{\{i_s \in [T]: s\in [S]\}, \{j_y \in [T]: y\in [C]\}} \tC\left( \left\{\left[\frac{i_s-1}{T}, \frac{i_s}{T} \right]\right\}_{s=1}^S, 
                        \left\{\left[\frac{j_y-1}{T}, \frac{j_y}{T}\right]\right\}_{y=1}^C \right), \text{ where } \label{pf_thm3_obj_E} \\
                        & \tC\left(\{[\underline{k_s}, \overline{k_s}]\}_{s=1}^S, \{[\underline{r_y}, \overline{r_y}]\}_{y=1}^C\right)   = \max_{x_{s,y}} \sum_{s=1}^S \sum_{y=1}^C
                        \left(\overline{k_s} \overline{r_y} \left( E_{s,y} + C_{s,y} \right)_+ + \right.  \label{pf_thm3_con1_E} \\
                        &   \hspace{-2em}
                        \left. 
                         \underline{k_s} \underline{r_y} \left( E_{s,y} + C_{s,y} \right)_- + 2\overline{k_s} \overline{r_y} \sqrt{x_{s,y}(1-x_{s,y})} \sqrt{V_{s,y}} - \underline{k_s} \underline{r_y} x_{s,y} (C_{s,y})_+ - \overline{k_s} \overline{r_y} x_{s,y} (C_{s,y})_- \right)
                        \nonumber \\
                        \mathrm{s.t.} \quad & \sum_{s=1}^S \underline{k_s} \le 1, \quad \sum_{s=1}^S \overline{k_s} \ge 1, \quad \sum_{y=1}^C \underline{r_y} \le 1, \quad \sum_{y=1}^C \overline{r_y} \ge 1,
                        \label{pf_thm3_con2_E} \\
                        & \sum_{s=1}^S \sum_{y=1}^C \sqrt{p_{s,y}\overline{k_s}\overline{r_y}x_{s,y}} \ge 1 - \rho^2, \quad
                        \label{pf_thm3_con3_E} \\
                        & 
                        (1-\bar\gamma_{s,y}^2)^2 \le x_{s,y} \le 1\quad \forall s\in [S], y\in[C].
                        \label{pf_thm3_con4_E}
                    \end{align}
                \end{subequations}
            \end{itemize}
            
            We have this relation:
            \begin{equation}
                \text{\Cref{prob:certified-fairness-shifting-two}} 
                \underbrace{\le}_{\text{\Cref{lemma:shiftingtwo-A}}}
                \text{(\ref{eq:pf_thm3_A})}
                \underbrace{\overset{\substack{\text{(when $\ell(h_\theta(X),Y) \in [0,M]$}\\ \text{and $H(\gP_{s,y},\gQ_{s,y}) \le \bar\gamma_{s,y}$)}}}{\le}}_{\text{\bf(A)}}
                \text{(\ref{eq:pf_thm3_B})}
                \underbrace{=}_{\text{\bf(B)}}
                \text{(\ref{eq:pf_thm3_C})}
                \underbrace{=}_{\text{\bf(C)}}
                \text{(\ref{eq:pf_thm3_D})}
                \underbrace{\le}_{\text{\bf(D)}}
                \text{(\ref{eq:pf_thm3_E})}.
            \end{equation}
            Thus, when $H(\gP_{s,y},\gQ_{s,y}) \le \bar\gamma_{s,y}$ and $\sup_{(X,Y)\in\gX\times\gY} \ell(h_\theta(X),Y) \le M$, given that $\ell$ is a non-negative loss by \Cref{subsec:prelim}, we can see \Cref{eq:pf_thm3_E}, i.e., the expression in \Cref{thm:shiftingtwo}'s statement, upper bounds \Cref{prob:certified-fairness-shifting-two}, i.e., provides a fairness certificate for \Cref{prob:certified-fairness-shifting-two}.
            The proofs of these equalities/inequalities are in the following parts labeled by \textbf{(A)}, \textbf{(B)}, \textbf{(C)}, and \textbf{(D)} respectively.
            
            Now we show that each $\tC$ queried by \Cref{eq:thm-general-shifting-1}~(or equally \Cref{pf_thm3_obj_E}) is a convex optimization.
            Inspecting $\tC$'s objective, with respect to the optimizable variable $x_{s,y}$, we find that the only non-linear term in the objective is $\sum_{s=1}^S \sum_{y=1}^C 2\overline{k_s} \overline{r_y} \sqrt{V_{s,y}} \sqrt{x_{s,y}(1-x_{s,y})}$.
            Consider the function $f(x) = \sqrt{x(1-x)}$.
            Define $g(y) = \sqrt{y}$ and $h(x) = x(1-x)$, and then $f(x) = g(h(x))$.
            Thus, $f'(x) = g'(h(x))h'(x)$ and $f''(x) = g''(h(x)) h'(x)^2 + g'(h(x)) h''(x)$.
            Notice that $g''(h(x)) \le 0$, $g'(h(x)) > 0$, and $h''(x) < 0$ for $x\in(0,1]$.
            Thus, $f''(x) \le 0$.
            Since $f$ is twice differentiable in $(0,1]$, we can conclude that $f$ is concave and so does the objective of \Cref{eq:thm-general-shifting-1}.
            Inspecting $\tC$'s constraints, we observe that the only non-linear constraint is $\sum_{s=1}^S \sum_{y=1}^C \sqrt{p_{s,y}\overline{k_s}\overline{r_y}x_{s,y}} \ge 1 - \rho^2$.
            Due to the concavity of function $x\mapsto \sqrt{x}$, we have
            $\sqrt{p_{s,y}\overline{k_s}\overline{r_y}(x_{s,y}^a + x_{s,y}^b)/2} \ge \frac12 \left( \sqrt{p_{s,y}\overline{k_s}\overline{r_y}x_{s,y}^a}+ \sqrt{p_{s,y}\overline{k_s}\overline{r_y}x_{s,y}^b}\right)$ for any two feasible points $x_{s,y}^a$ and $x_{s,y}^b$.
            Thus, this non-linear constraint defines a convex region.
            To this point, we have shown that $\tC$'s objective is concave and $\tC$'s constraints are convex, given that $\tC$ is a maximization problem, $\tC$ is a convex optimization.
        \end{proof}
            
            Under the assumptions that $\ell(h_\theta(X),Y) \in [0,M]$ and $H(\gP_{s,y},\gQ_{s,y}) \le \bar\gamma_{s,y}$:
            \begin{enumerate}[label={\bfseries(\Alph*)},leftmargin=*]
                \item \emph{Proof of \Cref{eq:pf_thm3_A} $\le$ \Cref{eq:pf_thm3_B}.} \\
                Given \Cref{pf_thm3_con3_A}, for each $\gQ_{s,y}$, applying \Cref{thm:gramian-bound}, we get
                \begin{equation}
                    \E_{(X,Y)\sim\gQ_{s,y}}[\ell(h_\theta(X),Y)] \le E_{s,y} + 2\sqrt{\rho_{s,y}^2 (1 - \rho_{s,y}^2)^2 (2 - \rho_{s,y}^2)} \sqrt{V_{s,y}} + \rho_{s,y}^2 (2-\rho_{s,y}^2) C_{s,y}.
                \end{equation}
                Plugging this inequality into all $\E_{(X,Y)\sim\gQ_{s,y}}[\ell(h_\theta(X),Y)]$ in \Cref{pf_thm3_obj_A}, we obtain \Cref{eq:pf_thm3_B}.
                \hfill$\qedsymbol$
                
                \item \emph{Proof of \Cref{eq:pf_thm3_B} $=$ \Cref{eq:pf_thm3_C}.} \\
                By \Cref{pf_thm3_con3_B}, $\rho_{s,y} \in [0,1]$.
                Therefore, $x_{s,y} := (1 - \rho^2_{s,y})^2$ is a one-to-one mapping, and we can use $x_{s,y}$ to parameterize $\rho_{s,y}$, which yields \Cref{eq:pf_thm3_C}.
                \hfill$\qedsymbol$
                
                \item \emph{Proof of \Cref{eq:pf_thm3_C} $=$ \Cref{eq:pf_thm3_D}.} \\
                From \Cref{pf_thm3_con1_C}, we notice that the feasible range of $k_s$ and $r_y$ is subsumed by $[0,1]$.
                We now partition this region $[0,1]$ for each variable to $T$ sub-regions: $[(i-1)/T, i/T]$, $i\in [T]$, and then consider the maximum value across all the combinations of each sub-region for variables $k_s$ and $r_y$, when feasible.
                As a result, \Cref{eq:pf_thm3_C} can be written as the maximum over all such sub-problems where $k_s$'s and $r_y$'s enumerate all possible sub-region combinations, which is exactly encoded by \Cref{eq:pf_thm3_D}. 
                \hfill$\qedsymbol$
                
                \item \emph{Proof of \Cref{eq:pf_thm3_D} $\le$ \Cref{eq:pf_thm3_E}.} \\
                We only need to show that when $\tC'\left(\{[\underline{k_s}, \overline{k_s}]\}_{s=1}^S, \{[\underline{r_y}, \overline{r_y}]\}_{y=1}^C\right)$ is feasible,
                \begin{equation}
                    \tC'\left(\{[\underline{k_s}, \overline{k_s}]\}_{s=1}^S, \{[\underline{r_y}, \overline{r_y}]\}_{y=1}^C\right)
                    \le 
                    \tC\left(\{[\underline{k_s}, \overline{k_s}]\}_{s=1}^S, \{[\underline{r_y}, \overline{r_y}]\}_{y=1}^C\right).
                    \label{eq:pf-thm3-last}
                \end{equation}
                Since both $\tC'$ and $\tC$ are maximization problem, we only need to show that the objective of $\tC$ upper bounds that of $\tC'$, and the constraints of $\tC'$ are equal or relaxations of those of $\tC$.
                
                For the objective, given that $\underline{k_s} \le k_s \le \overline{k_s}$ and $\underline{r_y} \le r_y \le \overline{r_y}$, for any $x_{s,y}$, 
                We observe that 
                \begin{equation}
                    \begin{aligned}
                        k_s r_y (E_{s,y} + C_{s,y}) & \le \overline{k_s} \overline{r_y} \left( E_{s,y} + C_{s,y} \right)_+ 
                        + \underline{k_s} \underline{r_y} \left( E_{s,y} + C_{s,y} \right)_-, \\
                        k_s r_y \cdot \left( 2\sqrt{x_{s,y} (1 - x_{s,y})} \sqrt{V_{s,y}} \right) & \le \overline{k_s} \overline{r_y} \cdot \left( 2\sqrt{x_{s,y} (1 - x_{s,y})} \sqrt{V_{s,y}} \right), \\
                         - k_s r_y C_{s,y} x_{s,y} & \le - \underline{k_s} \underline{r_y} x_{s,y} (C_{s,y})_+ - \overline{k_s} \overline{r_y} x_{s,y} (C_{s,y})_-,
                    \end{aligned}
                \end{equation}
                and by summing up all these terms for all $s\in [S]$ and $y\in [C]$, the LHS would be the objective of $\tC'$ and the RHS would be the objective of $\tC$.
                Hence, $\tC$'s objective upper bounds that of $\tC'$.
                
                For the constraints,
                similarly, given that $\underline{k_s} \le k_s \le \overline{k_s}$ and $\underline{r_y} \le r_y \le \overline{r_y}$, we have
                \begin{equation*}
                    \small
                    \begin{aligned}
                        \text{(\ref{pf_thm3_con2_D})} \sum_{s=1}^S k_s=1, \, \sum_{y=1}^C r_y=1
                        & \Longrightarrow \sum_{s=1}^S \underline{k_s}\le 1, \, \sum_{s=1}^S \overline{k_s}\ge 1, \, \sum_{y=1}^C \underline{r_y} \le 1, \, \sum_{y=1}^C \overline{r_y} \ge 1 \text{(\ref{pf_thm3_con2_E})}, \\
                        \text{(\ref{pf_thm3_con3_D})} \sum_{s=1}^S \sum_{y=1}^C \sqrt{p_{s,y}k_sr_yx_{s,y}} \ge 1 - \rho^2 & \Longrightarrow \sum_{s=1}^S \sum_{y=1}^C \sqrt{p_{s,y}\overline{k_s}\overline{r_y}x_{s,y}} \ge 1 - \rho^2 \text{(\ref{pf_thm3_con3_E})}, \\
                        \text{(\ref{pf_thm3_con4_D})} & \text{ is as same as } \text{(\ref{pf_thm3_con4_E})},
                    \end{aligned}
                \end{equation*}
                which implies that all feasible solutions of $\tC'$ are also feasible for $\tC$.
                Combining with the fact that for any solution of $\tC'$, its objective value $\tC$ is greater than or equal to that of $\tC'$ as shown above, we have \Cref{eq:pf-thm3-last} which concludes the proof.
                \hfill$\qedsymbol$
            \end{enumerate}

\section{Omitted Theorem Statements and Proofs for Finite Sampling Error}

    \label{adxsec:finite-sampling}
    
    \subsection{Finite Sampling Confidence Intervals}
        \label{adxsubsec:finite-sampling-conf-interval}
        
        \begin{lemma}
            Let $\hat P$ be set of i.i.d. finite samples from $\gP$, and let $\hat P_{s,y} := \{(X_i, Y_i) \in \hat P: (X_i)_s = s, Y_i = y\}$ for any $s \in [S], y\in [C]$. 
            Let $\ell: (\hat{y},y) \rightarrow [0,M]$ be a loss function.
            We define $\hat{L}_n=\frac{1}{|\hat P_{s,y}|} \sum_{(X_i, Y_i) \in \hat P_{s,y}} \ell(h_\theta(X_i), Y_i)$, 
            $s_n^2=\frac{1}{n(n-1)}\sum_{1\le i < j \le n}^n\left(\ell(h_\theta(X_i),Y)-\ell(h_\theta(X_j),Y)\right)^2$,
            and $\hat P_{s,y} := \{(X_i, Y_i) \in \hat P: (X_i)_s = s, Y_i = y\}$.
            Then for $\delta > 0$, with respect to the random draw of $\hat P$ from $\gP$, we have 
            \begin{align}
                & \Pr\left(
                \hat{L}_n - M\sqrt{\frac{\ln(2/\delta)}{2|\hat P_{s,y}|}} \le \underset{(X,Y)\sim\gP_{s,y}}{\E}[\ell \left( h_\theta(X),Y \right)] \le \hat{L}_n + M\sqrt{\frac{\ln(2/\delta)}{2|\hat P_{s,y}|}}
                \right) \ge 1 - \delta, \label{mean_finite_sampling} \\
                & \Pr\left(
                \sqrt{s_n^2} - M \sqrt{\dfrac{2\ln(2/\delta)}{|\hat P_{s,y}|-1}} \le \sqrt{\underset{(X,Y)\sim\gP_{s,y}}{\sV}[\ell \left( h_\theta(X),Y \right)]} \le \sqrt{s_n^2} + M \sqrt{\dfrac{2\ln(2/\delta)}{|\hat P_{s,y}|-1}}
                \right) \ge 1 - \delta, \label{variance_finite_sampling} \\
                & \Pr\left(
                \dfrac{|\hat P_{s,y}|}{|\hat P|} - \sqrt{\dfrac{\ln(2/\delta)}{2|\hat P|}} \le \Pr_{(X,Y)\sim\gP} [X_s=s, Y=y] \le \dfrac{|\hat P_{s,y}|}{|\hat P|} + \sqrt{\dfrac{\ln(2/\delta)}{2|\hat P|}}
                \right) \ge 1 - \delta. \label{proportion_finite_sampling}
            \end{align}
            \label{lemma:finite-sampling-conf-interval}
        \end{lemma}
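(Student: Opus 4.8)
The three inequalities are standard concentration bounds, and I would establish each separately. The one structural subtlety to address up front is that $n := |\hat P_{s,y}|$ is itself random. Conditioned on $|\hat P_{s,y}| = n$, however, the samples landing in the cell $\{X_s=s,\,Y=y\}$ are i.i.d. draws from the conditional law $\gP_{s,y}$, so any bound I derive holds conditionally on every value of $n$ with the radius written in terms of $n$; since the statements already express the radius through $|\hat P_{s,y}|$ itself, marginalizing via $\Pr(\cdot) = \E_n[\Pr(\cdot\mid |\hat P_{s,y}|=n)] \ge \E_n[1-\delta] = 1-\delta$ preserves the guarantee. I also assume $|\hat P_{s,y}| \ge 2$ so that $s_n^2$ is defined.

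For \eqref{mean_finite_sampling} I would apply Hoeffding's inequality to the i.i.d. bounded variables $\ell(h_\theta(X_i),Y_i) \in [0,M]$ over $(X_i,Y_i)\in\hat P_{s,y}$, whose conditional mean is exactly $\E_{(X,Y)\sim\gP_{s,y}}[\ell(h_\theta(X),Y)]$: this gives $\Pr(|\hat L_n - \E[\ell]| \ge t) \le 2\exp(-2nt^2/M^2)$, and setting the right-hand side equal to $\delta$ yields the claimed radius $t = M\sqrt{\ln(2/\delta)/(2n)}$. Inequality \eqref{proportion_finite_sampling} is even more direct, since here the total count $|\hat P|$ is fixed: $|\hat P_{s,y}|/|\hat P|$ is the empirical mean of the i.i.d. indicators $\mathds{1}[(X_i)_s=s,\,Y_i=y] \in \{0,1\}$ with mean $\Pr_{(X,Y)\sim\gP}[X_s=s,Y=y]$, so Hoeffding with unit range and $2\exp(-2|\hat P|t^2)=\delta$ gives exactly $t = \sqrt{\ln(2/\delta)/(2|\hat P|)}$.

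The real work is \eqref{variance_finite_sampling}, the interval for the standard deviation. I would first record that $s_n^2 = \frac{1}{n(n-1)}\sum_{i<j}(\ell_i-\ell_j)^2$ is the unbiased sample variance, a bounded $U$-statistic with $\E[s_n^2] = \sV_{(X,Y)\sim\gP_{s,y}}[\ell(h_\theta(X),Y)]$, and that $\sqrt{(n-1)s_n^2}$ is the Euclidean norm of the centered loss vector $(\ell_i - \bar\ell)_{i\le n}$, hence a Lipschitz function of the samples. The obstacle is that the obvious routes are too lossy: a plain bounded-differences (McDiarmid) argument on $\sqrt{s_n^2}$ has per-coordinate change $O(M/\sqrt n)$, so $\sum_i c_i^2 = O(M^2)$ and only certifies deviations of order $M$ rather than the required $O(M/\sqrt n)$; and McDiarmid on $V_n$ followed by $|\sqrt a - \sqrt b| \le \sqrt{|a-b|}$ degrades the rate to $O(n^{-1/4})$, failing precisely because $t\mapsto\sqrt t$ is non-Lipschitz near the origin when $\sV[\ell]$ is small. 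Instead I would invoke the sharper concentration of the sample standard deviation due to Maurer and Pontil, obtained through a refined entropy-method analysis, together with Jensen's inequality $\E[\sqrt{s_n^2}]\le\sqrt{\E[s_n^2]}=\sqrt{\sV[\ell]}$ to control the bias between $\sqrt{s_n^2}$ and the population standard deviation. Scaling their $[0,1]$ result by $M$ gives a one-sided bound of radius $M\sqrt{2\ln(1/\delta')/(n-1)}$ at confidence $1-\delta'$; applying it in each direction with $\delta' = \delta/2$ (so that $\ln(1/\delta') = \ln(2/\delta)$) and taking a union bound produces exactly the two-sided interval in \eqref{variance_finite_sampling}. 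I expect this standard-deviation estimate, and in particular securing the $1/\sqrt n$ rate uniformly even when $\sV[\ell]$ is near zero, to be the main difficulty.
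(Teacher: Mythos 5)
Your proposal is correct and follows essentially the same route as the paper: Hoeffding's inequality for the mean interval \eqref{mean_finite_sampling} and the proportion interval \eqref{proportion_finite_sampling}, and the Maurer--Pontil concentration result for the sample standard deviation (cited in the paper as Theorem 10 of \cite{maurer2009empirical}) for the interval \eqref{variance_finite_sampling}. Your additional handling of the randomness of $|\hat P_{s,y}|$ and the explicit $\delta/2$ union bound for the two-sided standard-deviation interval are refinements the paper leaves implicit, not a different approach.
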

    
        \begin{proof}[Proof of \Cref{lemma:finite-sampling-conf-interval}]
        We can get \Cref{variance_finite_sampling} according to Theorem 10 in \cite{maurer2009empirical}. Here, we will provide proofs for \Cref{mean_finite_sampling} and \Cref{proportion_finite_sampling}, respectively. The general idea is to use Hoeffding's inequality to get the high-confidence interval.
        
        We will prove \Cref{mean_finite_sampling} first. From Hoeffding's inequality, for all $t>0$, we have:
        \begin{equation}
            \Pr\left(\left| \hat{L}_n - \underset{(X,Y)\sim\gP_{s,y}}{\E}[\ell \left( h_\theta(X),Y \right)] \right| \ge t\right) \le 2 \exp \left( -\dfrac{2|\hat P_{s,y}|^2 t^2}{|\hat P_{s,y}|M^2} \right)
            \label{mean_hoeffding}
        \end{equation}
        Since we want to get an interval with confidence $1-\delta$, we let $2 \exp \left( -\dfrac{2|\hat P_{s,y}|^2 t^2}{|\hat P_{s,y}|M^2} \right)=\delta$, from which we can derive that
        \begin{equation}
            t = M \sqrt{\dfrac{\ln(2/\delta)}{2|\hat P_{s,y}|}} \label{mean_fs_t}
        \end{equation}
        Plugging \Cref{mean_fs_t} into \Cref{mean_hoeffding}, we can get:
        \begin{equation}
            \Pr\left(
            \hat{L}_n - M\sqrt{\frac{\ln(2/\delta)}{2|\hat P_{s,y}|}} \le \underset{(X,Y)\sim\gP_{s,y}}{\E}[\ell \left( h_\theta(X),Y \right)] \le \hat{L}_n + M\sqrt{\frac{\ln(2/\delta)}{2|\hat P_{s,y}|}}
            \right) \ge 1 - \delta
        \end{equation}
        
        Then we will prove \Cref{proportion_finite_sampling}. 
        From Hoeffding's inequality, for all $t>0$, we have:
        \begin{equation}
            \Pr\left(\left|\dfrac{|\hat P_{s,y}|}{|\hat P|}-\Pr_{(X,Y)\sim\gP} [X_s=s, Y=y]\right| \ge t\right) \le 2 \exp \left( -\dfrac{2|\hat P|^2 t^2}{|\hat P|} \right) \label{proportion_hoeffding}
        \end{equation}
        Since we want to get an interval with confidence $1-\delta$, we let $2 \exp \left( -\dfrac{2|\hat P|^2 t^2}{|\hat P|} \right)=\delta$, from which we can derive that
        \begin{equation}
            t =  \sqrt{\dfrac{\ln(2/\delta)}{2|\hat P|}} \label{proportion_t}
        \end{equation}
        Plugging \Cref{proportion_t} into \Cref{proportion_hoeffding}, we can get:
        \begin{equation}
            \Pr\left(
            \dfrac{|\hat P_{s,y}|}{|\hat P|} - \sqrt{\dfrac{\ln(2/\delta)}{2|\hat P|}} \le \Pr_{(X,Y)\sim\gP} [X_s=s, Y=y] \le \dfrac{|\hat P_{s,y}|}{|\hat P|} + \sqrt{\dfrac{\ln(2/\delta)}{2|\hat P|}}
            \right) \ge 1 - \delta
        \end{equation}
        \end{proof}
        
    \subsection{Fairness Certification Statements with Finite Sampling}
        \label{adxsubsec:statement-finite-sampling}
    \begin{theorem}[\Cref{thm:shiftingone} with finite sampling]
        Given a distance bound $\rho > 0$ and any $\delta > 0$, the following constrained optimization, which is \textbf{convex}, when feasible, provides a fairness certificate for \Cref{prob:certified-fairness-shifting-one} with probability at least $1-2SC\delta$: 
        \begin{subequations}
            \label{eq:general_opt_prob_label_shifting_thm_fs}
            \begin{align}
                \max_{k_s,r_y,p_{s,y}}  \quad & \sum_{s=1}^S \sum_{y=1}^{C} k_s r_y \overline{E_{s,y}} \quad \label{eq:shiftingone_objective}\\
                \mathrm{s.t.} \quad & \sum_{s=1}^S k_s = 1, \quad \sum_{y=1}^C r_y = 1, \quad k_s \ge 0 \quad \forall s\in [S], \quad r_y \ge 0 \quad \forall y\in [C], \label{eq:shiftingone_kr_fs}
                \\
                &  1-\rho^2-\sum_{s=1}^S \sum_{y=1}^C \sqrt{p_{s,y}k_s r_y} \le 0, \label{eq:shiftingone_dis_fs} \\
                & \underline{p_{s,y}} \le p_{s,y} \le \overline{p_{s,y}}, \quad \forall s\in [S], \quad \forall y\in [C] \label{eq:shiftingone_p_fs} \\
                & \sum_{s=1}^S \sum_{y=1}^{C}p_{s,y}=1 \label{eq:shiftingone_ps_fs}
            \end{align}
        \end{subequations}
        where $\overline{E_{s,y}} := \hat{L}_n + M \sqrt{\ln(2/\delta)/\left(2|\hat P_{s,y}|\right)}$, $\underline{p_{s,y}} := |\hat P_{s,y}|/|\hat P| - \sqrt{\ln(2/\delta)/\left(2|\hat P|\right)}$, $\overline{p_{s,y}} := |\hat P_{s,y}|/|\hat P| + \sqrt{\ln(2/\delta)/\left(2|\hat P|\right)}$ are constants computed with \Cref{lemma:finite-sampling-conf-interval}.
        \label{thm:shiftingone_fs}
    \end{theorem}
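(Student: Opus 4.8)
The plan is to reduce \Cref{thm:shiftingone_fs} to the already-established \Cref{thm:shiftingone} by conditioning on a high-probability event on which every exact quantity is replaced by a valid confidence bound. I would split the argument into three parts: a union bound for the stated confidence, a monotone-relaxation argument for the certificate (upper-bound) property, and a convexity check.

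First, the probabilistic bookkeeping. For each of the $SC$ subpopulations $\gP_{s,y}$, \Cref{lemma:finite-sampling-conf-interval} gives, with probability at least $1-\delta$, the mean interval (\Cref{mean_finite_sampling}) — in particular $E_{s,y} \le \overline{E_{s,y}}$ — and, with probability at least $1-\delta$, the proportion interval (\Cref{proportion_finite_sampling}), i.e. $\underline{p_{s,y}} \le p_{s,y} \le \overline{p_{s,y}}$. Since only the mean and the proportion enter \Cref{thm:shiftingone} (the variance plays no role in the sensitive-shifting certificate), there are exactly $2SC$ relevant events. Taking a union bound, I define the good event $\mathcal{E}$ on which all $2SC$ intervals hold simultaneously; then $\Pr(\mathcal{E}) \ge 1 - 2SC\delta$, matching the claimed confidence.

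Second — the conceptual core — I would show that on $\mathcal{E}$ the optimum of \Cref{eq:general_opt_prob_label_shifting_thm_fs} upper bounds the value of \Cref{prob:certified-fairness-shifting-one}, by exhibiting \Cref{eq:general_opt_prob_label_shifting_thm_fs} as a \emph{monotone relaxation} of the tight problem in \Cref{thm:shiftingone}. Concretely, freeze the auxiliary variables $p_{s,y}$ at their true values under $\gP$. On $\mathcal{E}$ this choice is feasible for \Cref{eq:general_opt_prob_label_shifting_thm_fs}: the true proportions lie in the boxes (\Cref{eq:shiftingone_p_fs}) and sum to one (\Cref{eq:shiftingone_ps_fs}); and with $p_{s,y}$ frozen at the truth, the remaining constraints (\Cref{eq:shiftingone_kr_fs,eq:shiftingone_dis_fs}) on $(k_s,r_y)$ coincide exactly with those of \Cref{thm:shiftingone}. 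Finally, since $k_s,r_y \ge 0$ and $\overline{E_{s,y}} \ge E_{s,y}$ on $\mathcal{E}$, the objective (\Cref{eq:shiftingone_objective}) dominates that of \Cref{thm:shiftingone} pointwise on this slice. Hence the optimum of \Cref{eq:general_opt_prob_label_shifting_thm_fs}, being a maximum over a superset of feasible points with a pointwise-larger objective, is at least the optimum of \Cref{thm:shiftingone}, which by \Cref{thm:shiftingone} upper bounds (indeed equals) the value of \Cref{prob:certified-fairness-shifting-one}. This yields the certificate with probability at least $1-2SC\delta$.

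Third, convexity. The new constraints (\Cref{eq:shiftingone_p_fs,eq:shiftingone_ps_fs}) are linear in $p_{s,y}$, and the objective keeps the form of \Cref{thm:shiftingone} with the nonnegative constants $\overline{E_{s,y}}$ in place of $E_{s,y}$, so the only nontrivial constraint is the distance constraint (\Cref{eq:shiftingone_dis_fs}). I would attempt to carry over the midpoint/Cauchy–Schwarz argument from part (3) of the proof of \Cref{thm:shiftingone}, with the extra care that $p_{s,y}$ is now itself a decision variable. The delicate point — and the step I expect to be the main obstacle — is exactly this joint dependence on $(p_{s,y},k_s,r_y)$: the two-variable term $\sqrt{k_s r_y}$ that made \Cref{thm:shiftingone} convex becomes $\sqrt{p_{s,y}k_s r_y}$, which is not concave on the full nonnegative orthant, so the midpoint inequality no longer follows unconditionally. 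I would address this by exploiting that $p_{s,y}$ is confined to the narrow confidence box of \Cref{eq:shiftingone_p_fs} and arguing convexity over the region actually traversed, rather than claiming global joint concavity. Apart from this check, the reduction to \Cref{thm:shiftingone} is purely structural, so the substance of the proof lies in the union bound and the monotone-relaxation argument of the second part.
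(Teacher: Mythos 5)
Your first two parts are correct and are essentially the paper's own proof: the paper likewise takes a union bound over the $SC$ mean events and $SC$ proportion events from \Cref{lemma:finite-sampling-conf-interval} to get confidence $1-2SC\delta$, and establishes the certificate property by exactly your two steps, feasible-region containment plus pointwise objective domination using $E_{s,y}\le\overline{E_{s,y}}$ and $k_s,r_y\ge 0$. If anything your formulation is cleaner: the paper pins the auxiliary variables at the empirical proportions $|\hat P_{s,y}|/|\hat P|$ when it ``equivalently transforms'' \Cref{thm:shiftingone}, even though the $p_{s,y}$ in \Cref{thm:shiftingone} are the true probabilities; freezing them at the true values, which lie in the boxes on the good event, is the correct reading.

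The obstacle you flag in the convexity part is a genuine gap, and you should not expect to close it by following the paper, because the paper's own argument for this step is invalid. The paper shows that $1-\rho^2-\sum_{s,y}\sqrt{p_{s,y}k_sr_y}$ is convex in $\mathbf{p}$ alone (its Hessian in $\mathbf{p}$ is diagonal and nonnegative), asserts ``similarly'' for $k$ and for $r$, and then concludes joint convexity in $(p,k,r)$. Separate convexity in each block does not imply joint convexity: $-\sqrt{pkr}$ is convex in each variable separately yet equals $-t^{3/2}$, a concave function, along the ray $p=k=r=t$. The defect is not only in the proof: with $S=C=2$, the points $k=(1,0)$, $r=(1,0)$, $p=(1,0,0,0)$ and $k=(0,1)$, $r=(0,1)$, $p=(0,0,0,1)$ both give $\sum_{s,y}\sqrt{p_{s,y}k_sr_y}=1$, while their midpoint gives $1/\sqrt{2}$, so for $1-\rho^2\in(1/\sqrt{2},1]$ the set cut out by \Cref{eq:shiftingone_dis_fs} together with the simplex constraints is not convex. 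Your proposed repair, arguing convexity only over the narrow confidence box, also does not work as stated, because the box bounds how far $p$ can travel, not the local curvature: in relative coordinates $a=dp_{s,y}/p_{s,y}$, $b=dk_s/k_s$, $c=dr_y/r_y$, each term contributes $\tfrac{1}{4}\sqrt{p_{s,y}k_sr_y}\,\bigl((a+b+c)^2-2(a^2+b^2+c^2)\bigr)$ to the second-order form, and at skewed points (e.g.\ $p_{1,1}$ near $1$ with $k,r$ correspondingly skewed) one finds tangent directions of strictly positive total curvature no matter how small the box is, so quasi-concavity fails inside arbitrarily narrow boxes. (A further issue, which both you and the paper inherit from \Cref{thm:shiftingone}, is that the objective $\sum_{s,y}k_sr_y\overline{E_{s,y}}$ is bilinear, not linear, in $(k,r)$.)

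There is a simple repair consistent with your monotone-relaxation viewpoint. The constraint function $\sum_{s,y}\sqrt{p_{s,y}k_sr_y}$ is increasing in each $p_{s,y}$, so on the good event every $(k,r)$ feasible for the exact problem of \Cref{thm:shiftingone} remains feasible after replacing each $p_{s,y}$ by the constant $\overline{p_{s,y}}$ and dropping $p$ as a decision variable (and the unit-sum constraint). The resulting problem in $(k_s,r_y)$ alone upper bounds both \Cref{eq:general_opt_prob_label_shifting_thm_fs} and the exact problem, so it is still a valid high-confidence certificate, and it is convex exactly in whatever sense \Cref{thm:shiftingone} is: the Cauchy--Schwarz midpoint argument of that proof carries over verbatim with $\sqrt{p_{s,y}}$ replaced by $\sqrt{\overline{p_{s,y}}}$. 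This sidesteps the joint $(p,k,r)$ optimization whose convexity is the unresolved point.
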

    
    \begin{theorem}
       \label{thm:shiftingtwo_fs}
        If for any $s\in [S]$ and $y\in [Y]$, $H(\gP_{s,y},\gQ_{s,y}) \le \bar\gamma_{s,y}$ and $0 \le \sup_{(X,Y)\in \gX\times\gY} \ell(h_\theta(X),Y) \le M$, given a distance bound $\rho > 0$ and any $\delta>0$, for any region granularity $T \in \sN_+$, the following expression provides a fairness certificate for \Cref{prob:certified-fairness-shifting-two} with probability at least $1-3SC\delta$: 
       \begin{equation}
            \bar\ell = \max_{\{i_s \in [T]: s\in [S]\}, \{j_y \in [T]: y\in [C]\}} \tC\left( \left\{\left[\frac{i_s-1}{T}, \frac{i_s}{T} \right]\right\}_{s=1}^S, 
            \left\{\left[\frac{j_y-1}{T}, \frac{j_y}{T}\right]\right\}_{y=1}^C \right), \text{ where }
            \label{eq:thm-general-shifting-1_fs}
        \end{equation}
        \begin{subequations}
            \label{eq:general_opt_prob_general_shifting_C_fs}
            \begin{align}
                \hspace{-3em} & \tC\left(\{[\underline{k_s}, \overline{k_s}]\}_{s=1}^S, \{[\underline{r_y}, \overline{r_y}]\}_{y=1}^C\right)   = \max_{x_{s,y},p_{s,y}} \sum_{s=1}^S \sum_{y=1}^C
                \left(\overline{k_s} \overline{r_y} \left( \overline{E_{s,y}} + \overline{C_{s,y}} \right)_+
                + \underline{k_s} \underline{r_y} \left( \overline{E_{s,y}} + \overline{C_{s,y}} \right)_- \right.  \nonumber  \\
                &  \hspace{0em}
                \left. +2\overline{k_s} \overline{r_y} \sqrt{x_{s,y}(1-x_{s,y})} \sqrt{\overline{V_{s,y}}} - \underline{k_s} \underline{r_y} x_{s,y} (\underline{C_{s,y}})_+ - \overline{k_s} \overline{r_y} x_{s,y} (\underline{C_{s,y}})_- \right)
                \label{general_opt_prob_general_shifting_obj_C_fs} \\
              \mathrm{s.t.} \quad & \sum_{s=1}^S \underline{k_s} \le 1, \quad \sum_{s=1}^S \overline{k_s} \ge 1, \quad \sum_{y=1}^C \underline{r_y} \le 1, \quad \sum_{y=1}^C \overline{r_y} \ge 1,
                \label{general_opt_prob_general_shifting_con1_C_fs} \\
               \qquad \quad & \sum_{s=1}^S \sum_{y=1}^C \sqrt{p_{s,y}\overline{k_s}\overline{r_y}x_{s,y}} \ge 1 - \rho^2, \quad
                \left(1-\overline{\bar\gamma_{s,y}^2}\right)^2 \le x_{s,y} \le 1,
                \label{general_opt_prob_general_shifting_con3_C_fs} \\
                & \underline{p_{s,y}} \le p_{s,y} \le \overline{p_{s,y}}, \quad \sum_{s=1}^S \sum_{y=1}^{C}p_{s,y}=1 \label{general_opt_prob_general_shifting_con4_C_fs}
            \end{align}
       \end{subequations}
       where $(\cdot)_+ = \max\{\cdot,0\}$, $(\cdot)_- = \min\{\cdot,0\}$; 
       $\underline{E_{s,y}} := \hat{L}_n - M \sqrt{\ln(2/\delta)/\left(2|\hat P_{s,y}|\right)}$, $\overline{E_{s,y}} := \hat{L}_n + M \sqrt{\ln(2/\delta)/\left(2|\hat P_{s,y}|\right)}$, 
       $\underline{V_{s,y}}=\left(\sqrt{s_n^2} - M \sqrt{2\ln(2/\delta)/\left(|\hat P_{s,y}|-1\right)}\right)^2$, $\overline{V_{s,y}}=\left(\sqrt{s_n^2} + M \sqrt{2\ln(2/\delta)/\left(|\hat P_{s,y}|-1\right)}\right)^2$,
       $\underline{p_{s,y}} := |\hat P_{s,y}|/|\hat P| - \sqrt{\ln(2/\delta)/\left(2|\hat P|\right)}$, $\overline{p_{s,y}} := |\hat P_{s,y}|/|\hat P| + \sqrt{\ln(2/\delta)/\left(2|\hat P|\right)}$ computed with \Cref{lemma:finite-sampling-conf-interval}, and 
       $\underline{C_{s,y}} = M-\overline{E_{s,y}}-\overline{V_{s,y}}/(M-\overline{E_{s,y}})$, $\overline{C_{s,y}} = M-\underline{E_{s,y}}-\underline{V_{s,y}}/(M-\underline{E_{s,y}})$,
       $\overline{\bar\gamma_{s,y}^2} = 1 - (1 + (M-\underline{E_{s,y}})^2/\overline{V_{s,y}})^{-\frac12}$.
       \Cref{eq:thm-general-shifting-1_fs} only takes $\tC$'s value when it is feasible, and each $\tC$ queried by \Cref{eq:thm-general-shifting-1_fs} is a \textbf{convex optimization}.
    \end{theorem}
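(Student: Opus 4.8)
The plan is to deduce \Cref{thm:shiftingtwo_fs} from the deterministic certificate of \Cref{thm:shiftingtwo} by conditioning on a high-probability event on which every population statistic lies inside its empirical confidence interval, and then showing that substituting each statistic by the appropriate interval endpoint only \emph{relaxes} the optimization, so its optimum can only \emph{increase}. First I would apply \Cref{lemma:finite-sampling-conf-interval} separately to the mean $E_{s,y}$, the variance $V_{s,y}$, and the density $p_{s,y}$ of each of the $SC$ subpopulations, producing $3SC$ two-sided intervals that each hold with probability at least $1-\delta$. A union bound then yields a good event $\mathcal{G}$ with $\Pr(\mathcal{G}) \ge 1-3SC\delta$, on which $\underline{E_{s,y}}\le E_{s,y}\le\overline{E_{s,y}}$, $\underline{V_{s,y}}\le V_{s,y}\le\overline{V_{s,y}}$, and $\underline{p_{s,y}}\le p_{s,y}\le\overline{p_{s,y}}$ hold simultaneously for all $s,y$; all subsequent steps are carried out on $\mathcal{G}$.

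Next I would propagate these intervals through the two nonlinear maps defining the derived constants. Since $C_{s,y}=M-E_{s,y}-V_{s,y}/(M-E_{s,y})$ is monotone (decreasing) in each of $E_{s,y}$ and $V_{s,y}$ on the admissible range $E_{s,y}<M$, its extreme values over the interval box are attained at the corners, giving $\underline{C_{s,y}}\le C_{s,y}\le\overline{C_{s,y}}$ with the corner assignments used in the statement. Likewise $\bar\gamma_{s,y}^2=1-(1+(M-E_{s,y})^2/V_{s,y})^{-1/2}$ is monotone in each argument, so propagating the $(E_{s,y},V_{s,y})$ box yields an upper bound $\overline{\bar\gamma_{s,y}^2}\ge\bar\gamma_{s,y}^2$, hence $(1-\overline{\bar\gamma_{s,y}^2})^2\le(1-\bar\gamma_{s,y}^2)^2$, i.e. a relaxed lower limit for the box constraint on $x_{s,y}$. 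This turns the fixed constants of \Cref{thm:shiftingtwo} into their interval counterparts.

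The core step is to show that, for every fixed region $\{[\underline{k_s},\overline{k_s}]\}_{s},\{[\underline{r_y},\overline{r_y}]\}_{y}$, the finite-sampling subproblem $\tC$ of \Cref{eq:general_opt_prob_general_shifting_C_fs} upper bounds the exact-statistics subproblem $\tC$ of \Cref{thm:shiftingtwo}. As both are maximizations it suffices to check that (i) the feasible set is enlarged and (ii) the objective is pointwise no smaller. For (i), promoting $p_{s,y}$ to a free variable over its confidence box (subject to $\sum_{s,y}p_{s,y}=1$) relaxes the previously fixed density, and replacing $(1-\bar\gamma_{s,y}^2)^2$ by the smaller $(1-\overline{\bar\gamma_{s,y}^2})^2$ relaxes the lower bound on $x_{s,y}$, so in particular the true optimizer of \Cref{thm:shiftingtwo} remains feasible. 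For (ii), I would use the $(\cdot)_+/(\cdot)_-$ split already present in the objective: each summand is monotone in the constant it multiplies, so inserting the value-increasing endpoint ($\overline{E_{s,y}},\overline{C_{s,y}}$ into the $(E_{s,y}+C_{s,y})_{\pm}$ terms, $\overline{V_{s,y}}$ into the $\sqrt{V_{s,y}}$ term, and $\underline{C_{s,y}}$ into the two subtracted terms whose coefficients are nonpositive) yields a term-by-term upper bound, exactly mirroring step \textbf{(D)} in the proof of \Cref{thm:shiftingtwo}. Chaining $\tC_{\mathrm{exact}}\le\tC_{\mathrm{fs}}$ with the deterministic guarantee of \Cref{thm:shiftingtwo}, namely that the region-wise maximum of $\tC_{\mathrm{exact}}$ upper bounds the optimum of \Cref{prob:certified-fairness-shifting-two}, shows that $\bar\ell$ is a valid certificate on $\mathcal{G}$, i.e. with probability at least $1-3SC\delta$.

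Finally I would re-derive convexity of each queried $\tC$: the objective is concave in $x_{s,y}$ by the same $f(x)=\sqrt{x(1-x)}$ argument as in \Cref{thm:shiftingtwo} and does not involve $p_{s,y}$, while the only structurally new constraint, $\sum_{s,y}\sqrt{p_{s,y}\overline{k_s}\overline{r_y}x_{s,y}}\ge1-\rho^2$, defines a convex set because $(p,x)\mapsto\sqrt{px}$ is jointly concave on the positive orthant (a geometric mean), so a concave objective is maximized over a convex set. I expect the main obstacle to be the monotone-substitution bookkeeping of steps (i)--(ii): one must select the correct interval endpoint for every nonlinearly derived constant---in particular $C_{s,y}$ and $\bar\gamma_{s,y}^2$, whose monotonicity in $(E_{s,y},V_{s,y})$ forces the separate bounds $\overline{E_{s,y}}$ and $\overline{C_{s,y}}$ to be read off opposite ends of the $E_{s,y}$ interval---and confirm that the sign pattern imposed by $(\cdot)_+$ and $(\cdot)_-$ renders each substitution value-increasing, since a single misplaced endpoint would invalidate the certificate.
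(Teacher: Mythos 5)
Your proposal follows essentially the same route as the paper's own proof: a union bound over the $3SC$ confidence intervals of \Cref{lemma:finite-sampling-conf-interval} yields the $1-3SC\delta$ event; validity is then reduced to \Cref{thm:shiftingtwo} by showing that each finite-sampling subproblem $\tC$ dominates the corresponding exact-statistics subproblem---(i) feasible-region inclusion after freeing $p_{s,y}$ over its confidence box and relaxing the lower limit on $x_{s,y}$, (ii) term-by-term objective domination via the $(\cdot)_+/(\cdot)_-$ split---and convexity is re-checked at the end. Two of your steps are in fact cleaner than the paper's: you condition on a good event on which the \emph{true} statistics lie in the intervals (the paper instead pins $p_{s,y}$ to its empirical value $|\hat P_{s,y}|/|\hat P|$ in its ``equivalent'' reformulation, which is not literally the problem of \Cref{thm:shiftingtwo}), and you obtain convexity of the coupled constraint from joint concavity of $(p,x)\mapsto\sqrt{px}$, whereas the paper only checks convexity in $x_{s,y}$ and in $p_{s,y}$ separately, which by itself does not imply joint convexity.

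One step deserves scrutiny, and it is a gap you share with the paper rather than one you introduce. You assert that monotone propagation of the $(E_{s,y},V_{s,y})$ box ``yields'' $\overline{\bar\gamma^2_{s,y}}\ge\bar\gamma^2_{s,y}$. But $\bar\gamma^2_{s,y}=1-(1+(M-E_{s,y})^2/V_{s,y})^{-1/2}$ is \emph{decreasing} in both $E_{s,y}$ and $V_{s,y}$, so correct propagation would place the upper endpoint at $\bigl(\underline{E_{s,y}},\underline{V_{s,y}}\bigr)$, i.e.\ $1-(1+(M-\underline{E_{s,y}})^2/\underline{V_{s,y}})^{-1/2}$, whereas the theorem (and the paper's proof, which asserts $(1-\bar\gamma^2_{s,y})^2\ge(1-\overline{\bar\gamma^2_{s,y}})^2$ without justification) defines $\overline{\bar\gamma^2_{s,y}}$ with $\overline{V_{s,y}}$ in the denominator. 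With that definition the claimed inequality can fail, e.g.\ when the true variance sits near $\underline{V_{s,y}}$, and then the feasible-set inclusion in your step (i) breaks---precisely the ``single misplaced endpoint'' failure mode you warn about. Replacing $\overline{V_{s,y}}$ by $\underline{V_{s,y}}$ in the definition of $\overline{\bar\gamma^2_{s,y}}$ repairs both your argument and the paper's.
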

    
    \subsection{Proofs of Fairness Certification with Finite Sampling}
        \label{adxsubsec:pf-statement-finite-sampling}
        
        \paragraph{High-Level Illustration.}
        We use Hoeffding’s inequality to bound the finite sampling error of statistics and add the high confidence box constraints to the optimization problems, which can still be proved to be convex.
        
    \begin{proof}[Proof of \Cref{thm:shiftingone_fs}]
        The proof of \Cref{thm:shiftingone_fs} is composed of two parts:
            (1)~the optimization problem provides a fairness certificate for \Cref{prob:certified-fairness-shifting-one};
            (2)~the optimization problem is convex.
            
            \begin{enumerate}[label={(\bfseries\arabic*)},leftmargin=*]
                \item We prove that \Cref{thm:shiftingone_fs} provides a fairness certificate for \Cref{prob:certified-fairness-shifting-one} in this part.
                Since \Cref{thm:shiftingone} provides a fairness certificate for \Cref{prob:certified-fairness-shifting-one}, we only need to prove: (a) the feasible region of the optimization problem in \Cref{thm:shiftingone} is a subset of the feasible region of the optimization problem in \Cref{thm:shiftingone_fs}, and (b) the optimization objective in \Cref{thm:shiftingone} can be upper bounded by that in \Cref{thm:shiftingone_fs}.
                
                To prove (a), we first equivalently transform the optimization problem in \Cref{thm:shiftingone} into the following optimization problem by adding $p_{sy}$ to the decision variables:
                \begin{subequations}
                \label{eq:general_opt_prob_label_shifting_thm_equal}
                    \begin{align}
                        \max_{k_s,r_y,p_{s,y}}  \quad & \sum_{s=1}^S \sum_{y=1}^{C} k_s r_y E_{s,y} \quad \label{eq:shiftingone_objective_fs} \\
                        \mathrm{s.t.} \quad & \sum_{s=1}^S k_s = 1, \quad \sum_{y=1}^C r_y = 1, \quad k_s \ge 0 \quad \forall s\in [S], \quad r_y \ge 0 \quad \forall y\in [C], \label{eq:shiftingone_kr_equal}
                        \\
                        &  1-\rho^2-\sum_{s=1}^S \sum_{y=1}^C \sqrt{p_{s,y}k_s r_y} \le 0, \\
                        & p_{s,y} = |\hat P_{s,y}|/|\hat P|, \quad \forall s\in [S], \quad \forall y\in [C] \label{eq:shiftingone_p_equal} \\
                        & \sum_{s=1}^S \sum_{y=1}^{C}p_{s,y}=1 \label{eq:shiftingone_ps_equal}
                    \end{align}
                \end{subequations}
            
        For decision variables $k_{s,y}$ and $r_{s,y}$, optimization \ref{eq:general_opt_prob_label_shifting_thm_fs} and \ref{eq:general_opt_prob_label_shifting_thm_equal} has the same constraints (\Cref{eq:shiftingone_kr_fs} and \Cref{eq:shiftingone_kr_equal}). 
        For decision variables $p_{s,y}$, the feasible region of $p_{s,y}$ in optimization \ref{eq:general_opt_prob_label_shifting_thm_fs} (decided by \Cref{eq:shiftingone_p_fs,eq:shiftingone_ps_fs}) is a subset of the feasible region of $p_{s,y}$ in optimization \ref{eq:general_opt_prob_label_shifting_thm_equal} (decided by \Cref{eq:shiftingone_p_equal,eq:shiftingone_ps_equal}), since $\underline{p_{s,y}} \le |\hat P_{s,y}|/|\hat P| \le \overline{p_{s,y}}$.
        Therefore, the feasible region with respect to $k_{s,y}$, $r_{s,y}$, and $p_{s,y}$ of the optimization problem in \Cref{thm:shiftingone} is a subset of that in \Cref{thm:shiftingone_fs}. 
        
        To prove (b), we only need to show that the objective in \Cref{eq:shiftingone_objective} can be upper bounded by the objective in \Cref{eq:shiftingone_objective_fs}.
        The statement $\sum_{s=1}^S \sum_{y=1}^{C} k_s r_y E_{s,y} \le \sum_{s=1}^S \sum_{y=1}^{C} k_s r_y \overline{E_{s,y}}$ consistently holds because $E_{s,y} \le \overline{E_{s,y}}$ and $k_s,r_y \ge 0$.
        
        Combining the proofs of (a) and (b), we prove that \Cref{thm:shiftingone_fs} provides a fairness certificate for \Cref{prob:certified-fairness-shifting-one}.
        
        \item 
        Inspecting that the objective and all the constraints in optimization problem in \Cref{eq:general_opt_prob_label_shifting_thm_fs} are linear with respect to $k_s$, $r_{y}$, $p_{s,y}$ but the one in \Cref{eq:shiftingone_dis_fs}. Therefore, we only need to prove that the following constraint is convex with respect to $k_s$, $r_{y}$, $p_{s,y}$:
        \begin{equation}
            1-\rho^2-\sum_{s=1}^S \sum_{y=1}^C \sqrt{p_{s,y}k_s r_y} \le 0 \label{eq:dist}
        \end{equation}
        We define a function $f$ with respect to vector $\mathbf{p}:= [p_{s,y}]_{s \in [S], y \in [C]}$: $f(\mathbf{p})=1-\rho^2-\sum_{s=1}^S \sum_{y=1}^C \sqrt{p_{s,y}k_s r_y}$.
        Then we can derive that:
        \begin{equation}
            \dfrac{\partial^2 f}{\partial \mathbf{p}^2} = \sum_{s=1}^S \sum_{y=1}^C \dfrac{\sqrt{k_s r_y}}{4} {p_{s,y}}^{-\frac{3}{2}} \ge 0
        \end{equation}
        Therefore, the function $f$ is convex with respect to $p_{s,y}$. Similarly, we can prove the convexity with respect to $k_s$ and $r_y$. Finally, we can conclude that the constraint in \Cref{eq:dist} is convex with respect to $k_s$, $r_{y}$, $p_{s,y}$ and the optimization problem defined in \Cref{thm:shiftingone_fs} is convex.
    \end{enumerate}
    Since we use the union bound to bound $E_{s,y}$ and $p_{s,y}$ for all $s \in [S], y \in [C]$ simultaneously, the confidence is $1-2SC\delta$.
    \end{proof}
    \begin{proof}[Proof of \Cref{thm:shiftingtwo_fs}]
        The proof of \Cref{thm:shiftingtwo_fs} includes two parts:
            (1)~the optimization problem provides a fairness certificate for \Cref{prob:certified-fairness-shifting-two};
            (2)~each $\tC$ queried by \Cref{eq:thm-general-shifting-1_fs} is a convex optimization.
            
            \begin{enumerate}[label={(\bfseries\arabic*)},leftmargin=*]
                \item Since \Cref{thm:shiftingtwo} provides a fairness certificate for \Cref{prob:certified-fairness-shifting-two}, we only need to prove: (a) the feasible region of the optimization problem in \Cref{thm:shiftingtwo} is a subset of that in \Cref{thm:shiftingtwo_fs}, and (b) the optimization objective in \Cref{thm:shiftingtwo} can be upper bounded by that in \Cref{thm:shiftingtwo_fs}.
                
                To prove (a), we first equivalently transform the optimization problem in \Cref{thm:shiftingtwo} into the following optimization problem by adding $p_{sy}$ to the decision variables:
                \begin{subequations}
                    \label{eq:general_opt_prob_general_shifting_C_equal}
                    \begin{align}
                        \hspace{-4em} & \tC\left(\{[\underline{k_s}, \overline{k_s}]\}_{s=1}^S, \{[\underline{r_y}, \overline{r_y}]\}_{y=1}^C\right)   = \max_{x_{s,y},p_{s,y}} \sum_{s=1}^S \sum_{y=1}^C
                        \left(\overline{k_s} \overline{r_y} \left( E_{s,y} + C_{s,y} \right)_+
                        + \underline{k_s} \underline{r_y} \left( E_{s,y} + C_{s,y} \right)_- \right.  \nonumber  \\
                        &  \hspace{0em}
                        \left. + 2\overline{k_s} \overline{r_y} \sqrt{x_{s,y}(1-x_{s,y})} \sqrt{V_{s,y}} - \underline{k_s} \underline{r_y} x_{s,y} (C_{s,y})_+ - \overline{k_s} \overline{r_y} x_{s,y} (C_{s,y})_- \right)
                        \label{general_opt_prob_general_shifting_obj_C_equal} \\
                      \mathrm{s.t.} ~~ & \sum_{s=1}^S \underline{k_s} \le 1, \quad \sum_{s=1}^S \overline{k_s} \ge 1,  \sum_{y=1}^C \underline{r_y} \le 1, \quad \sum_{y=1}^C \overline{r_y} \ge 1,
                        \label{general_opt_prob_general_shifting_con1_C_equal} \\
                        & \sum_{s=1}^S \sum_{y=1}^C \sqrt{p_{s,y}\overline{k_s}\overline{r_y}x_{s,y}} \ge 1 - \rho^2, \quad
                        \left(1-\bar\gamma_{s,y}^2 \right)^2 \le x_{s,y} \le 1,
                        \label{general_opt_prob_general_shifting_con3_C_equal}  \\
                        & p_{s,y} = |\hat P_{s,y}|/|\hat P|, \quad \forall s\in [S], \quad \forall y\in [C] \label{eq:shiftingtwo_p_equal} \\
                        & \sum_{s=1}^S \sum_{y=1}^{C}p_{s,y}=1 \label{eq:shiftingtwo_ps_equal}
                    \end{align}
            \end{subequations}
        
        For decision varibales $x_{s,y}$, since $\sqrt{p_{s,y}\overline{k_s}\overline{r_y}x_{s,y}} \le \sqrt{\overline{p_{s,y}}\overline{k_s}\overline{r_y}x_{s,y}}$ and $\left(1-\bar\gamma_{s,y}^2\right)^2 \ge \left(1-\overline{\bar\gamma_{s,y}^2}\right)^2$, the feasible region of $x_{s,y}$ in \Cref{eq:general_opt_prob_general_shifting_C_equal} is a subset of that in \Cref{eq:general_opt_prob_general_shifting_C_fs}. For decision variables $p_{s,y}$, since $\underline{p_{s,y}} \le |\hat P_{s,y}|/|\hat P| \le \overline{p_{s,y}}$, the feasible region of $p_{s,y}$ in \Cref{eq:general_opt_prob_general_shifting_C_equal} is also a subset of that in \Cref{eq:general_opt_prob_general_shifting_C_fs}. Therefore, the feasible region of the optimization problem in \Cref{thm:shiftingtwo} is a subset of that in \Cref{thm:shiftingtwo_fs}.
        
        To prove (b), we only need to show that the objective in \Cref{general_opt_prob_general_shifting_obj_C_equal} can be upper bounded by the objective in \Cref{general_opt_prob_general_shifting_obj_C_fs}. Since $\underline{k_s},\overline{k_s},\underline{r_y},\overline{r_y} \ge 0$ and $0 \le x_{s,y} \le 1$ hold, we can observe that $\forall s \in [S], \forall y \in [C]$,
        \begin{subequations}
        \begin{footnotesize}
        \begin{align*}
            &\overline{k_s} \overline{r_y} \left( E_{s,y} + C_{s,y} \right)_+ + \underline{k_s} \underline{r_y} \left( E_{s,y} + C_{s,y} \right)_-  + 2\overline{k_s} \overline{r_y} \sqrt{x_{s,y}(1-x_{s,y})} \sqrt{V_{s,y}} 
             - \underline{k_s} \underline{r_y} x_{s,y} (C_{s,y})_+ - \\
             & \overline{k_s} \overline{r_y} x_{s,y} (C_{s,y})_- 
            ~~ \le ~~  \overline{k_s} \overline{r_y} \left( \overline{E_{s,y}} + \overline{C_{s,y}} \right)_+ 
                + \underline{k_s} \underline{r_y} \left( \overline{E_{s,y}} + \overline{C_{s,y}} \right)_- +
                 2\overline{k_s} \overline{r_y} \sqrt{x_{s,y}(1-x_{s,y})} \sqrt{\overline{V_{s,y}}} \\
                 & - \underline{k_s} \underline{r_y} x_{s,y} (\underline{C_{s,y}})_+ - \overline{k_s} \overline{r_y} x_{s,y} (\underline{C_{s,y}})_- 
        \end{align*}
        \end{footnotesize}
        \end{subequations}
        
        Therefore, we prove that the optimization in \Cref{thm:shiftingtwo_fs} provides a fairness certificate for \Cref{prob:certified-fairness-shifting-two}.
        
        \item We will prove that each $\tC$ queried by \Cref{eq:thm-general-shifting-1_fs} is a convex optimization with respect to decision variables $x_{s,y}$ and $p_{s,y}$ in this part.
        In the proof of \Cref{thm:shiftingtwo}, we provide the proof of convexity with respect to $x_{s,y}$, so we only need to prove that the optimization problem is convex with respect to $p_{s,y}$.
        We can observe that the constraints of $p_{s,y}$ in \Cref{general_opt_prob_general_shifting_con4_C_fs} is linear, and thus we only need to prove that $\sum_{s=1}^S \sum_{y=1}^C \sqrt{p_{s,y}\overline{k_s}\overline{r_y}x_{s,y}} \ge 1 - \rho^2$ (the constraint in \Cref{general_opt_prob_general_shifting_con3_C_fs}) is convex with respect to $p_{s,y}$. 
        Here, we define a function $f$ with respect to vector $\vp:= [p_{s,y}]_{s \in [S], y \in [C]}$: $f(\vp)=1-\rho^2-\sum_{s=1}^S \sum_{y=1}^C \sqrt{p_{s,y}\overline{k_s}\overline{ r_y}}$.
        Then we can derive that:
        \begin{equation}
            \left(\dfrac{\partial^2 f}{\partial \vp^2}\right)_{sy,s'y'} = \sum_{s=1}^S \sum_{y=1}^C \dfrac{\sqrt{\overline{k_s} \overline{r_y}}}{4} {p_{s,y}}^{-\frac{3}{2}} \cdot \1[s=s',y=y'] \ge 0
        \end{equation}
        Thus, the function $f$ is convex and $f(\vp) \le 0$ defines a convex set with respect to $p_{s,y}$. Then, we prove that the constraint $\sum_{s=1}^S \sum_{y=1}^C \sqrt{p_{s,y}\overline{k_s}\overline{r_y}x_{s,y}} \ge 1 - \rho^2$ is convex with respect to $p_{s,y}$.
            \end{enumerate}
    Since we use the union bound to bound $E_{s,y}$, $V_{s,y}$ and $p_{s,y}$ for all $s \in [S], y \in [C]$ simultaneously, the confidence is $1-3SC\delta$.
    \end{proof}

\section{Experiments}

\subsection{Datasets}
\label{sec:datasets}
We validate our certified fairness on \textit{six} real-world datasets: Adult~\cite{asuncion2007uci}, Compas~\cite{angwin2016machine},  Health~\cite{healthdataset}, Lawschool~\cite{wightman1998lsac}, Crime~\cite{asuncion2007uci}, and German~\cite{asuncion2007uci}.
All the used datasets contain categorical data.

In Adult dataset, we have 14 attributes of a person as input and try to predict whether the income of the person is over $50$k \$/year. The sensitive attribute in Adult is selected as the sex.

In Compas dataset, given the attributes of a criminal defendent, the task is to predict whether he/she will re-offend in two years. The sensitive attribute in Compas is selected as the race.

In Health dataset, given the physician records and and insurance claims of the patients, we try to predict ten-year mortality by binarizing the Charlson Index, taking the median value as a cutoff. The sensitive attribute in Health is selected as the age.

In Lawschool dataset, we try to predict whether a student passes the exam according to the appication records of different law schools. The sensitive attribute in Lawschool is the race.

In Crime dataset, we try to predict whether a specific community is above or below the median number of violent crimes per population. The sensitive attribute in Crime is selected as the race.

In German dataset, each person is classified as good or bad credit risks according to the set of attributes. The sensitive attribute in German is selected as the sex.

Following \cite{ruoss2020learning}, we consider the scenario where sensitive attributes and labels take binary values, and we also follow their standard data processing steps: (1) normalize the numerical values of all attributes with the mean value 0 and variance 1, (2) use one-hot encodings to represent categorical attributes, (3) drop instances and attributes with missing values, and (4) split the datasets into training set, validation set, and test set.

\textbf{Training Details.} We directly train a ReLU network composed of two hidden layers on the training set of six datasets respectively following the setting in \cite{ruoss2020learning}. Concretely, we train the models for 100 epochs with a batch size of 256. We adopt the binary cross-entropy loss and use the Adam optimizer with weight decay 0.01 and dynamic learning rate scheduling (ReduceLROnPlateau in \cite{paszke2019pytorch}) based on the loss on the validation set starting at 0.01 with the patience of 5 epochs.

\subsection{Fair Base Rate Distribution Generation Protocol}
\label{sec:fair_gen}
To evaluate how well our certificates capture the fairness risk in practice, we compare our certification bound with the empirical loss evaluated on randomly generated $30,000$ fairness constrained distributions $\gQ$ shifted from $\gP$.
Now, we introduce the protocols to generate fairness distributions $\gQ$ for sensitive shifting and general shifting, respectively.
Note that the protocols are only valid when the sensitive attributes and labels take binary values.

Fair base rate distributions generation steps in the \textit{sensitive shifting} scenario:
\begin{enumerate}[label={(\bfseries\arabic*)},leftmargin=*]
    \item Sample the proportions of subpopulations of the generated distribution $q_{0,0},q_{0,1},q_{1,0},q_{1,1}$: uniformly sample two real values in the interval $[0,1]$, and do the assignment: $q_{0,0}:=kr$, $q_{0,1}:=k(1-r)$, $q_{1,0}:=(1-k)r$, $q_{1,1}:=(1-k)(1-r)$.
    \item Determine the sample size of every subpopulation: first determine the subpopulation which requires the largest sample size, use all the samples in that subpopulation, and then calculate the sample size in other subpopulations according to the proportions.
    \item Uniformly sample in each subpopulation based on the sample size.
    \item Calculate the Hellinger distance $H(\gP,\gQ) =\sqrt{1 - \sum_{s=0}^1 \sum_{y=0}^1 \sqrt{p_{s,y}} \sqrt{q_{s,y}}}$.
    Suppose that the support of $\gP$ and $\gQ$ is $\gX \times \gY$ and the densities of $\gP$ and $\gQ$ with respect to a suitable measure are $f_{\gP}$ and $f_{\gQ}$, respectively. 
    Since we consider sensitive shifting here, we have $f_{\gQ_{s,y}} = \lambda_{s,y} f_{\gP_{s,y}}, \quad s,y \in \{0,1\}$ where $\lambda_{s,y}$ is a scalor. 
    The derivation of the distance calculation formula is shown as follows,
    \begin{subequations}
    \begin{align}
        H^2(\gP,\gQ) &= 1 - \iint_{\gX \times \gY} \sqrt{f_{\gP}(x,y)} \sqrt{f_{\gQ}(x,y)} \d x\d y \\
        &= 1 - \sum_{s=0}^1 \sum_{y=0}^1 \iint_{f_{\gP_{s,y}}(x,y)>0} \sqrt{f_{\gP_{s,y}}(x,y)} \sqrt{\lambda_{s,y} f_{\gP_{s,y}}} \d x\d y \\
        &= 1 - \sum_{s=0}^1 \sum_{y=0}^1 \sqrt{\lambda_{s,y}} \iint_{f_{\gP_{s,y}}(x,y)>0} f_{\gP_{s,y}}(x,y) \d x\d y \\
        &= 1 - \sum_{s=0}^1 \sum_{y=0}^1 \sqrt{\lambda_{s,y}} p_{s,y} \\
        &= 1 - \sum_{s=0}^1 \sum_{y=0}^1 \sqrt{p_{s,y}} \sqrt{q_{s,y}}.
    \end{align}
    \end{subequations}
\end{enumerate}

Fair base rate distribution generation steps in the \textit{general shifting} scenario:
\begin{enumerate}[label={(\bfseries\arabic*)},leftmargin=*]
    \item Construct a data distribution $\gQ^\prime$ that is disjoint with the training data distribution $\gP$ by changing the distribution of non-sensitive values given the sensitive attributes and labels.
    \item Sample mixing parameters $\alpha_{s,y}$ and $\alpha^\prime_{s,y}$ in the interval $[0,1]$ satisfying $\frac{p_{00}\alpha_{00}+q_{00}\alpha^\prime_{00}}{p_{01}\alpha_{01}+q_{01}\alpha^\prime_{01}} = \frac{p_{10}\alpha_{10}+q_{10}\alpha^\prime_{10}}{p_{11}\alpha_{11}+q_{11}\alpha^\prime_{11}}$~(base rate parity) and $p_{00}\alpha_{00}+q_{00}\alpha^\prime_{00}+p_{01}\alpha_{01}+q_{01}\alpha^\prime_{01}+p_{10}\alpha_{10}+q_{10}\alpha^\prime_{10}+p_{11}\alpha_{11}+q_{11}\alpha^\prime_{11}=1$. 
    \item Determine the proportion of every subpopulation in distribution $\gQ$: $q_{s,y}:=\alpha_{s,y} p_{s,y} + \alpha^\prime_{s,y} q^\prime_{s,y}, \quad s,y \in \{0,1\}$.
    \item Determine the sample size of every subpopulation in $\gP$ and $\gQ^\prime$: first determine the subpopulation which requires the largest sample size, use all the samples in that subpopulation, and then calculate the sample size in other subpopulations according to the proportions.
    \item Calculate the Hellinger distance between distribution $\gP$ and $\gQ$:
    $H(\gP,\gQ) = \sqrt{1-\sum_{s=0}^{1}\sum_{y=0}^1\sqrt{\alpha_{s,y}}p_{s,y}}$. Suppose that the support of $\gP$ and $\gQ$ is $\gX \times \gY$ and the densities of $\gP$ and $\gQ$ with respect to a suitable measure are $f_{\gP}$ and $f_{\gQ}$, respectively. The derivation of the distance calculation formula is shown as follows,
    \begin{subequations}
    \begin{align}
        H^2(\gP,\gQ) &= 1 - \iint_{\gX \times \gY} \sqrt{f_{\gP}(x,y)} \sqrt{f_{\gQ}(x,y)} \d x\d y \\
        &= 1 - \iint_{\gX \times \gY} \sqrt{\sum_{s=0}^1\sum_{y=0}^1  f_{\gP_{s,y}}(x,y)} \sqrt{\sum_{s=0}^1\sum_{y=0}^1 \left( \alpha_{s,y} f_{\gP_{s,y}}(x,y) + \alpha^\prime_{s,y}f_{\gQ^\prime_{s,y}}(x,y)\right) } \d x\d y \\
        &= 1 - \sum_{s=0}^1\sum_{y=0}^1 \sqrt{\alpha_{s,y}} \iint_{f_{\gP_{s,y}}(x,y)>0} f_{\gP_{s,y}}(x,y) \sqrt{1 + \dfrac{\alpha_{s,y}^\prime f_{\gQ_{s,y}}(x,y)}{\alpha_{s,y}f_{\gP_{s,y}}(x,y)}} \d x\d y \\
        &= 1 - \sum_{s=0}^1\sum_{y=0}^1 \sqrt{\alpha_{s,y}} \iint_{f_{\gP_{s,y}}(x,y)>0} f_{\gP_{s,y}}(x,y)\d x\d y\\
        &= 1 - \sum_{s=0}^1\sum_{y=0}^1 \sqrt{\alpha_{s,y}} p_{s,y}.
    \end{align}
    \end{subequations}
\end{enumerate}

\subsection{Implementation Details of Our Fairness Certification}
\label{sec:imple_details}
We conduct vanilla training and then calculate our certified fairness according to our certification framework.
Concretely, in the training step, we train a ReLU network composed of 2 hidden layers of size 20 for 100 epochs with binary cross entropy loss (BCE loss) using an Adam optimizer. The initial learning rate is 0.05 for Crime and German datasets, while for other datasets, the initial learning rate is set $0.001$. We reduce the learning rate with a factor of 0.5 on the plateau measured by the loss on the validation set with patience of 5 epochs. In the fairness certification step, we set the region granularity to be $0.005$ for certification in the general shifting scenario. We use $90\%$ confidence interval when considering finite sampling error. The codes we used follow the MIT license. All experiments are conducted on a 1080 Ti GPU with 11,178 MB memory.

\subsection{Implementation Details of WRM}
\label{sec:detail_wrm}
The optimization problem of tackling distributional robustness is formulated as:
\begin{equation}
    \max_{\gQ} \, \E_{(X,Y)\sim\gQ} [\ell(h_\theta(X), Y)]
    \quad \mathrm{s.t.} \quad \dist(\gP,\gQ) \le \rho
\end{equation}
where $\dist(\cdot,\cdot)$ is a predetermined distribution distance metric. Note that the optimization is the same as our certified fairness optimization in \Cref{prob:certified-fairness-shifting-two} except for the fairness constraint.

WRM~\cite{sinha2017certifying} proposes to use the dual reformulation of the Wasserstein worst-case risk to provide the distributional robustness certificate, which is formulated in the following proposition.
\begin{proposition}[\cite{sinha2017certifying}, Proposition 1]
Let $\ell: \Theta \times \mathcal{Z} \rightarrow \mathbb{R}$ and $c: \Theta \times \mathcal{Z} \rightarrow \mathbb{R}_+$ be continuous and let $\phi_\gamma(\theta;z_0):=\sup_{z\in \mathcal{Z}}\{\ell(z;\theta) - \gamma c(z;\theta) \} $. Then, for any distribution $\gP$ and any $\rho>0$,
\begin{equation}
    \sup_{\gQ:W_c(\gP,\gQ)\le \rho} \E_\gQ [\ell(\theta;Z)] = \inf_{\gamma \ge 0}\{ \gamma \rho + \E_\gP[\phi_\gamma(\theta;Z)] \}
\end{equation}
where $W_c(\gP,\gQ):=\inf_{\pi\in\Pi(\gP,\gQ)}{\int_{\mathcal{Z}}c(z,z^\prime)d\pi(z,z^\prime)}$ is the 1-Wasserstein distance between $\gP$ and $\gQ$.
\end{proposition}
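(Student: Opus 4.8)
The plan is to establish the identity as a strong-duality result for the inner maximization over transport plans, following the standard Lagrangian route for Wasserstein distributionally robust optimization. First I would rewrite the left-hand side as an optimization over couplings. Since $\E_\gQ[\ell(\theta;Z)]$ depends on $\gQ$ only through the second marginal of a coupling, and since $W_c(\gP,\gQ)\le\rho$ holds iff there exists $\pi\in\Pi(\gP,\gQ)$ with $\int c\,d\pi\le\rho$, the left-hand side equals
$$\sup_{\pi \,:\, \pi_1 = \gP,\ \int c \, d\pi \le \rho} \ \int_{\mathcal{Z}\times\mathcal{Z}} \ell(\theta; z') \, d\pi(z,z'),$$
where $\pi_1$ denotes the first marginal. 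Here one optimizes directly over plans whose first marginal is pinned to $\gP$; the free second marginal recovers the worst-case $\gQ$.

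Second, I would dualize the transport-budget constraint $\int c\,d\pi\le\rho$ with a multiplier $\gamma\ge 0$, forming the Lagrangian
$$L(\pi,\gamma) = \gamma\rho + \int \big(\ell(\theta;z') - \gamma\,c(z,z')\big)\, d\pi(z,z') = \int \ell(\theta;z')\,d\pi + \gamma\Big(\rho - \int c\,d\pi\Big).$$
Weak duality is then immediate: for any feasible $\pi$ and any $\gamma\ge 0$, dropping the (nonnegative) budget slack gives $\int \ell\,d\pi \le L(\pi,\gamma) \le \gamma\rho + \E_\gP[\phi_\gamma(\theta;Z)]$, so taking the supremum over $\pi$ and the infimum over $\gamma$ yields $\mathrm{LHS}\le\mathrm{RHS}$. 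The one computation that matters here is the inner maximization for fixed $\gamma$: disintegrating $\pi$ along its fixed first marginal $\gP$, the conditional plan $\pi(\cdot\mid z)$ may be chosen pointwise, and a linear functional over probability measures is maximized at a Dirac mass, so $\sup_{\pi:\pi_1=\gP}\int(\ell-\gamma c)\,d\pi = \int \sup_{z'}(\ell(\theta;z')-\gamma c(z,z'))\,d\gP(z) = \E_\gP[\phi_\gamma(\theta;Z)]$; this interchange of supremum and integral is justified by a measurable-selection argument producing a (nearly) optimal transport map $z\mapsto z'(z)$.

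Third — and this is the crux — I would prove the reverse inequality, i.e. that exchanging $\sup_\pi$ and $\inf_{\gamma\ge 0}$ is lossless. I would invoke Sion's minimax theorem: $L(\pi,\gamma)$ is linear in $\pi$ and affine in $\gamma$ (hence quasiconcave/quasiconvex in the respective variable), the multiplier domain $[0,\infty)$ is convex, and the set of couplings with first marginal $\gP$ is convex and weak-$*$ compact. Upper semicontinuity of $\pi\mapsto L(\pi,\gamma)$ under the continuity hypotheses on $\ell$ and $c$ then permits the exchange, and the inner supremum is exactly the $\E_\gP[\phi_\gamma]$ term computed above, giving the claimed equality. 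Equivalently, one may construct an explicit optimal plan by pushing $\gP$ forward through a measurable selection of pointwise maximizers of $\phi_{\gamma^\star}$ for an optimal $\gamma^\star$ and verifying complementary slackness $\gamma^\star\big(\rho - \int c\,d\pi^\star\big)=0$.

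The main obstacle is the rigorous justification of the minimax interchange when $\ell$ and $c$ are merely continuous and possibly unbounded and $\mathcal{Z}$ is noncompact: one must recover the needed weak-$*$ compactness by imposing coercivity/growth of $c$ (so the budget constraint keeps transport mass tight) together with integrability of $\phi_\gamma$ under $\gP$, and one must supply a measurable selection of the pointwise optimizer $z'(z)$ so that the candidate transport plan is a genuine probability measure. Under the stated continuity assumptions these two points carry the proof, while the algebraic steps (weak duality and the inner-supremum evaluation) are routine.
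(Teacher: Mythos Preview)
The paper does not prove this proposition: it is quoted verbatim from \cite{sinha2017certifying} (their Proposition~1) as background for the WRM baseline, with no proof given here. So there is no ``paper's own proof'' to compare against. Your outline is the standard Lagrangian-duality route used in the original reference and is essentially correct; the caveats you flag about compactness/tightness when $\mathcal{Z}$ is noncompact and $\ell,c$ are unbounded are exactly the technical conditions that Sinha et~al.\ impose (via growth of $c$ and a Slater-type interior point) to make the minimax swap rigorous.
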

One requirement for the certificate to be tractable is that the surrogate function $\phi_\gamma$ is concave with respect to $Z$, which holds when $\gamma$ is larger than the Lipschitz constant $L$ of the gradient of $\ell$ with respect to $Z$. Since we use the ELU network with JSD loss, we can efficiently calculate $\gamma$ iteratively as shown in Appendix D of \cite{weber2022certifying}.

We select Gaussian mixture data for fair comparison.
The Gaussian mixture data can be formulated as $P(x|\theta)=\sum_{k=1}^K \alpha_k \phi(x|\theta_k)$ where $K$ is the number of Gaussian data, $\alpha_k$ is the proportion of the $k$-th Gaussian, and $\theta_k=(\mu_k,\sigma_k^2)$. In our evaluation, we use 2-dimension Gaussian and mixture data composed of 2 Gaussian ($K=2$) labeled $0$ and $1$, respectively. Concretely, we let $\mu_1=(-2,-0.5), \sigma_1=1.0, \alpha_1=0.5$ and $\mu_2=(2,0.5), \sigma_2=1.0, \alpha_2=0.5$. The second dimension of input vector is selected as the sensitive attribute $X_s$, and the base rate constraint becomes: $\Pr(Y=0|X_s<0)=\Pr(Y=1|X_s> 0)$.
Given the perturbation $\delta \in \mathbb{R}^2$ that induces $X \mapsto X + \delta$, the Wasserstein distance and Hellinger distance can be formulated as follows:
\begin{equation}
    W_2(\gP,\gQ)=\|\delta\|_2, \quad H(\gP,\gQ)=\sqrt{1-e^{-\|\delta\|^2_2/8}}.
\end{equation}

\vspace{+2em}
\subsection{More Results of Certified Fairness with Sensitive Shifting and General shifting}
\label{sec:app_results}
\vspace{+3em}
\begin{figure}[htbp]
\centering
\subfigure{
    \rotatebox{90}{\hspace{-4em}Classification Error}
    \begin{minipage}{0.43\linewidth}
    \centerline{\quad CRIME}
 	\vspace{1pt}
 	\centerline{\includegraphics[width=1.0\textwidth]{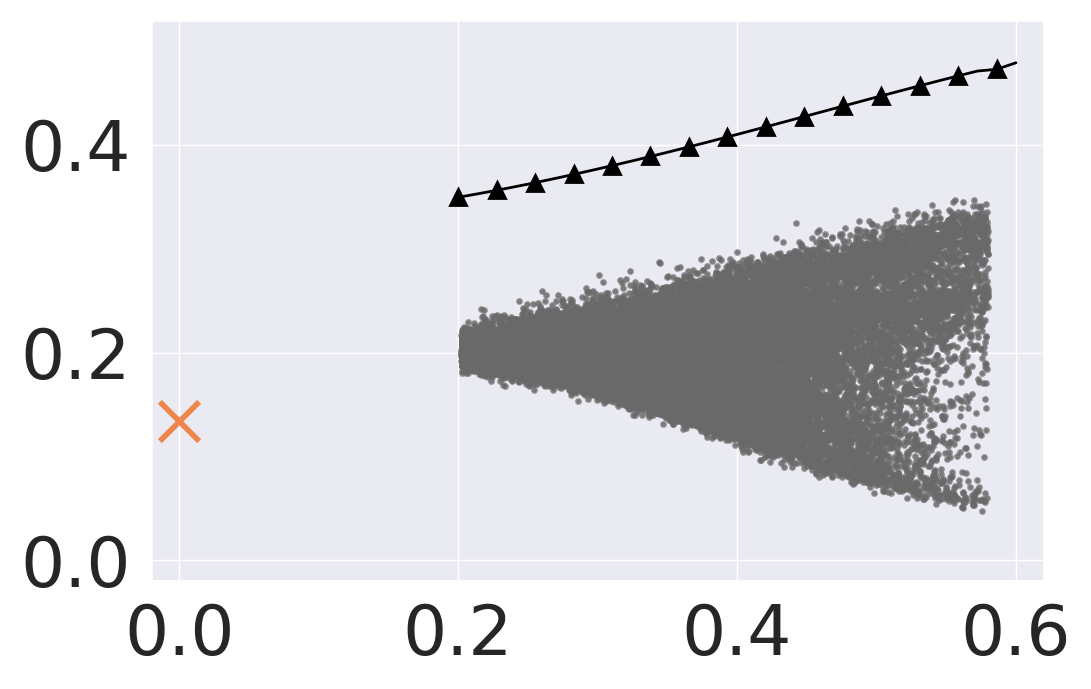}}
 	\vspace{-1.3em}
 \end{minipage}
 \begin{minipage}{0.43\linewidth}
    \centerline{\quad GERMAN}
 	\vspace{1pt}
 	\centerline{\includegraphics[width=1.0\textwidth]{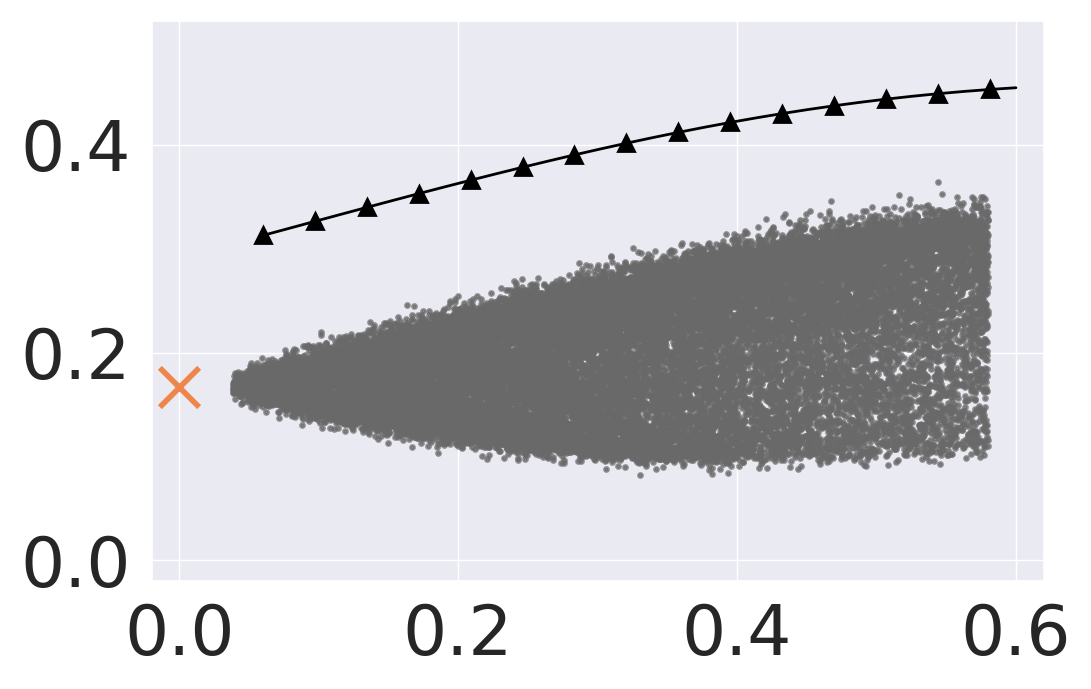}}
 	\vspace{-1.3em}
 \end{minipage}
}
\subfigure{
    \rotatebox{90}{\hspace{-0.9em}BCE Loss}
    \begin{minipage}{0.43\linewidth}
 	\centerline{\includegraphics[width=1.0\textwidth]{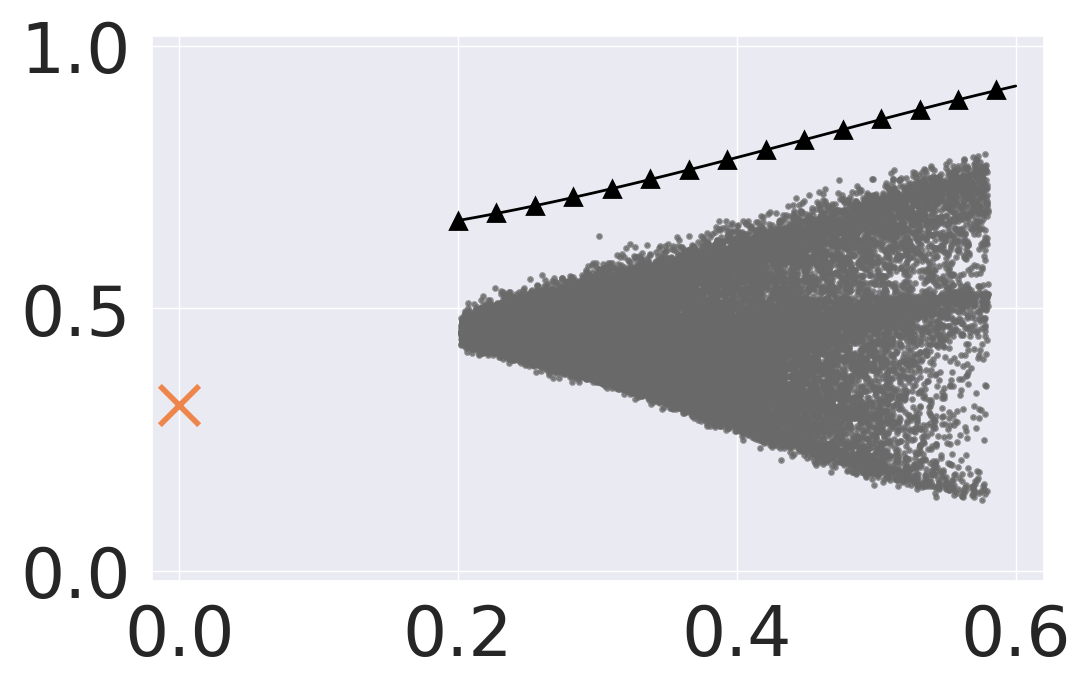}}
 	\centerline{\quad~~ Hellinger Distance}
 \end{minipage}
\begin{minipage}{0.43\linewidth}
 	\centerline{\includegraphics[width=1.0\textwidth]{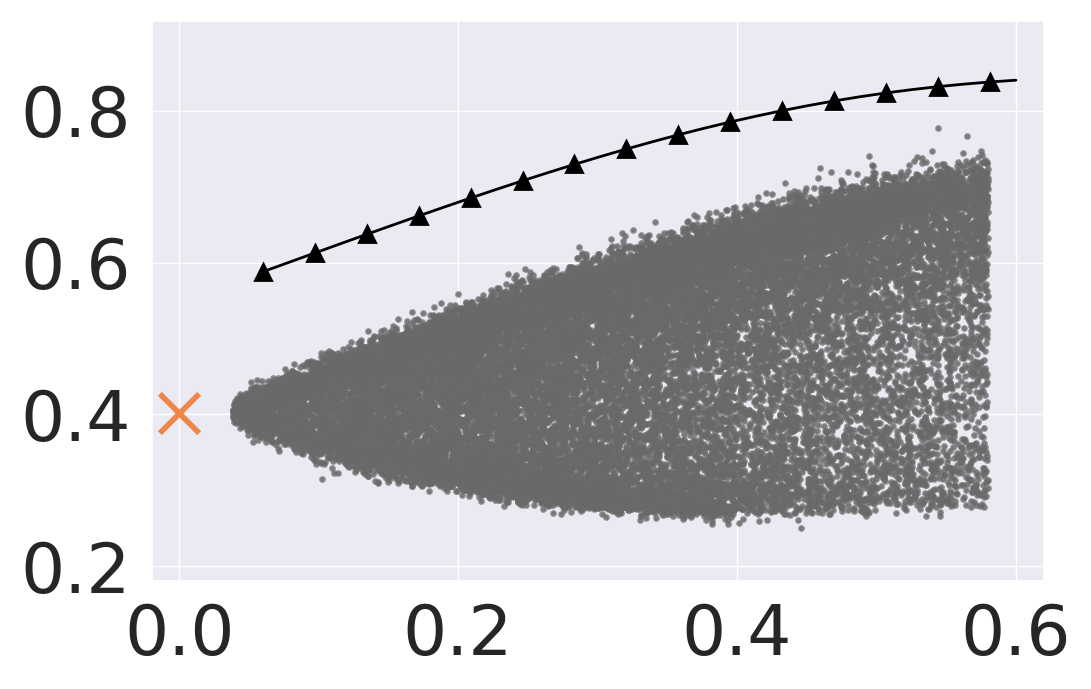}}
 	\centerline{\quad~~ Hellinger Distance}
 \end{minipage}
}
\subfigure{
\centerline{\includegraphics[width=0.7\textwidth]{Figures/legend_1.png}}}
\caption{Certified fairness with \shiftingone on Crime and German.}
\end{figure}

\begin{figure}[htbp]
\centering
\subfigure{
    \rotatebox{90}{\hspace{-4em}Classification Error}
    \begin{minipage}{0.43\linewidth}
    \centerline{\quad CRIME}
 	\vspace{1pt}
 	\centerline{\includegraphics[width=1.0\textwidth]{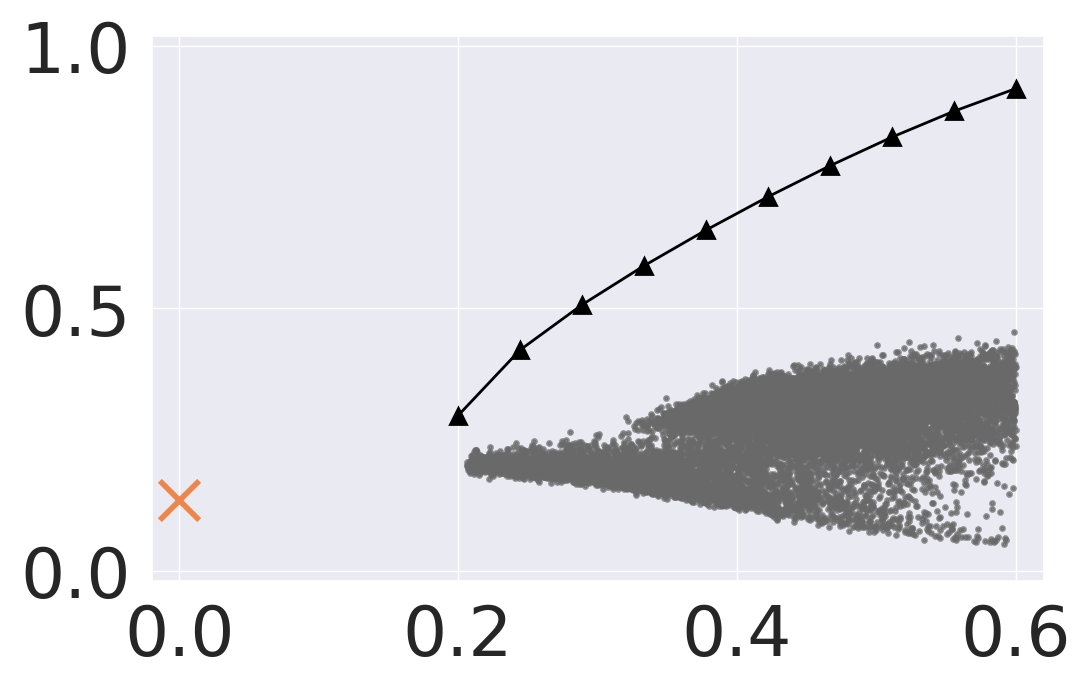}}
 \end{minipage}
 \begin{minipage}{0.43\linewidth}
    \centerline{\quad GERMAN}
 	\vspace{1pt}
 	\centerline{\includegraphics[width=1.0\textwidth]{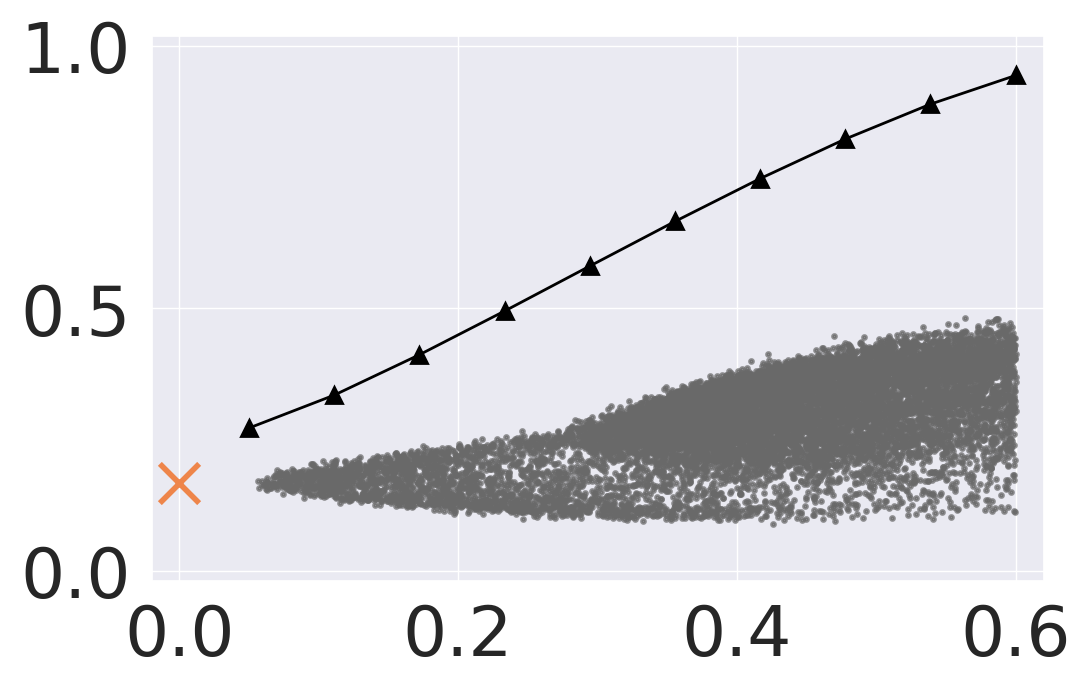}}
 \end{minipage}
}
\subfigure{
    \rotatebox{90}{\hspace{-0.9em}JSD Loss}
    \begin{minipage}{0.43\linewidth}
 	\centerline{\includegraphics[width=1.0\textwidth]{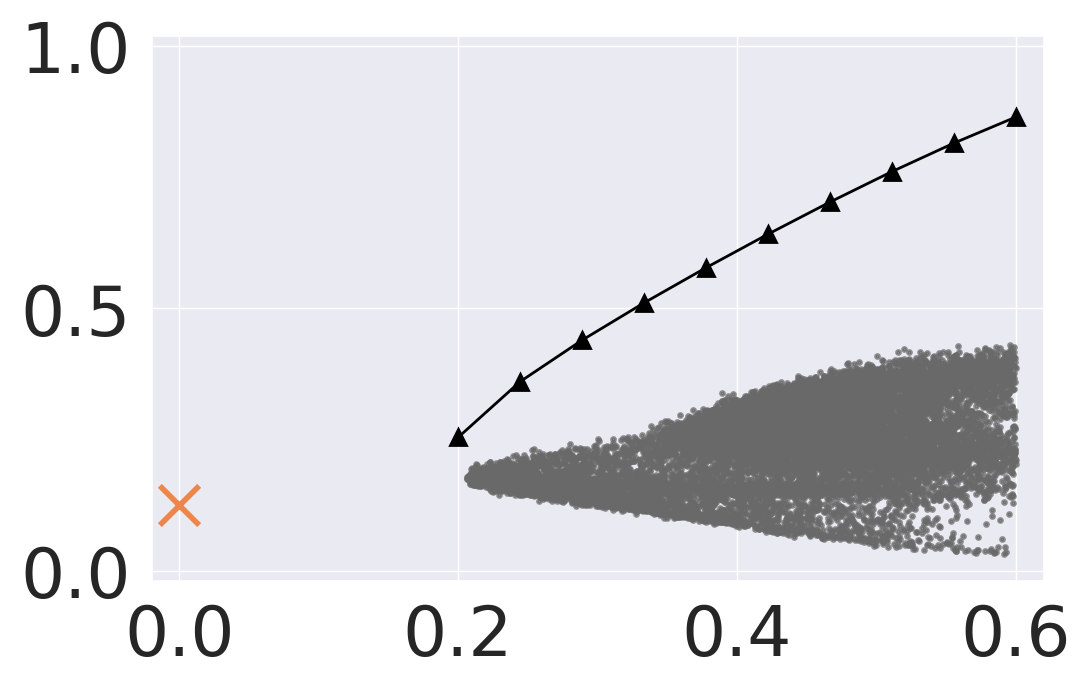}}
 	\centerline{\quad~~~ Hellinger Distance}
 \end{minipage}
\begin{minipage}{0.43\linewidth}
 	\centerline{\includegraphics[width=1.0\textwidth]{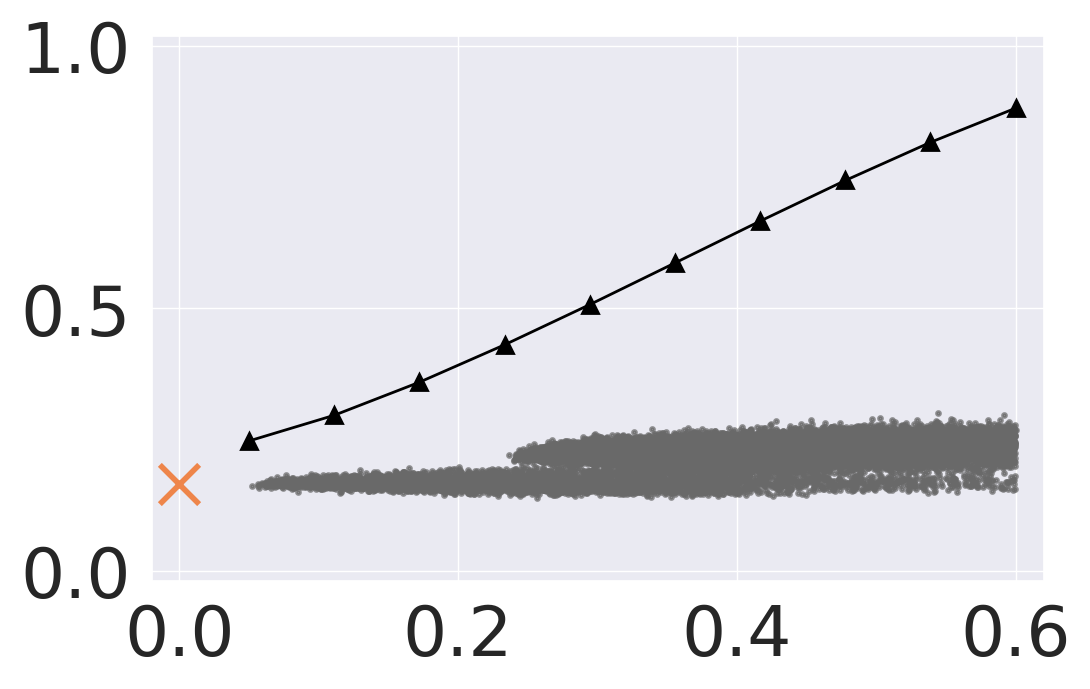}}
 	\centerline{\quad~~~ Hellinger Distance}
 \end{minipage}
}
\subfigure{
\centerline{\includegraphics[width=0.7\textwidth]{Figures/legend_1.png}}}
\caption{Certified fairness with \shiftingtwo on Crime and German.}
\end{figure}

\vspace{-3em}
\begin{figure}[htbp]
\centering
\subfigure{
    \rotatebox{90}{\hspace{-1em}JSD Loss}
    \begin{minipage}{0.35\linewidth}
    \centerline{\quad ADULT}
 	\vspace{1pt}
 	\centerline{\includegraphics[width=1.0\textwidth]{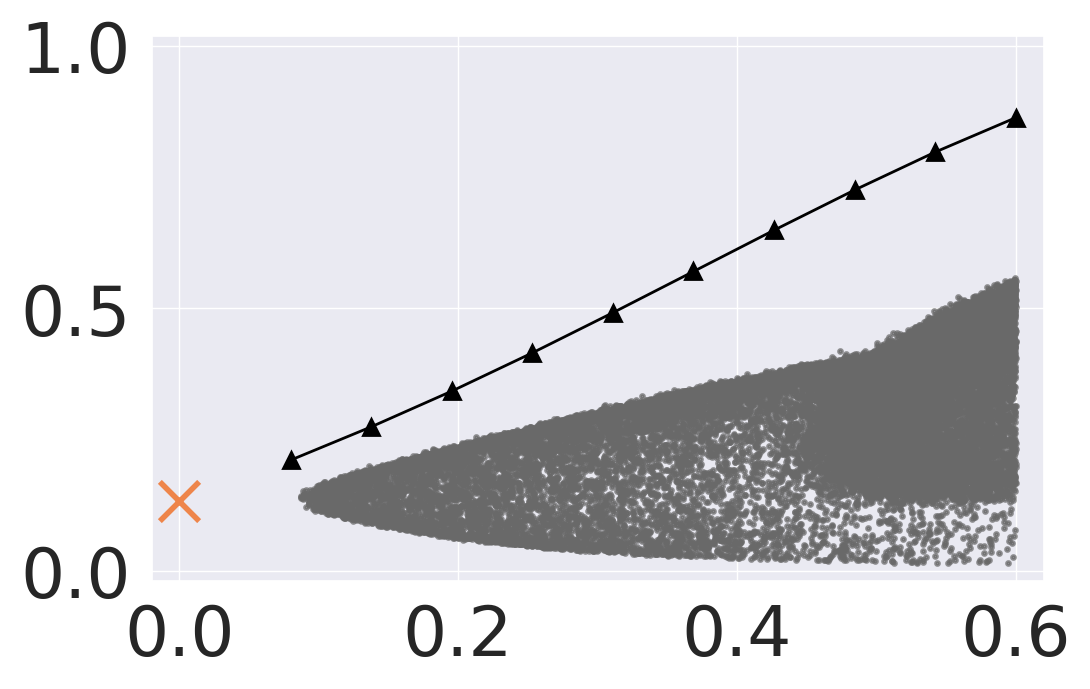}}
 	\vspace{-0.5em}
 	\centerline{\quad~~Hellinger Distance}
 	\vspace{-0.5em}
 \end{minipage}
 \begin{minipage}{0.35\linewidth}
    \centerline{\quad COMPAS}
 	\vspace{1pt}
 	\centerline{\includegraphics[width=1.0\textwidth]{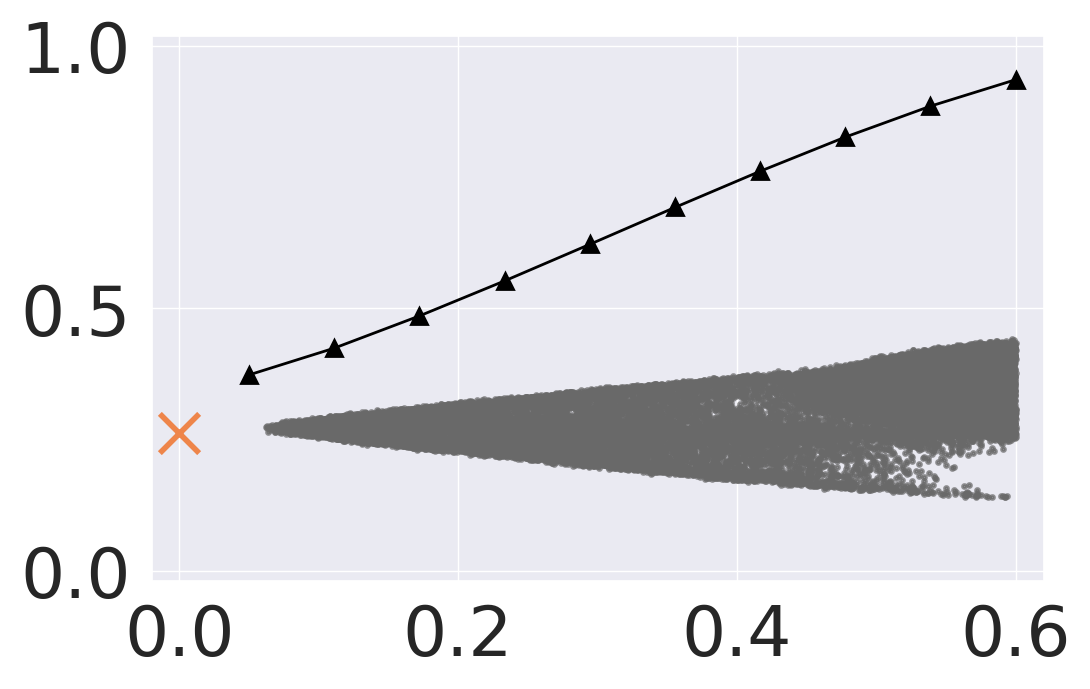}}
 	\vspace{-0.5em}
 	\centerline{\quad~~Hellinger Distance}
 	\vspace{-0.5em}
 \end{minipage}
}
\subfigure{
\rotatebox{90}{\hspace{-1em}JSD Loss}
\begin{minipage}{0.35\linewidth}
    \centerline{\quad HEALTH}
 	\vspace{1pt}
 	\centerline{\includegraphics[width=1.0\textwidth]{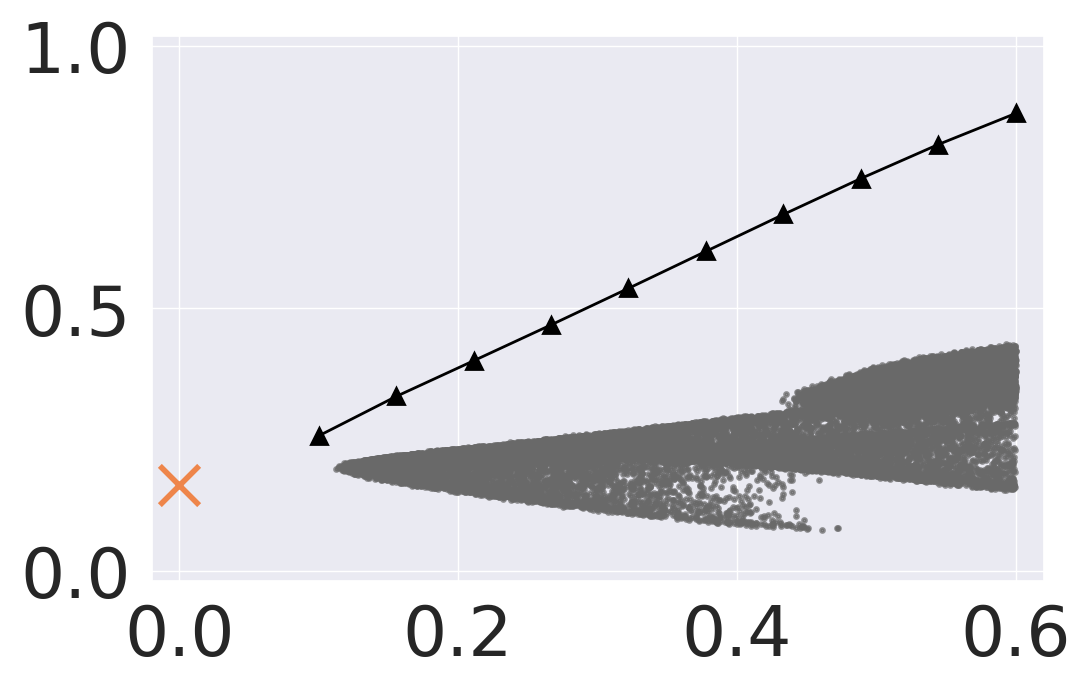}}
 	\vspace{-0.5em}
 	\centerline{\quad~~Hellinger Distance}
 	\vspace{-0.5em}
 \end{minipage}
 \begin{minipage}{0.35\linewidth}
    \centerline{\quad LAW SCHOOL}
 	\vspace{1pt}
 	\centerline{\includegraphics[width=1.0\textwidth]{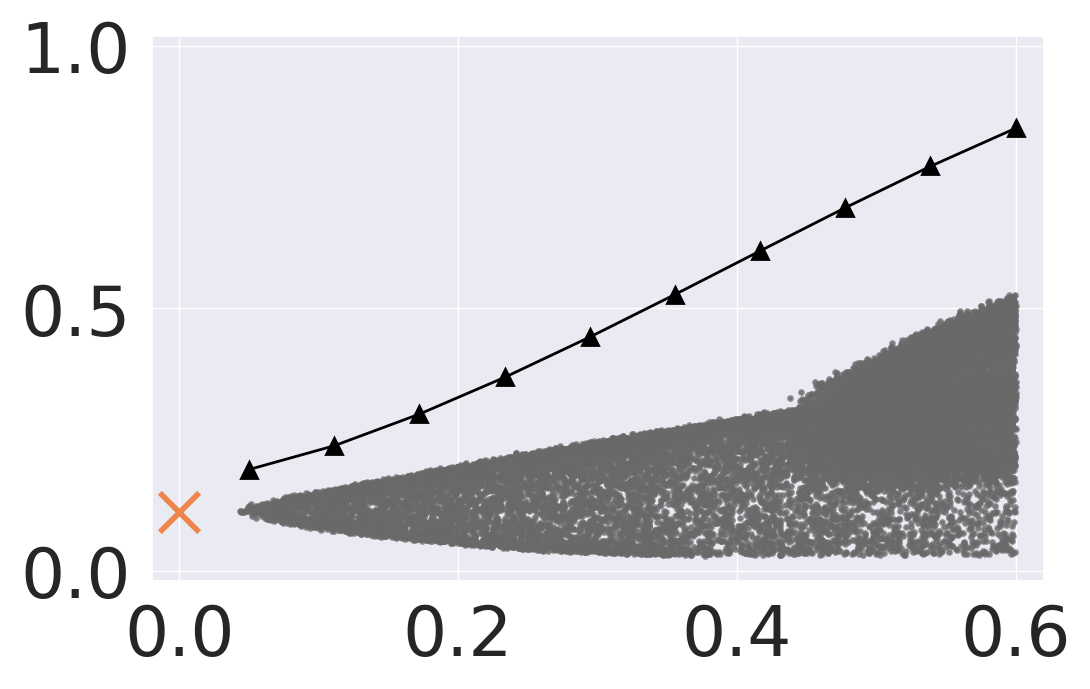}}
 	\vspace{-0.5em}
 	\centerline{\quad~~Hellinger Distance}
 	\vspace{-0.5em}
 \end{minipage}
}
\subfigure{
\centerline{\includegraphics[width=0.7\textwidth]{Figures/legend_1.png}}}
\vspace{-2em}
\caption{Certified fairness with \shiftingtwo using JSD loss on Adult, Compas, Health, and Lawschool.}
\end{figure}

\newpage
\subsection{More Results of Certified Fairness with Additional Non-Skewness Constraints}
\label{sec:app_results_nonskew}
\vspace{+3em}
\begin{figure}[htbp]
\vspace{-1.5em}
 \begin{minipage}{0.5\linewidth}
 	\vspace{1pt}
 	\centerline{\includegraphics[width=1.0\textwidth]{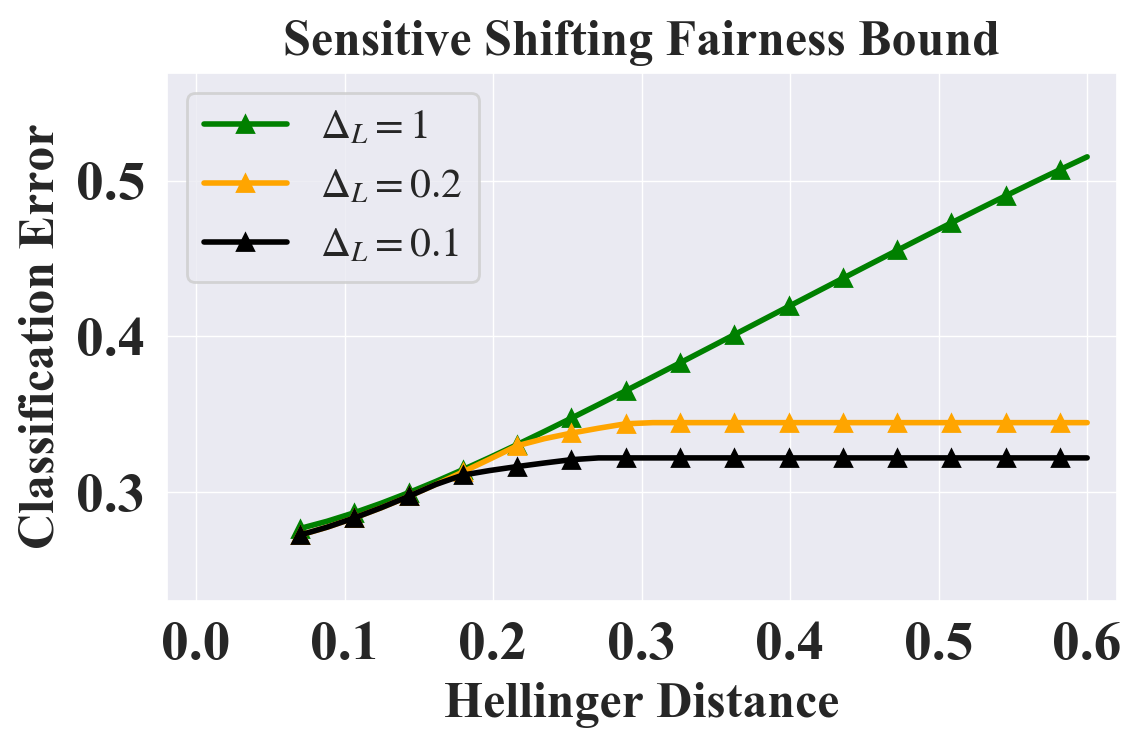}}
 \end{minipage}
 \begin{minipage}{0.5\linewidth}
 	\vspace{1pt}
 	\centerline{\includegraphics[width=1.0\textwidth]{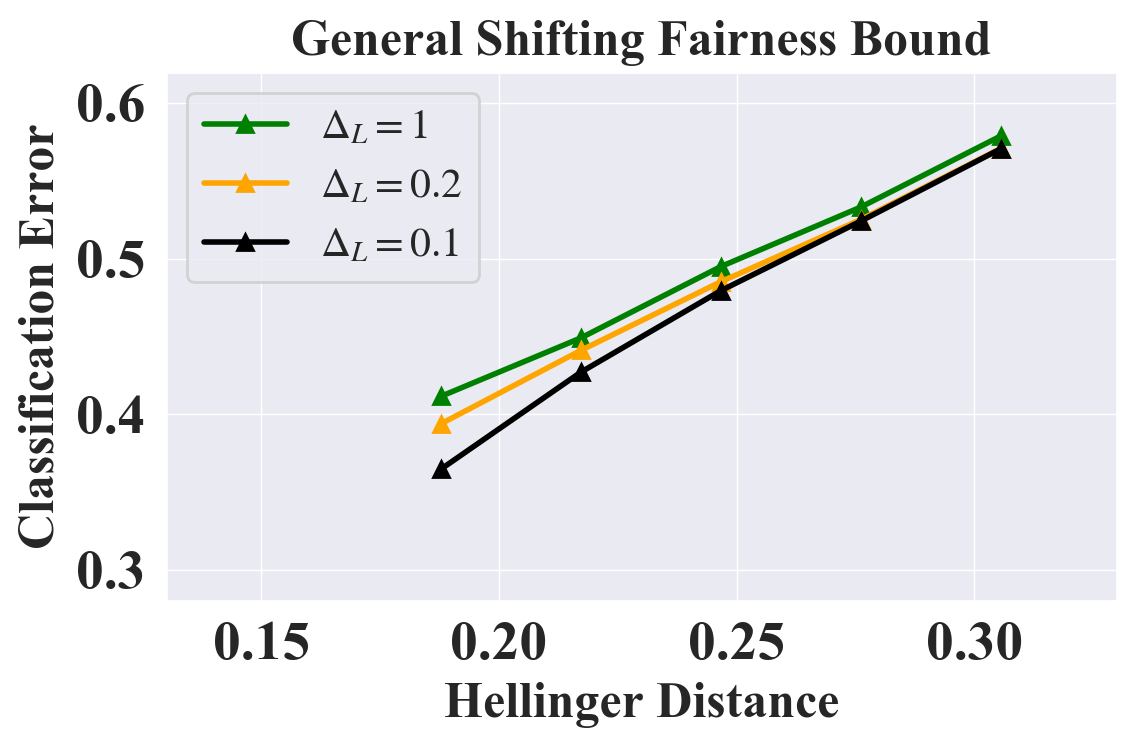}}
 \end{minipage}
    \caption{
    Certified fairness upper bounds with additional non-skewness constraints of labels on Adult. ($| \Pr_{(X,Y)\sim P}[Y=0]-\Pr_{(X,Y)\sim P}[Y=1] | \le \Delta_L$)
    }
\end{figure}

\begin{figure}[th]
 \begin{minipage}{0.5\linewidth}
 	\vspace{1pt}
 	\centerline{\includegraphics[width=1.0\textwidth]{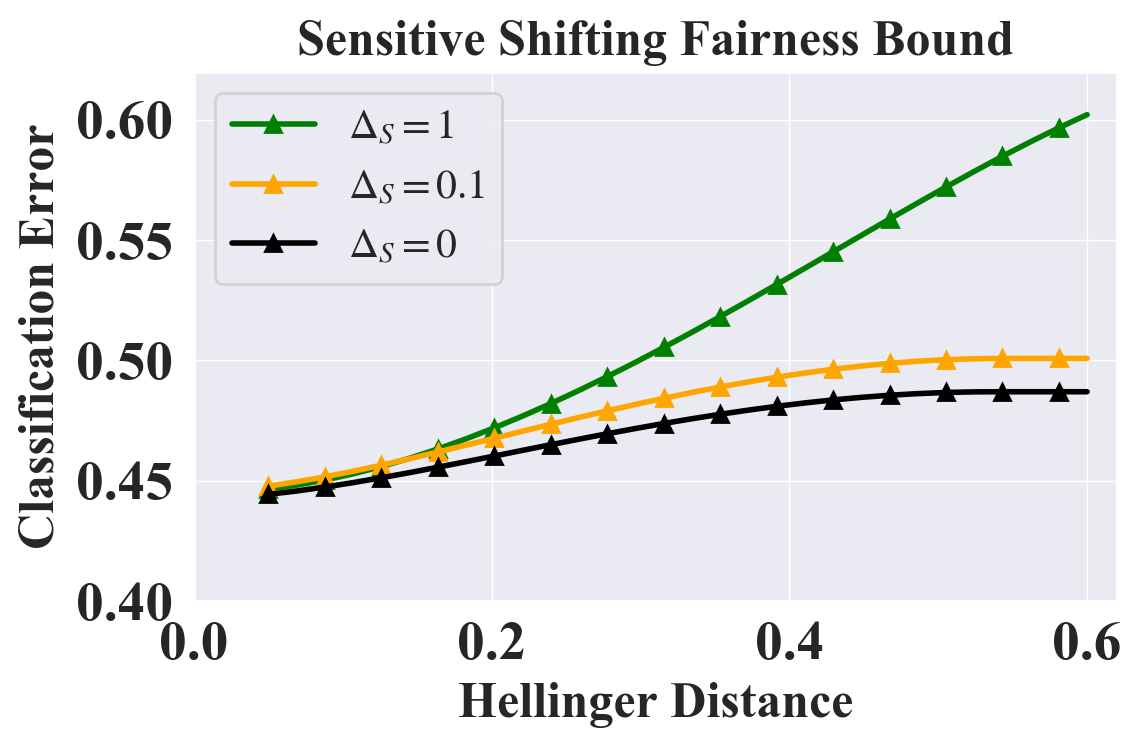}}
 \end{minipage}
 \begin{minipage}{0.5\linewidth}
 	\vspace{1pt}
 	\centerline{\includegraphics[width=1.0\textwidth]{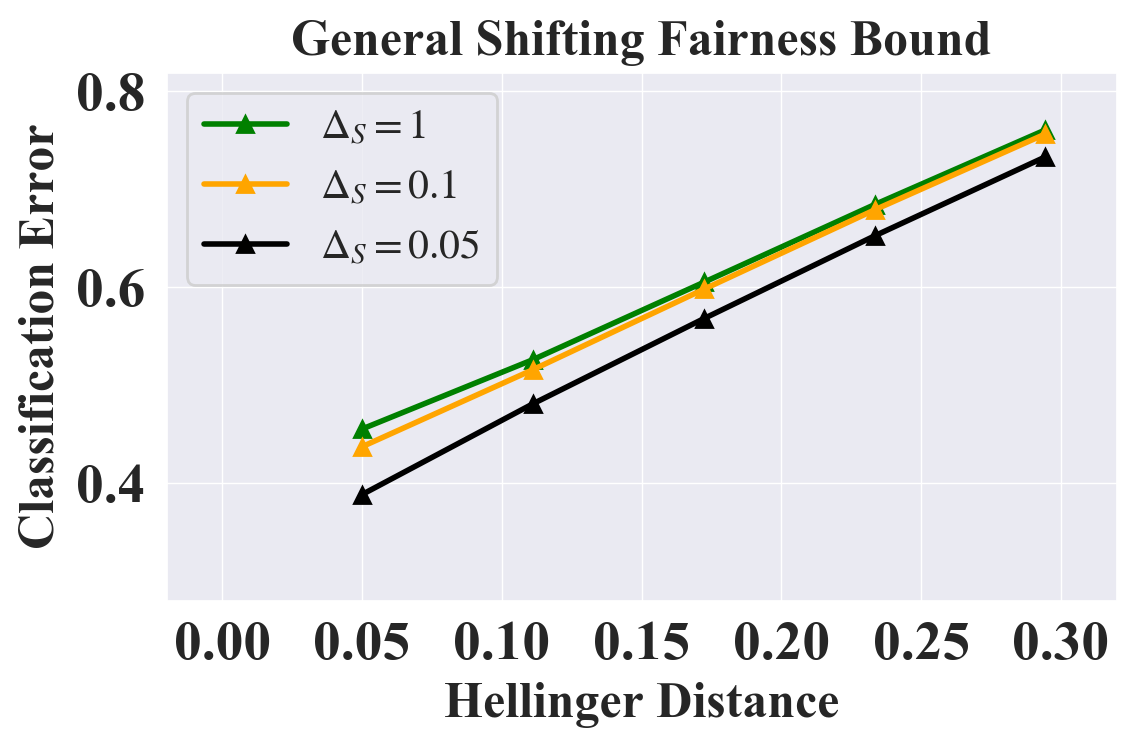}}
 \end{minipage}
    \caption{
    Certified fairness upper bounds with additional non-skewness constraints of sensitive attributes on Compas. ($| \Pr_{(X,Y)\sim P}[X_s=0]-\Pr_{(X,Y)\sim P}[X_s=1] | \le \Delta_s$)
    }
\end{figure}

\begin{figure}[htbp]
 \begin{minipage}{0.5\linewidth}
 	\vspace{1pt}
 	\centerline{\includegraphics[width=1.0\textwidth]{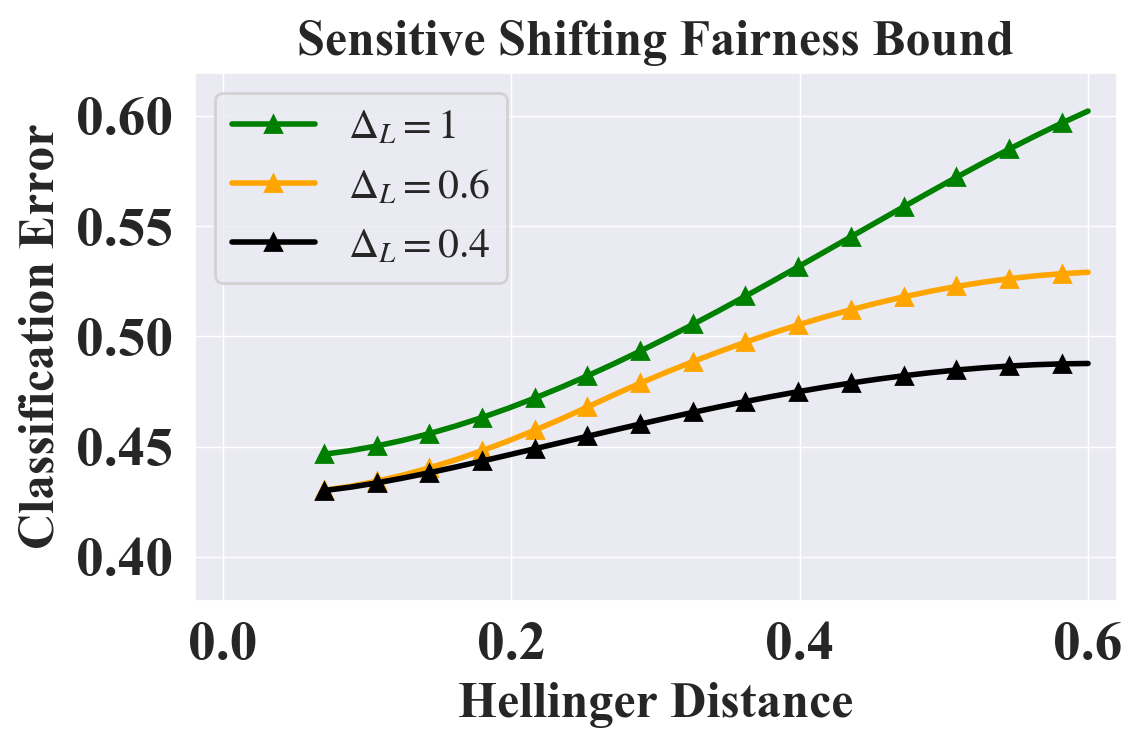}}
 \end{minipage}
 \begin{minipage}{0.5\linewidth}
 	\vspace{1pt}
 	\centerline{\includegraphics[width=1.0\textwidth]{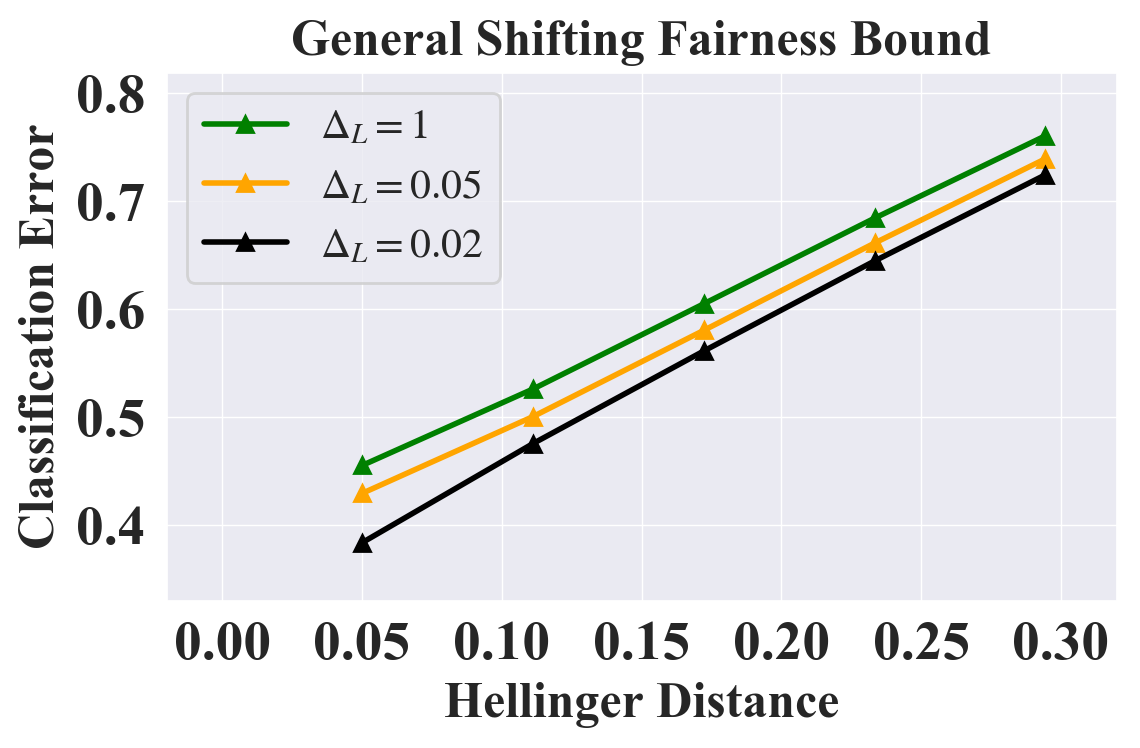}}
 \end{minipage}
    \caption{
    Certified fairness upper bounds with additional non-skewness constraints of labels on Compas. ($| \Pr_{(X,Y)\sim P}[Y=0]-\Pr_{(X,Y)\sim P}[Y=1] | \le \Delta_L$)
    }
\end{figure}

\begin{figure}[th]
 \begin{minipage}{0.5\linewidth}
 	\vspace{1pt}
 	\centerline{\includegraphics[width=1.0\textwidth]{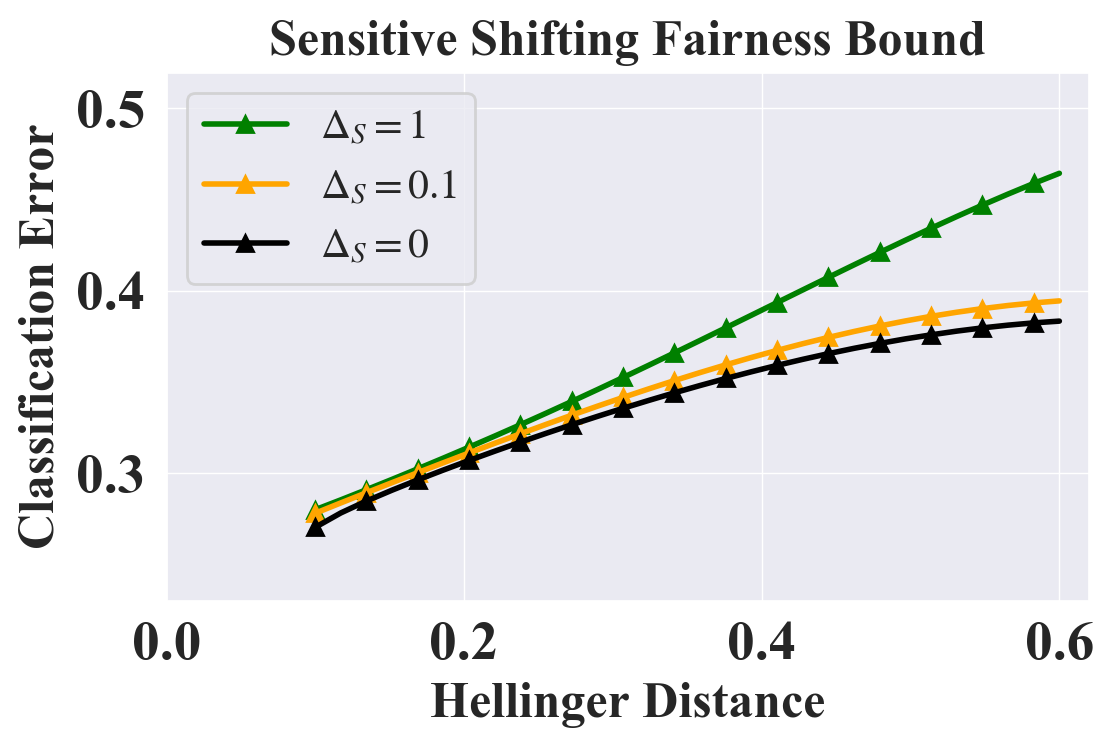}}
 \end{minipage}
 \begin{minipage}{0.5\linewidth}
 	\vspace{1pt}
 	\centerline{\includegraphics[width=1.0\textwidth]{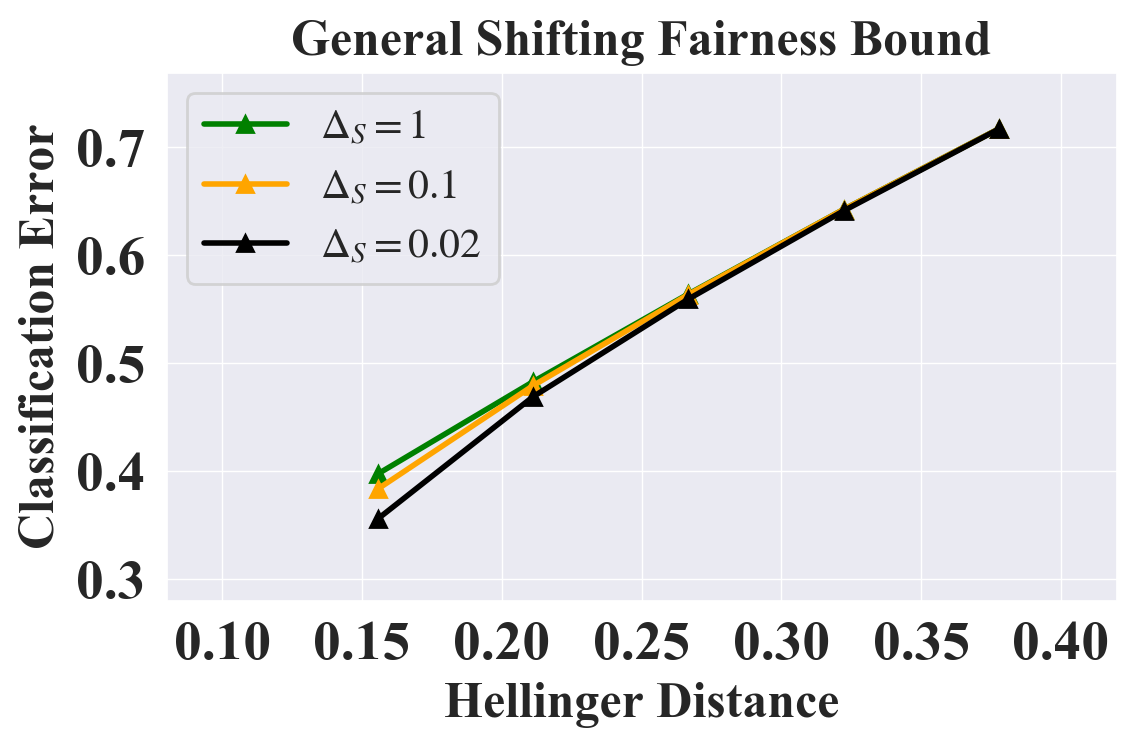}}
 \end{minipage}
    \caption{
    Certified fairness upper bounds with additional non-skewness constraints of sensitive attributes on Health. ($| \Pr_{(X,Y)\sim P}[X_s=0]-\Pr_{(X,Y)\sim P}[X_s=1] | \le \Delta_s$)
    }
\end{figure}

\begin{figure}[htbp]
 \begin{minipage}{0.5\linewidth}
 	\vspace{1pt}
 	\centerline{\includegraphics[width=1.0\textwidth]{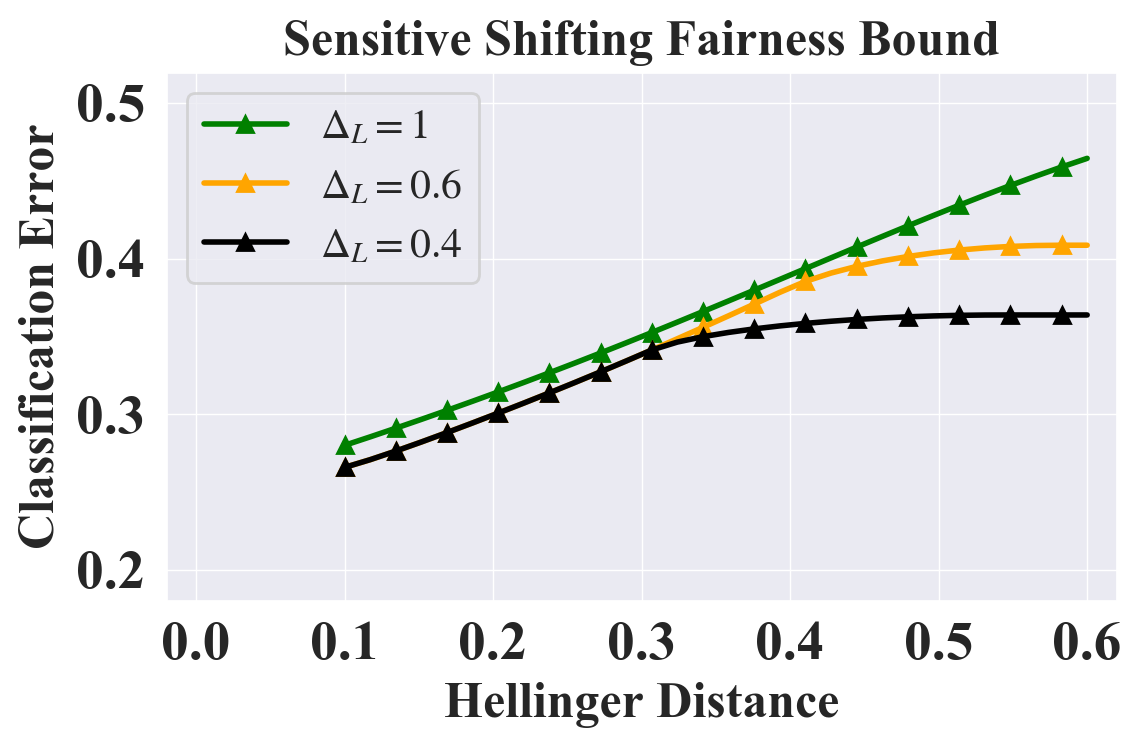}}
 \end{minipage}
 \begin{minipage}{0.5\linewidth}
 	\vspace{1pt}
 	\centerline{\includegraphics[width=1.0\textwidth]{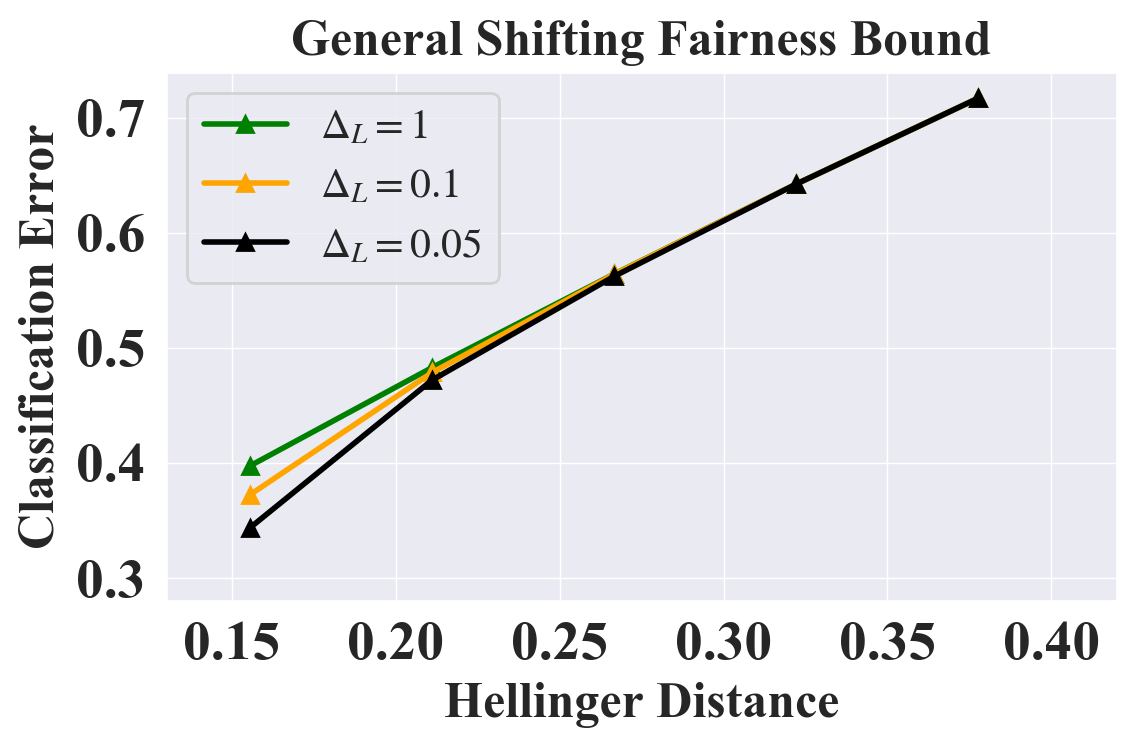}}
 \end{minipage}
    \caption{
    Certified fairness upper bounds with additional non-skewness constraints of labels on Health. ($| \Pr_{(X,Y)\sim P}[Y=0]-\Pr_{(X,Y)\sim P}[Y=1] | \le \Delta_L$)
    }
\end{figure}

\begin{figure}[th]
 \begin{minipage}{0.5\linewidth}
 	\vspace{1pt}
 	\centerline{\includegraphics[width=1.0\textwidth]{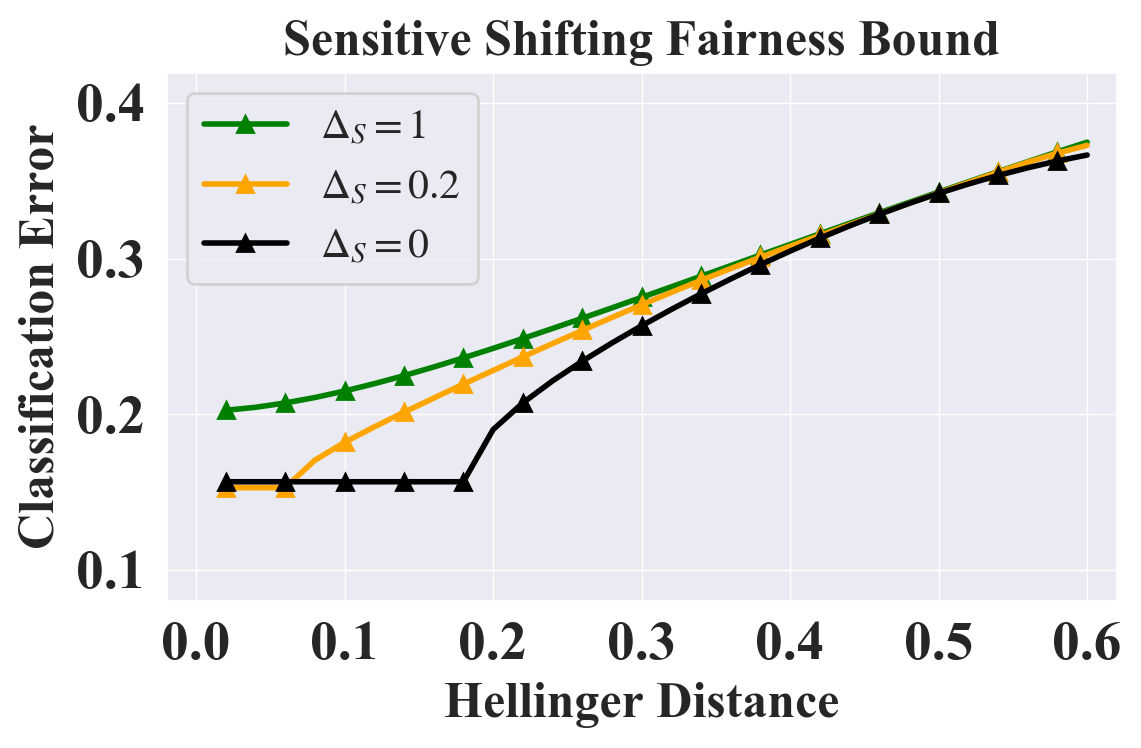}}
 \end{minipage}
 \begin{minipage}{0.5\linewidth}
 	\vspace{1pt}
 	\centerline{\includegraphics[width=1.0\textwidth]{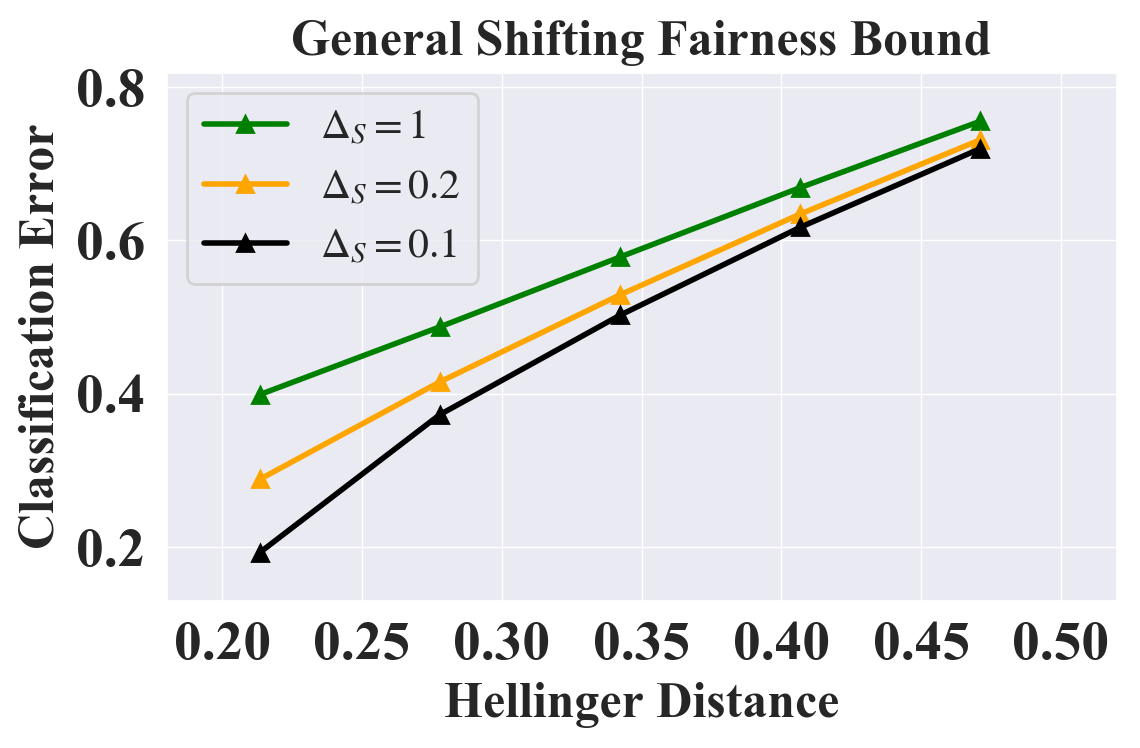}}
 \end{minipage}
    \caption{
    Certified fairness upper bounds with additional non-skewness constraints of sensitive attributes on Lawschool. ($| \Pr_{(X,Y)\sim P}[X_s=0]-\Pr_{(X,Y)\sim P}[X_s=1] | \le \Delta_s$)
    }
\end{figure}

\begin{figure}[tbp]
 \begin{minipage}{0.5\linewidth}
 	\vspace{1pt}
 	\centerline{\includegraphics[width=1.0\textwidth]{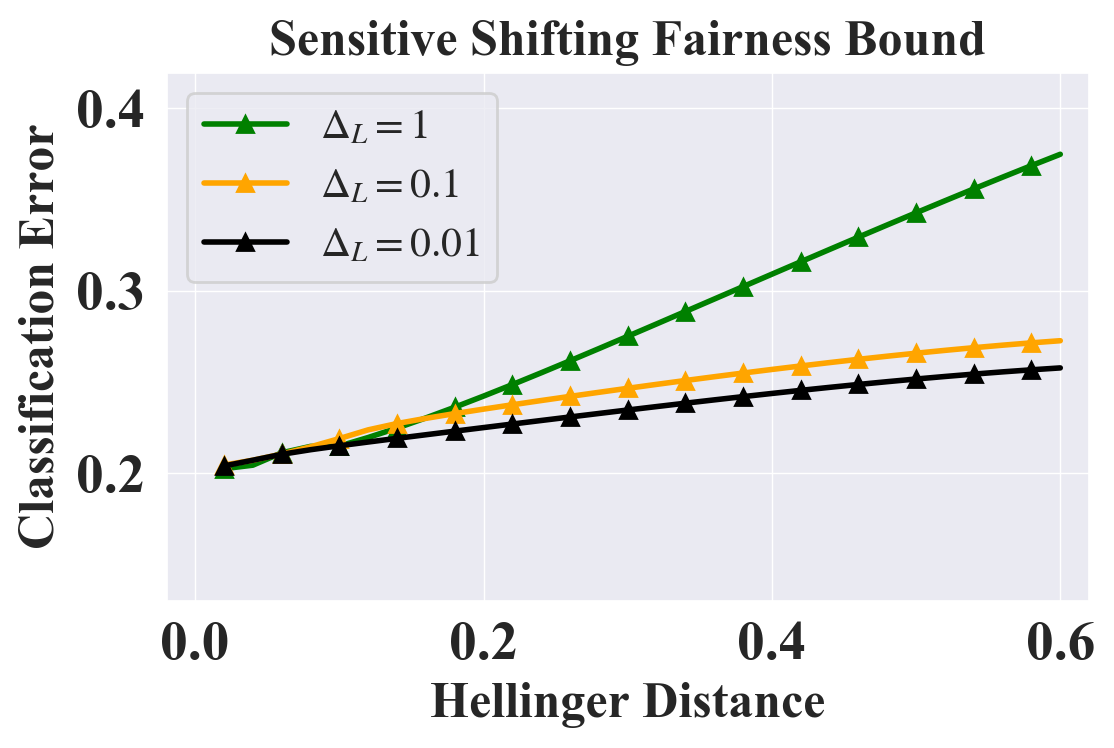}}
 \end{minipage}
 \begin{minipage}{0.5\linewidth}
 	\vspace{1pt}
 	\centerline{\includegraphics[width=1.0\textwidth]{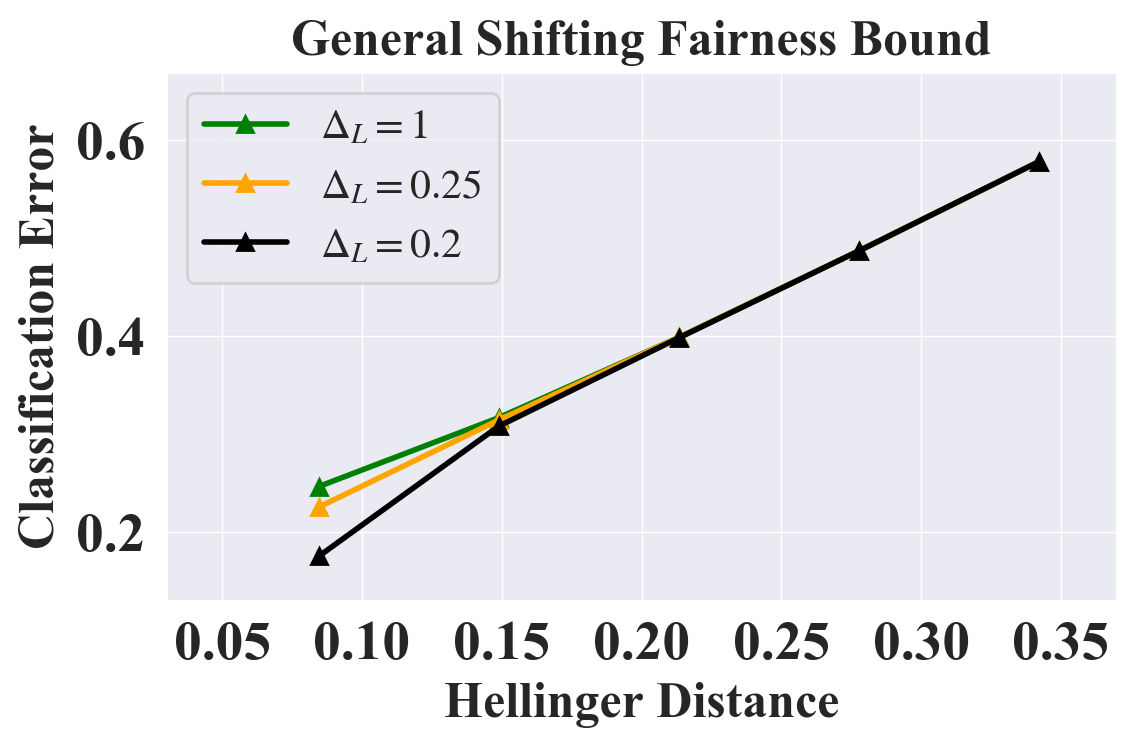}}
 \end{minipage}
    \caption{
    Certified fairness upper bounds with additional non-skewness constraints of labels on Lawschool. ($| \Pr_{(X,Y)\sim P}[Y=0]-\Pr_{(X,Y)\sim P}[Y=1] | \le \Delta_L$)
    }
\end{figure}
\clearpage


\subsection{Fair Classifier Achieves High Certified Fairness}
\label{sec:fair_model_exp}
We compare the fairness certificate of the vanilla model and the perfectly fair model on Adult dataset to demonstrate that our defined certified fairness in \Cref{prob:certified-fairness-shifting-two} and \Cref{prob:certified-fairness-shifting-one} can indicate the fairness in realistic scenarios.
In Adult dataset, we have 14 attributes of a person as input and try to predict whether the income of the person is over $50$k \$/year. The sensitive attribute in Adult is selected as the sex.
We consider four subpopulations in the scenario: 1) male with salary below $50$k, 2) male with salary above $50$k, 3) female with salary below $50$k, and 4) female with salary above $50$k.
We take the overall 0-1 error as the loss.
The vanilla model is real, and trained with standard training loss on the Adault dataset.
The perfectly fair model is hypothetical and simulated by enforcing the loss within each subpopulation to be the same as the vanilla trained classifier’s overall expected loss for fair comparison with the vanilla model.
From the experiment results in \Cref{tab:exp_comparison_sensitive_shifting} and \Cref{tab:exp_comparison_general_shifting}, we observe that our fairness certificates correlate with the actual fairness level of the model and verify that our certificates can be used as model’s fairness indicator: the certified fairness of perfectly fair models are consistently higher than those for the unfair model, for both the general shifting scenario and the sensitive shifting scenario. These findings demonstrate the practicality of our fairness certification.

\begin{table}[h]
    \centering
    \caption{Comparison of the fairness certificate of the vanilla model (an ``unfair'' model) and the perfectly fair model (a ``fair'' model) for sensitive shifting. 0-1 error is selected as the loss in the evaluation.}
    \label{tab:exp_comparison_sensitive_shifting}
    \begin{tabular}{c|ccccc}
    \toprule
    Hellinger Distance $\rho$ &  0.1 & 0.2 & 0.3 & 0.4 & 0.5 \\
    \midrule
    Vanilla Model Fairness Certificate  &  0.182 & 0.243 & 0.297 & 0.349 & 0.397 \\
    Fair Model Fairness Certificate & 0.148 & 0.148 & 0.148 & 0.148 & 0.148\\
    \bottomrule
    \end{tabular}
\end{table}

\begin{table}[h]
    \centering
    \caption{Comparison of the fairness certificate of the vanilla model (an ``unfair'' model) and the perfectly fair model (a ``fair'' model) for general shifting. 0-1 error is selected as the loss in the evaluation.}
    \label{tab:exp_comparison_general_shifting}
    \begin{tabular}{c|ccccc}
    \toprule
    Hellinger Distance $\rho$ &  0.1 & 0.2 & 0.3 & 0.4 & 0.5 \\
    \midrule
    Vanilla Model Fairness Certificate  & 0.274 & 0.414 & 0.559 & 0.701 & 0.828 \\
    Fair Model Fairness Certificate & 0.266 & 0.407 & 0.553 & 0.695 & 0.824 \\
    \bottomrule
    \end{tabular}
\end{table}

\end{document}